\newcommand{\commentt}[2]{#1}
\newcommand{\comment}[1]{}
\newcommand{\cov}{{\rm rCov}}
\newcommand{\var}{{\rm Var}}
\newcommand{\tr}{{\rm tr}}
\newcommand{\std}{{\rm Std}}
\newcommand{\titt}{Unbiased least squares regression via averaged stochastic gradient descent}
\newcommand{\citet}{\citeasnoun}
\title{\titt}
\author{Nabil Kahal\'e
\thanks{\emph{ESCP Business School, Paris, France; {e-mail: }{nkahale@escp.eu}.}}}
\begin{document}

\newtheorem{example}{Example}[section]
\newtheorem{theorem}{Theorem}[section]
\newtheorem{conjecture}{Conjecture}[section]
\newtheorem{lemma}{Lemma}[section]
\newtheorem{proposition}{Proposition}[section]
\newtheorem{remark}{Remark}[section]
\newtheorem{corollary}{Corollary}[section]
\newtheorem{definition}{Definition}[section]
\numberwithin{equation}{section}
%\numberwithin{proposition}{section}
\maketitle
\newcommand{\ABSTRACT}[1]{\begin{abstract}#1\end{abstract}}
\newcommand{\citep}{\cite}
}
{
\documentclass[opre,nonblindrev]{informs3} 
\usepackage[margin=1in]{geometry}
\usepackage{textcomp}
\usepackage{pgfplots}
%\usepackage{hyperref}
%\OneAndAHalfSpacedXI % current default line spacing
%\OneAndAHalfSpacedXII
\DoubleSpacedXII
%%\DoubleSpacedXI

% If hyperref is used, dvi-to-ps driver of choice must be declared as
%   an additional option to the \documentclass. For example
%\documentclass[dvips,opre]{informs3}      % if dvips is used
%\documentclass[dvipsone,opre]{informs3}   % if dvipsone is used, etc.

% Private macros here (check that there is no clash with the style)

% Natbib setup for author-year style
\usepackage[round]{natbib}
 \bibpunct[, ]{(}{)}{,}{a}{}{,}%
 \def\bibfont{\small}%
 \def\bibsep{\smallskipamount}%
 \def\bibhang{24pt}%
 \def\newblock{\ }%
 \def\BIBand{and}%

%% Setup of theorem styles. Outcomment only one.
%% Preferred default is the first option.
\TheoremsNumberedThrough     % Preferred (Theorem 1, Lemma 1, Theorem 2)
%\TheoremsNumberedByChapter  % (Theorem 1.1, Lema 1.1, Theorem 1.2)
\ECRepeatTheorems

%% Setup of the equation numbering system. Outcomment only one.
%% Preferred default is the first option.
\EquationsNumberedThrough    % Default: (1), (2), ...
%\EquationsNumberedBySection % (1.1), (1.2), ...

% In the reviewing and copyediting stage enter the manuscript number.
\MANUSCRIPTNO{} % When the article is logged in and DOI assigned to it,
                 %   this manuscript number is no longer necessary

%%%%%%%%%%%%%%%%
\renewcommand{\qed}{\halmos}

\begin{document}
\RUNAUTHOR{Kahal\'e}
\RUNTITLE{Multilevel  methods for discrete Asian options}
\TITLE{General multilevel Monte Carlo methods for pricing discretely monitored
Asian options}
\ARTICLEAUTHORS{%
\AUTHOR{Nabil Kahal\'e}

\AFF{ESCP Business School, Paris, France, \EMAIL{nkahale@escp.eu} \URL{}}
} % end of the block
\KEYWORDS{
discretely monitored Asian option, multilevel Monte Carlo method, option pricing, variance reduction}
\bibliographystyle{informs2014} 
}

\ABSTRACT{
\comment{We consider an on-line  least squares regression problem with  optimal solution \(\theta^{*}\) and Hessian matrix \(H\), and study a time-average stochastic gradient descent estimator \(\bar\theta_{k}\) of \(\theta^{*}\). Using a randomized multilevel Monte Carlo approach, we provide  an unbiased estimator of the negated bias of  \(\bar\theta_{k}\),  and   an unbiased estimator \(\hat f_{k}\) of  \(\theta^{*}\). The expected number of time-steps of  \(\hat f_{k}\)  is of order \(k\), with \(O(1/k)\) expected excess risk,  where the constant behind the \(O\) notation depends on parameters of the regression and is a poly-logarithmic function of the smallest  eigenvalue of \(H\). We provide both a biased and an unbiased estimator of the  expected excess risk   of  \(\bar\theta_{k}\) and of \(\hat f_{k}\) that do not require knowledge of either \(H\) or  \(\theta^{*}\). We describe an ``average-start'' version of our estimators with similar properties. Our numerical experiments confirm our theoretical findings.}

 We consider an on-line  least squares regression problem with  optimal solution \(\theta^{*}\) and Hessian matrix \(H\), and study a time-average stochastic gradient descent estimator of \(\theta^{*}\). For \(k\ge2\), we provide  an unbiased estimator of  \(\theta^{*}\) that is a modification of the time-average estimator,  runs with an expected number of time-steps of  order \(k\), with \(O(1/k)\) expected excess risk. The constant behind the \(O\) notation depends on parameters of the regression and is a poly-logarithmic function of the smallest  eigenvalue of \(H\). We provide both a biased and unbiased estimator of the  expected excess risk   of the time-average estimator and of its unbiased counterpart, without requiring knowledge of either \(H\) or  \(\theta^{*}\). We describe an ``average-start'' version of our estimators with similar properties. Our approach is based on randomized multilevel Monte Carlo. Our numerical experiments confirm our theoretical findings.    }
\commentt{
Keywords:  stochastic gradient descent, least squares regression, multilevel Monte Carlo method, unbiased estimators 
}{\maketitle
Subject classifications: Finance: asset pricing; Simulation: Efficiency ; Analysis of algorithms: Computational complexity}
\section{Introduction}
Monte Carlo methods are used in a variety of  fields such as financial engineering\comment{~\cite{glasserman2004Monte}}, queuing  networks,  health-care and machine learning\comment{~\cite{russo2014learning,aravkin2020trimmed}}. A drawback of  these methods is their high computational cost. The efficiency of  Monte Carlo methods can however  be improved via variance reduction techniques      \cite{glasserman2004Monte,asmussenGlynn2007,gower2020variance}. 
 In particular, the  multilevel  Monte Carlo 
 method (MLMC) in  \citet{Giles2008}     dramatically reduces the computational cost of estimating functionals of stochastic differential equations. \citet{mcleish2011}, \citet{glynn2014exact} and    \citet{GlynnRhee2015unbiased}    give related randomized multilevel Monte Carlo methods (RMLMC) that calculate  unbiased estimators of   functionals of Markov chains and of stochastic differential equations. RMLMC  and related techniques  have been used in a variety of  contexts such as the unbiased estimation for steady-state expectations  of a Markov chain \cite{glynn2014exact,kahale2023}, the design of Markov chain Monte Carlo methods~\cite{agapiou2018unbiased,middleton2018unbiased,wang2023unbiased}, statistical inference~\cite{vihola2021unbiasedInference,heng2024unbiased}, estimating the expected cumulative discounted cost~\cite{cui2020optimalDisc}, and pricing of Asian options under general models~\cite{kahale2020Asian}.  \citet{Vihola2018} describes  stratified  RMLMC methods that, under certain conditions, are asymptotically as
efficient as Multilevel  Monte Carlo.
Unbiased estimators have two main advantages.
First, drawing independent copies of an unbiased estimator allows the construction of normal confidence intervals \cite[Section III.1]{asmussenGlynn2007}.  Second, this construction can be parallelized efficiently.

This paper applies the RMLMC approach to least squares regressions in an on-line setting inspired from  
 \citet{bachMoulines2013non}. Least squares regressions are a fundamental tool in statistics and machine learning, and their analysis leads to a better understanding of more sophisticated models such as logistic regressions and neural networks. Our approach is based on the  stochastic gradient descent (SGD) algorithm.  SGD and its variants are commonly used in large-scale machine learning optimization \cite{bottou2018optimization}.  It is assumed that \((x,y)\) is a square-integrable random vector in \(\mathbb{R}^{ d}\times\mathbb{R}\), where  \(\mathbb{R}^{ d}\) denotes the set of \(d\)-dimensional column vectors, and that \(((x_{t},y_{t}), t\geq0)\) is a sequence of independent copies of  \((x,y)\).  For a \(d\)-dimensional column vector \(\theta\), set \(L(\theta):=\frac{1}{2}E((y-x^{T}\theta)^{2})\). It is well-known that the function \(L\) is convex and attains its minimum at a  \(d\)-dimensional column vector  \(\theta^{*}\) that is  solution to the equation
\begin{equation}\label{eq:NormalEq}
H\theta^{*}=E(y{x}),
\end{equation}where \(H:=E({x}x^{T})\) is the Hessian matrix.

A standard SGD algorithm for minimizing \(L\) uses the random sequence of \(d\)-dimensional column vectors \((\theta_{t},t\geq0)\) satisfying the recursion \begin{equation}\label{eq:BasicSGD}
\theta _{t+1}=\theta _{t}-\gamma (x_{t}^T\theta_{t}-y_{t})x_{t},
\end{equation} where  \(\theta_{0}\) is a deterministic  \(d\)-dimensional column vector and  \(\gamma\)  is a constant step-size.    An advantage of \eqref{eq:BasicSGD} is that it can be performed with \(O(d)\) arithmetic operations. For \(k\ge1\), define the time-average (or tail-average) estimator  
\begin{equation*}
\bar \theta _{k}:=\frac{1}{k-s(k)}\sum ^{k-1}_{t=s(k)}\theta_{t},
\end{equation*}
where  \(s(k)\in[0,k-1]\) is a burn-in period that may depend on \(k\). Under certain assumptions, 
 \citet{bachMoulines2013non} show an upper-bound of  \(O(1/k)\) on the expected excess risk \(E(L(\bar \theta _{k}))-L(\theta^{*})\). It is noteworthy that the constant behind their \(O\)-notation does not depend on the smallest eigenvalue \(\lambda_{\min}\) of \(H\), which can be very small. We are not aware of a better non-asymptotic bound on the expected excess risk of \(\bar \theta _{k}\)   that does not depend on  \(\lambda_{\min}\).  A similar bound is shown in \citet[Corollary 4]{BachDieuleveut2016}.    
 \citet{jain2018parallelizing} study mini-batching and tail-averaging SGD algorithms for least squares
regressions in an on-line strongly-convex setting, and provide a sharp upper bound on the expected excess risk of their estimators.
 \citet{BachLeastSquaresJMLR2017} analyse an averaged accelerated SGD  for on-line least squares regressions, using a stochastic gradient recursion that depends explicitly on \(H\).
They provide a bound of \(O(1/k^{2})\) on forgetting the initial conditions, and a bound of \(O(d/k)\)  in terms of dependence on the noise, after \(k\) time-steps.
  \citet{jain2018accelerating} analyse another averaged accelerated SGD algorithm for on-line least squares regressions. They show that the initial error decays  at a geometric rate that improves upon that of averaged SGD.      
\citet{sid2021stochastic} design a MLMC low-bias estimator of the minimizer of Lipschitz strongly-convex functions based on stochastic gradients.
 \citet{han2024online} combine  SGD and lasso  techniques to compute asymptotically normal  estimators in  online homoscedastic least squares regressions with high-dimensional data. By definition, the error term \(y-x^{T}\theta^{*}\) is homoskedastic if it is  independent of the covariate \(x\). \comment{For any fixed \(k\), however, their bound is of order  \( {\lambda_{\min}}^{-2}\), where  \( \lambda_{\min}\)  is the smallest eigenvalue of \(H\).} 
\subsection{Contributions}
This paper constructs unbiased RMLMC estimators of the negated bias \(\theta^{*}-E(\bar\theta_k)\). More precisely, for any integer \(k\geq2\), we build a square-integrable \(d\)-dimensional column vector
 \(Z_{k}\) with  \(E(Z_{k})=\theta^{*}-E(\bar\theta_k)\). 
This allows the construction of a square-integrable \(d\)-dimensional column vector
 \(\hat f_{k}\)  with    \(E(\hat f_{k})=\theta^{*}\). The expected number of copies of \((x,y)\)  required to sample \(Z_{k}\) and \(\hat f_{k}\) is of order \(k\). We show a non-asymptotic upper bound of  \(O(1/k)\)  on the expected excess risk of \(\hat f_{k}\). Our bound on the expected excess risk of \(\hat f_{k}\) is no worse than  the bound of  
 \citet{bachMoulines2013non} on the expected excess risk of \(\bar\theta_k\), up to a poly-logarithmic factor independent of \(k\). We are not aware of a previous unbiased estimator of \(\theta^{*}\) that achieves such a tradeoff between the expected excess risk and expected time-steps. We also build an average-start version of \(Z_{k}\) and of 
 \(\hat f_{k}\)  with similar properties. 

Our analysis of the  expected excess risk uses a bias-variance decomposition inspired from the statistics and financial engineering literature. We provide both biased and unbiased estimators of the squared bias of   \(\bar \theta _{k}\), and unbiased estimators of the variance of   \(\bar \theta _{k}\) and of \(\hat f_{k}\). These estimators do not require knowledge of either \(H\) or  \(\theta^{*}\).    This allows the construction of both biased and unbiased estimators of the expected excess risk of  \(\bar \theta _{k}\) and of \(\hat f_{k}\), and of  the average of \(M\) independent copies of   \(\bar \theta _{k}\) and of \(\hat f_{k}\), for any integer \(M>1\).   As a byproduct of our techniques, we  show an upper bound on the expected excess risk of \(\bar \theta_{k}\)  that is the same as in  
\citet{bachMoulines2013non}, up to a constant factor, but  allows step-sizes to be twice as large. We provide an example where even larger step-sizes result in a divergent expected excess risk of \(\bar \theta_{k}\), as \(k\) goes to infinity. 

Except in specific examples, our results do not assume homoscedasticity. Moreover, the expected excess risk of \(\hat f_{k}\) has a   poly-logarithmic dependence on the smallest eigenvalue of \(H\). In contrast, convergence properties of many SGD-like algorithms  for general \(\mu\)-strongly convex functions have a strong dependence on \(\mu\). For instance, 
it takes  \(O(\mu^{-2}\epsilon^{-2})\) time-steps, up to a logarithmic factor, of SGD    with a constant step-size to ensure that the expected squared distance from the optimum is at most \(\epsilon^{2}\), where \(\epsilon>0\)
 \cite[Corollary 2.2]{needell2016stochastic}.  Likewise, the number of time-steps needed to achieve an expected excess risk of at most \(\epsilon^{2}\)  in SGD with a diminishing step-size is  \(O(\mu^{-2}\epsilon^{-2})\) \cite[Eq. 4.24]{bottou2018optimization}. A similar result, up to a logarithmic factor, follows from \citet[Corollary 1]{hu2024convergence} for SDG  with constant step-size based on unbiased stochastic gradients.  Finally, the expected number of time-steps needed to attain a certain bias and variance in the MLMC minimizer in 
\citet[Theorem 1]{sid2021stochastic} is proportional to \(\mu^{-2}\), up to a poly-logarithmic factor. 
\subsection{Other related work}
An exact solution to the least squares regression problem with \(n\) constraints in dimension \(d\), where \(n\geq d\), can be found in \(O(nd^2)\) arithmetic operations \cite{golub2013matrix}. Our  methods are closely related to the algorithms  in \citet{glynn2014exact} and \citet{kahale2023} for estimating steady-state expectations for  Markov chains. Like their methods, our construction uses RMLMC and coupling from the past.  However,  \citet{glynn2014exact} do not use time-averaging, while the randomized multilevel estimator in  \citet{kahale2023} uses tail-averaging within each level. In contrast,    our approach uses a random or an average start within each level.  In related work,  \citet{blanchet2019unbiased} construct a RMLMC unbiased estimator of the optimal solution  of general constrained stochastic convex optimization problems. Their construction is based on sample average approximations, which necessitates the computation of the optimal solution in finite samples, and does not rely on SGD or on coupling from the past. While the three aforementioned papers address steady-state estimation and optimization  for general Markov chains or functions,   our results are based on  analysing  convergence and mixing properties of SGD sequences that arise in least squares regressions. Least squares regressions remain the subject of active research. For instance, minimax bounds  on the expected excess risk of least squares regressions have been recently established under various settings. See \citet{mourtada2022exact} and references therein.

The rest of the paper is organized as follows. Notation and background for our analysis are presented in Section \ref{se:notation}. Section \ref{se:UnbiasedEstimators} presents our unbiased estimators of  \(\theta^{*}-E(\bar\theta_k)\) and of   \(\theta^{*}\).
Section \ref{se:BiasVarEst} constructs our estimators for the squared bias of   \(\bar \theta _{k}\) and the variance of   \(\bar \theta _{k}\) and of \(\hat f_{k}\). Section \ref{se:numer} presents numerical experiments that confirm our theoretical findings. Section \ref{se:conc} contains concluding remarks.  Omitted proofs are in the appendix.

\section{Notation and Background}\label{se:notation}
We first provide some notation then describe the bias-variance decomposition in Section \ref{sub:BiasVariance}. Section \ref{sub:mainAssumption} presents the main assumption, taken from \cite{bachMoulines2013non}, on the distribution of \((x,y)\).
Section \ref{sub:timeAverageEst} gives a relaxed version of the upper bound of  \citet{bachMoulines2013non} on the expected excess risk of \(\bar\theta_{k}\), that allows a burn-in period and step-sizes twice as large. Section \ref{sub:example} gives an example where even larger step-sizes implies a divergence of the expected excess risk of \(\bar\theta_{k}\). 
 
Denote by \(||v||\) the Euclidean norm of a  \(d\)-dimensional column vector \(v\). By definition, for $d\times d$ symmetric matrices $A$ and $B$, we say that \(A\preccurlyeq B\) if \(B-A\)
is positive semi-definite.  For a square-integrable \(d\)-dimensional column vector \(U\), set \(\var(U):=E(||U||^{2})-||E(U)||^{2}\) and \(\std(U):=\sqrt{\var(U)}\). Thus, \(\var(U)\) is equal to the sum of the variances of each component of \(U\). Given a positive semi-definite \(d\times d\) matrix \(\Sigma\), let \(||v||_{\Sigma}:=||\sqrt{\Sigma}v||\) for  \(v\in\mathbb{R}^{d}\), and let \(\var_{\Sigma}(U):=\var(\sqrt{\Sigma}U)\) and \(\std_{\Sigma}(U):=\std(\sqrt{\Sigma}U)\) for a square-integrable \(d\)-dimensional column vector \(U\). Thus \(||v||^{2}_{\Sigma}=v^{T}\Sigma v\) and\begin{equation}\label{eq:BiasVarSigma}
E(||U||_{\Sigma}^{2})=||E(U)||_{\Sigma}^{2}+\var_{\Sigma}(U).
\end{equation} 
\subsection{Bias and Variance}\label{sub:BiasVariance}   
A standard calculation shows that, for any \(d\)-dimensional column vector \(\theta\), we have
\begin{equation}\label{eq:minimumf}
L(\theta)-L(\theta ^{*})=\frac{1}{2}\,||\theta- \theta ^{*}||^{2}_H.
\end{equation}Let  \(\psi\)   be a square-integrable \(d\)-dimensional column vector  that is a (possibly biased)  estimator of \(\theta ^{*}\).  We have \begin{eqnarray*}
2(E(L(\psi))-L( \theta ^{*}))&=&
E(||\psi- \theta ^{*}||^{2}_{H})\\
&=&||E(\psi- \theta ^{*})||^{2}_{H}+\var_{H}(\psi- \theta ^{*}),
\end{eqnarray*}
where the first equation follows from \eqref{eq:minimumf}, and the second one from \eqref{eq:BiasVarSigma}.  
 Hence, \begin{equation}\label{eq:BiasVarianceOptGap}
2(E(L(\psi))-L( \theta ^{*}))=||E(\psi)- \theta ^{*}||^{2}_{H}+\var_{H}(\psi).
\end{equation}In other words, the quantity \(2(E(L(\psi))-L( \theta ^{*}))\)  is the sum of two terms. We refer to the first term  \(||E(\psi)- \theta ^{*}||^{2}_{H}\) as the squared bias of \(\psi\), and to the second term \(\var_{H}(\psi)\)  as the variance of \(\psi\).  Our definition of squared bias and variance is inspired from the statistics and financial engineering literature \cite{glasserman2004Monte} and is applicable to any square-integrable estimator  \(\psi\). These notions are similar in spirit but different from the bias and variance of \(\bar\theta_k\) as defined in \citet{bachMoulines2013non} and   
 \citet{jain2018parallelizing}.   

Consider now \(M\) independent copies  \(\psi^{(1)},\dots,\psi^{(M)}\)  of \(\psi\), where   \(M>1\)  is a fixed  integer, and let\begin{equation}\label{eq:PsiBarDef}
\bar\psi_{M}=\frac{1}{M}\sum^{M}_{i=1}\psi^{(i)}.\end{equation} By standard calculations, we have \(E(\bar\psi_{M})=E(\psi)\) and \(\var_{H}(\bar\psi_{M})=M^{-1}\var_{H}(\psi)\). Thus, taking the average of \(M\) independent  copies of \(\psi\) does not alter the bias but reduces the variance  by a factor of \(M\). Together with \eqref{eq:BiasVarianceOptGap},  it follows that \begin{equation}\label{eq:ExpectedOptGapM}
2(E(L(\bar\psi_{M}))-L( \theta ^{*}))= ||E(\psi)- \theta ^{*}||^{2}_{H}+\frac{\var_{H}(\psi)}{M}.
\end{equation}  

Assume now that   \(\psi\) is an unbiased estimator of \(\theta^{*}\), i.e.,   \(E(\psi)= \theta ^{*}\). Combining  \eqref{eq:BiasVarianceOptGap} and \eqref{eq:ExpectedOptGapM} shows that\begin{equation}\label{eq:excessRiskAverageUnbiased}
E(L(\bar\psi_{M}))-L( \theta ^{*})=\frac{E(L(\psi))-L( \theta ^{*})}{M}.
\end{equation} Equivalently,  taking the average of \(M\) independent  copies of \(\psi\) reduces the expected excess risk by a factor of \(M\). Let us further assume that \(\psi\) has finite expected running time \(\tau_{\psi}\), and
 let \(\epsilon>0\). By \eqref{eq:BiasVarianceOptGap} and \eqref{eq:excessRiskAverageUnbiased}, we need to sample \(M=\lceil\var_{H}(\psi)/(2\epsilon^{2})\rceil\) independent copies of  \(\psi\) in order to ensure that \(E(L(\bar\psi_{M}))-L( \theta ^{*})\le\epsilon^{2}\). Ignoring integrality constraints, the expected running time of these \(M\) copies is \( \tau_{\psi}\var_{H}(\psi)/(2\epsilon^{2})\). Thus, we can use \(\tau_{\psi}\var_{H}(\psi)\) to measure the efficiency of  \(\psi\): a small time variance product corresponds to a greater efficiency. In a related context, \citet{glynn1992asymptotic} show that the efficiency of an unbiased estimator is inversely proportional to the time variance product. \subsection{Main Assumption}\label{sub:mainAssumption}
Following   
 \citet{bachMoulines2013non}, we make the following assumption for the rest of the paper.\begin{description}
\item[Assumption 1.]    There are positive real numbers   \(\sigma\) and \(R\)   such that
\begin{equation}\label{eq:defSigma}
 E((y-x^{T}\theta^{*})^{2}{x}x^{T})\preccurlyeq \sigma^{2}H,
\end{equation}
 and\begin{equation}\label{eq:Rdef}
 E(||x||^{2}{x}x^{T})\preccurlyeq R^{2}H.
\end{equation}
\end{description}
 It can be shown that  \eqref{eq:defSigma} and  \eqref{eq:Rdef} hold under various settings. For instance, if the error term \(y-x^{T}\theta^{*}\) is independent of \(x\) and  has standard deviation \(\sigma\), then \eqref{eq:defSigma} clearly holds. In the same vein,     \eqref{eq:Rdef} holds if \(x\) is almost surely bounded.  More precisely, as observed in  \citet{bachMoulines2013non}, if \(||x||^{2}\leq R^{2}\) almost surely for a real number \(R\), then   \eqref{eq:Rdef} holds. 

Using \eqref{eq:defSigma} and  \eqref{eq:Rdef}, a standard calculation shows that \(yx\) and \(\theta_{t}\) are square-integrable for any \(t\geq0\).  \citet{bachMoulines2013non} show that, when   \(s(k)=0\) and  \(0<\gamma<1/R^{2}\), we have\begin{equation}\label{eq:BachMoulines}
E(L(\bar \theta _{k}))-L(\theta^{*})\leq\frac{1}{2k}\left(\frac{\sigma \sqrt{d}}{1-\sqrt{\gamma R^{2}}}+\frac{||\theta _{0}-\theta ^{*}||}{\sqrt{\gamma}}\right)^{2}.
\end{equation} 
\citet{BachLeastSquaresJMLR2017} show that \eqref{eq:Rdef} implies that \(\tr(H)\leq R^{2}\). As \(H\) is positive semi-definite, it follows that \begin{equation}\label{eq:HR2I}
\lambda_{\max}\le R^{2},
\end{equation}where \(\lambda_{\max}\) is the largest eigenvalue of \(H\). \subsection{Time-average estimators}\label{sub:timeAverageEst}
The bound~\eqref{eq:BachMoulines} of   \citet{bachMoulines2013non} holds when   \(s(k)=0\) and  \(0<\gamma<1/R^{2}\). Lemma \ref{le:BiasedlinReg} below relaxes these assumptions and applies to any   \(s(k)\in[0,k-1]\) and any \(\gamma\in(0,2/R^{2})\). The relaxed assumptions are in-line with the results in subsequent sections such as Theorems \ref{th:BoundhHatfk} and \ref{th:BoundhHatfkAvg}. Certain results in   Lemma \ref{le:BiasedlinReg} such as \eqref{eq:UpperBoundVarHBarTheta} and certain steps in its proof are also used to establish   Theorems \ref{th:BoundhHatfk} and \ref{th:BoundhHatfkAvg}.    Lemma \ref{le:BiasedlinReg} provides upper bounds on the squared bias, the variance and the expected excess risk of \(\bar \theta_{k}\).     
       
\begin{lemma}\label{le:BiasedlinReg} 
Assume that \((x,y)\) is square-integrable, that  \(((x_{t},y_{t}), t\geq0)\) is a sequence of independent copies of  \((x,y)\), that Assumption 1 holds,  and that \begin{equation}\label{eq:upperBoundGamma}
0<\gamma<\frac{2}{R^{2}}.
\end{equation} Then\begin{equation}\label{eq:upperBoundGammaLamdaMax}
\gamma<\frac{2}{\lambda_{\max}},
\end{equation} and, for \(k\geq1\), we have
\begin{equation}\label{eq:UpperBoundSqrBiasHBarTheta}
||E(\bar \theta _{k}-\theta ^{*})||^{2}_{H}\leq\frac{||\theta_{0}-\theta ^{*}||^{2}}{\gamma(2-\gamma \lambda_{\max})( k-s(k))},
\end{equation}and \begin{equation}\label{eq:UpperBoundVarHBarTheta}
\var_{H}(\bar \theta _{k})\le\frac{8}{k-s(k)}\left(\frac{\sigma^{2}d}{2-\gamma R^{2}}+ \frac{||\theta_{0}-\theta ^{*}||^{2}}{\gamma}\right).
\end{equation} Moreover,\begin{equation}\label{eq:thMainPositiveSemiDefinite}
E(L(\bar \theta _{k}))-L(\theta^{*})\leq\frac{1}{2(2 -\gamma R^{2})(k-s(k))}  \left( 8\sigma^{2}d+\frac{17||\theta_{0}-\theta ^{*}||^{2}}{\gamma}\right).
\end{equation}
If \(H\) is positive-definite  then \(E(\theta_{k})\rightarrow \theta^{*}\)  as \(k\) goes to infinity.
\end{lemma}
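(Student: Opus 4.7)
The first conclusion \eqref{eq:upperBoundGammaLamdaMax} is immediate from \eqref{eq:HR2I}: since $\lambda_{\max}\le R^2$, the assumption $\gamma<2/R^2$ gives $\gamma<2/\lambda_{\max}$. Set $\eta_t:=\theta_t-\theta^*$. Taking expectations in \eqref{eq:BasicSGD}, using the independence of $(x_t,y_t)$ from $\theta_t$, and invoking the normal equation \eqref{eq:NormalEq} to cancel $E((x_t^T\theta^*-y_t)x_t)=0$, I obtain $E(\eta_{t+1})=(I-\gamma H)E(\eta_t)$, and hence the deterministic identity $E(\eta_t)=(I-\gamma H)^t\eta_0$.

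For the bias bound \eqref{eq:UpperBoundSqrBiasHBarTheta}, I would diagonalize $H$ in an orthonormal eigenbasis $(e_i)$ with eigenvalues $\lambda_i$ and decompose $\eta_0=\sum_i c_i e_i$, so that
\[
||E(\bar\theta_k-\theta^*)||_H^2=\sum_i\lambda_i c_i^2 S_i^2,\qquad S_i:=\frac{1}{k-s(k)}\sum_{t=s(k)}^{k-1}(1-\gamma\lambda_i)^t.
\]
Setting $u_i:=1-\gamma\lambda_i\in(-1,1)$ and writing the geometric sum explicitly as $S_i=(u_i^{s(k)}-u_i^k)/(\gamma\lambda_i(k-s(k)))$, I then use the factorization $1-u_i^{k-s(k)}=(1-u_i)\sum_{j=0}^{k-s(k)-1}u_i^j$ together with Cauchy--Schwarz on $\sum_j u_i^j$ and the bound $\sum_j u_i^{2j}\le 1/(1-u_i^2)$. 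After cancellation this yields $\lambda_i S_i^2\le 1/(\gamma(1+u_i)(k-s(k)))$. Since $1+u_i=2-\gamma\lambda_i\ge 2-\gamma\lambda_{\max}$ and $\sum_i c_i^2=||\eta_0||^2$, summing delivers \eqref{eq:UpperBoundSqrBiasHBarTheta}.

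The variance bound \eqref{eq:UpperBoundVarHBarTheta} is the main technical step and I expect it to be the principal obstacle. Following the recipe of \citet{bachMoulines2013non}, I would introduce the zero-mean noise $\xi_t:=(y_t-x_t^T\theta^*)x_t$ and rewrite the iteration as $\eta_{t+1}=(I-\gamma x_t x_t^T)\eta_t+\gamma\xi_t$; subtracting the mean yields a centered recursion $\tilde\eta_{t+1}=(I-\gamma x_t x_t^T)\tilde\eta_t+\gamma\zeta_t$, where $\zeta_t:=\xi_t+(H-x_tx_t^T)E(\eta_t)$ is mean-zero and independent of $\tilde\eta_t$. The cross terms then vanish on expectation, giving a clean matrix recursion $V_{t+1}=\mathcal{T}(V_t)+\gamma^2 C_t$ for $V_t:=E(\tilde\eta_t\tilde\eta_t^T)$, with $\mathcal{T}(\Sigma):=E((I-\gamma xx^T)\Sigma(I-\gamma xx^T))$ and $C_t:=E(\zeta_t\zeta_t^T)$. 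Using \eqref{eq:defSigma} and \eqref{eq:Rdef} to bound $\mathcal{T}$ and $C_t$ in the operator order against $H$, one obtains a contraction in the $H$-pseudonorm and, after Polyak--Ruppert averaging, the two-term bound \eqref{eq:UpperBoundVarHBarTheta}. The delicate point is controlling $E(xx^T\Sigma xx^T)$ purely from the PSD hypotheses \eqref{eq:defSigma}--\eqref{eq:Rdef}, in particular to produce the factor $(2-\gamma R^2)^{-1}$ in the noise term rather than a naive $(1-\gamma R^2)^{-1}$.

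Finally, \eqref{eq:thMainPositiveSemiDefinite} follows by feeding \eqref{eq:UpperBoundSqrBiasHBarTheta} and \eqref{eq:UpperBoundVarHBarTheta} into \eqref{eq:BiasVarianceOptGap} with $\psi=\bar\theta_k$: the inequality $\lambda_{\max}\le R^2$ lets me replace $2-\gamma\lambda_{\max}$ by $2-\gamma R^2$ in the bias term, and the inequality $2-\gamma R^2\le 2$ lets me re-express $1/\gamma$ as $\le 2/(\gamma(2-\gamma R^2))$ in the noiseless part of the variance term, so the $||\eta_0||^2/\gamma$ contributions aggregate with coefficient $1+16=17$ and the $\sigma^2 d$ contribution keeps the factor $8$. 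For the last assertion, when $H\succ 0$ every $\lambda_i>0$ satisfies $\gamma\lambda_i\in(0,2)$ by \eqref{eq:upperBoundGammaLamdaMax}, so the spectral radius of $I-\gamma H$ is strictly less than $1$ and $E(\eta_k)=(I-\gamma H)^k\eta_0\to 0$.
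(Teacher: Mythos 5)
Your treatment of \eqref{eq:upperBoundGammaLamdaMax}, of the bias bound \eqref{eq:UpperBoundSqrBiasHBarTheta}, of the final assembly of \eqref{eq:thMainPositiveSemiDefinite}, and of the convergence claim is correct. For the bias you take a different (and perfectly valid) route from the paper: you diagonalize $H$ and bound each eigencomponent via Cauchy--Schwarz and the geometric tail $\sum_j u_i^{2j}\le 1/(1-u_i^2)$, whereas the paper uses convexity of $v\mapsto\|v\|_H^2$ to pass to $\frac{1}{k-s(k)}\sum_{t=0}^{k-1}(\theta_0-\theta^*)^T(I-\gamma H)^{2t}H(\theta_0-\theta^*)$ and then a matrix telescoping identity giving $\sum_t(I-\Sigma)^{2t}\Sigma\preccurlyeq(2I-\Sigma)^{-1}$; both yield the same constant (just note that eigencomponents with $\lambda_i=0$ contribute nothing, so singular $H$ is harmless).

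The genuine gap is the variance bound \eqref{eq:UpperBoundVarHBarTheta}, which you sketch but do not prove. Two things are missing. First, the ``delicate point'' you flag is in fact a one-line computation that you should carry out: $(I-\gamma xx^T)^2=I-2\gamma xx^T+\gamma^2\|x\|^2xx^T$, so by \eqref{eq:Rdef} one gets $E((I-\gamma xx^T)^2)\preccurlyeq I-\gamma(2-\gamma R^2)H$; this single inequality is the source of every $(2-\gamma R^2)^{-1}$ factor in the lemma, and without stating it your operator $\mathcal{T}$ is not controlled. Second, and more seriously, a recursion for $V_t=E(\tilde\eta_t\tilde\eta_t^T)$ only bounds the \emph{marginal} second moments of the iterates, while $\var_H(\bar\theta_k)=\frac{1}{(k-s(k))^2}\sum_{j,t}\cov(H\theta_j,\theta_t)$ is dominated by the $O((k-s(k))^2)$ cross-time covariances; the phrase ``after Polyak--Ruppert averaging'' hides exactly the step where the work happens. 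The paper handles it by proving $\cov(H\theta_j,\theta_{j+i})=\cov(H\theta_j,(I-\gamma H)^i\theta_j)$ (conditioning on the past and iterating), then telescoping $\sum_{i\ge0}H(I-\gamma H)^i=\gamma^{-1}(I-(I-\gamma H)^{\cdot})$ to get $\sum_{t\ge j}\cov(H\theta_j,\theta_t)\le 2\gamma^{-1}\var(\theta_j)$, and finally bounding $\var(\theta_j)$ uniformly in $j$ by $2\bigl(\frac{\gamma\sigma^2 d}{2-\gamma R^2}+\|\theta_0-\theta^*\|^2\bigr)$, which it obtains not from your semi-stochastic decomposition but by comparing $\theta_j$ with an auxiliary chain $\beta_j$ started at $\theta^*$ (whose covariance stays $\preccurlyeq\frac{\gamma\sigma^2}{2-\gamma R^2}I$) and showing $E\|\theta_j-\beta_j\|^2\le\|\theta_0-\theta^*\|^2$ is non-increasing. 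Without an argument of this type your proposal does not yield the $O(1/(k-s(k)))$ rate for the variance of the average, only for the variance of a single iterate.
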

In view of \eqref{eq:upperBoundGammaLamdaMax}, the right-hand side of \eqref{eq:UpperBoundSqrBiasHBarTheta} is well-defined. Note that the bounds in \eqref{eq:UpperBoundSqrBiasHBarTheta},   \eqref{eq:UpperBoundVarHBarTheta} and      \eqref{eq:thMainPositiveSemiDefinite} do not depend on the smallest eigenvalue of \(H\), and hold even if \(H\) is singular.  Furthermore, the  upper-bound on the squared bias in \eqref{eq:UpperBoundSqrBiasHBarTheta}  does not depend on \(\sigma\). The proof of  \eqref{eq:UpperBoundSqrBiasHBarTheta} actually shows  that  \(E(\bar \theta _{k}-\theta ^{*})\) is a deterministic function  of \(\gamma\), of \(H\), and of \(\theta_{0}-\theta ^{*}\),  for any fixed \(k\geq1\). Due to the stochastic nature of \((\theta_{t},t\ge0)\), the right-hand side of   \eqref{eq:UpperBoundVarHBarTheta} is in general strictly positive even if \(\sigma=0\). When   \(s(k)=0\), it is easy to check that   \eqref{eq:thMainPositiveSemiDefinite} is no worse  than \eqref{eq:BachMoulines}, up to an absolute multiplicative constant, for any  \(\gamma\in(0,1/R^{2})\). 

In the special case where  \(\gamma=1/R^{2}\), combining \eqref{eq:HR2I} with \eqref{eq:UpperBoundSqrBiasHBarTheta} 
yields
\begin{equation*}
\, ||E(\bar \theta _{k}-\theta ^{*})||^{2}_{H}\leq\frac{R^{2}||\theta_{0}-\theta ^{*}||^{2}}{ k-s(k)},
\end{equation*} 
while \eqref{eq:UpperBoundVarHBarTheta} becomes\begin{equation*}
\var_{H}(\bar \theta _{k})\le8\frac{\sigma^{2}d+ R^{2}||\theta_{0}-\theta ^{*}||^{2}}{k-s(k)}.
\end{equation*}
Together with \eqref{eq:BiasVarianceOptGap},  it follows that \begin{equation}\label{eq:GammaR21}
E(L(\bar \theta _{k}))-L(\theta^{*})\leq\frac{8\sigma^{2}d+9R^{2}||\theta_{0}-\theta ^{*}||^{2}}{2(k-s(k))},
\end{equation}
which is slightly better than  \eqref{eq:thMainPositiveSemiDefinite} when  \(\gamma=1/R^{2}\).
\subsection{Example}\label{sub:example}
We now provide an example showing that the constant \(2\) in the right-hand side of \eqref{eq:upperBoundGamma} cannot be improved. Fix \(\sigma>0\) and   \(\gamma>2\), and let \(v\) and \(\theta_{0}\) be  \(d\)-dimensional column vectors such that \(||v||=1\) and \(v^T\theta_{0}\neq0\). Let \(D\) be a Bernoulli random variable with \(\Pr(D=0)=p\), where   \(p=2\gamma^{-1}\), and let \(U\) be  a random \(d\)-dimensional column vector uniformly on the unit sphere in \(\mathbb{R}^{d}\).  Let \(y\) be a centered square-integrable random variable with variance \(\sigma^{2}\).  Assume that \(D\), \(U\) and \(y\) are independent. Set \(x=DU+(1-D)v\), and \(s(k)=0\) for \(k\geq1\).  Thus \(x=v\) with probability \(p\), and \(x=U\) with probability \(1-p\).  Define the random sequence \((\theta_{t},t\geq0)\) via \eqref{eq:BasicSGD}, with initial state   \(\theta_{0}\)  and time-step \(\gamma\). 

 \begin{proposition}\label{pr:OptimalRadius}The matrix \(H\) is positive-definite, the function \(L(\theta)\) attains its minimum at \(\theta^{*}=0\), and the pair \((x,y)\) satisfies  \eqref{eq:defSigma} and  \eqref{eq:Rdef} with \(R^{2}=1\). Moreover, \(E(L(\bar \theta_{k}))\) goes to infinity as \(k\) goes to infinity.\end{proposition}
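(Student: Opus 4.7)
The plan is to explicitly diagonalise $H$, verify Assumption~1 essentially by inspection, and then show that the expectation $E(\bar\theta_k)$ already diverges in the direction $v$, which is enough to conclude via convexity.

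First, since $E(UU^T)=I/d$ for $U$ uniform on the unit sphere in $\mathbb R^d$, I would compute
\[
H=E(xx^T)=p\,vv^T+\frac{1-p}{d}I,
\]
which is positive-definite because $p=2/\gamma<1$. In the same vein, $\|x\|=1$ almost surely, so $E(\|x\|^2xx^T)=H$, giving \eqref{eq:Rdef} with $R^2=1$. Because $y$ is independent of $x$ and has mean zero, $E(yx)=0$ and $E((y-x^T\theta^*)^2xx^T)=E(y^2)H=\sigma^2H$, so $\theta^*=0$ is a solution of \eqref{eq:NormalEq} and \eqref{eq:defSigma} holds. This takes care of all the auxiliary claims.

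For the divergence, the key observation is that $v$ is an eigenvector of $H$ with eigenvalue
\[
\alpha=p+\frac{1-p}{d}=\frac{2}{\gamma}+\frac{1-2/\gamma}{d},
\]
so $\gamma\alpha=2+(\gamma-2)/d>2$. Taking conditional expectations in \eqref{eq:BasicSGD} and using $E(yx)=0$ and independence of $\theta_t$ from $(x_t,y_t)$ gives the deterministic linear recursion $E(\theta_{t+1})=(I-\gamma H)E(\theta_t)$, hence $E(\theta_t)=(I-\gamma H)^t\theta_0$. Projecting on $v$, and setting $r:=1-\gamma\alpha=-(1+(\gamma-2)/d)$, I obtain
\[
a_t:=v^TE(\theta_t)=r^t(v^T\theta_0),\qquad |r|>1.
\]

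Averaging yields $v^TE(\bar\theta_k)=\bar a_k:=(v^T\theta_0)\,k^{-1}(1-r^k)/(1-r)$. Since $|r|>1$ the partial sum is dominated by its last term, so $|\bar a_k|$ grows exponentially (both parities of $k$ can be handled by the uniform bound $|1-r^k|\ge|r|^k-1$); in particular $\bar a_k^2\to\infty$. Because $v$ and $v^\perp$ are $H$-invariant orthogonal subspaces, I can decompose $E(\bar\theta_k)=\bar a_k v+w_k$ with $v^Tw_k=0$ and compute $\|E(\bar\theta_k)\|_H^2=\alpha\bar a_k^2+\tfrac{1-p}{d}\|w_k\|^2\ge\alpha\bar a_k^2$. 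Convexity of $L$ gives $E(L(\bar\theta_k))\ge L(E(\bar\theta_k))=\tfrac12\|E(\bar\theta_k)\|_H^2\ge(\alpha/2)\bar a_k^2\to\infty$, as required.

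The computations are routine; the only delicate point is the averaging step, where naively one might worry that the alternating sign of $r^t$ produces cancellation. The remedy is simply to note $|r|>1$, which makes the geometric series dominated by its last term and preserves the exponential blow-up after division by $k$. The hypothesis $v^T\theta_0\ne0$ is used precisely to ensure $a_0\ne0$ so this divergence is not vacuous.
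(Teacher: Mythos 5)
Your proof is correct and follows essentially the same route as the paper's: diagonalize \(H\), verify Assumption 1 by inspection, show that \(v\) is an eigenvector of \(I-\gamma H\) with eigenvalue \(a=-(1+(\gamma-2)d^{-1})\) of modulus greater than one so that \(v^{T}E(\bar\theta_{k})\) blows up, and conclude via Jensen's inequality. The only cosmetic differences are that you apply Jensen to \(L\) itself together with the exact spectral decomposition of \(H\), whereas the paper uses \(p\,vv^{T}\preccurlyeq H\) and Jensen on the scalar \((v^{T}\bar\theta_{k})^{2}\); you also correctly retain the \(1/k\) factor in \(v^{T}E(\bar\theta_{k})\) (which the paper's displayed formula omits), and the exponential growth of \(|1-a^{k}|\) makes the divergence survive the division by \(k\) in both arguments.
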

Thus, for any \(\gamma_{0}>2\), if we set \(\gamma=(1+\gamma_{0})/2\), the above example satisfies the conditions of Lemma \ref{le:BiasedlinReg}, except that \eqref{eq:upperBoundGamma} is replaced with the condition \(0<\gamma<\gamma_{0}/R^{2}\). 

We stress that, for a given random pair \((x,y)\) satisfying  \eqref{eq:defSigma} and  \eqref{eq:Rdef}, there may exist   \(\gamma>2/R^{2}\) such that \(E(L(\bar \theta _{k}))\) converges to \(L(\theta^{*})\) as \(k\) goes to infinity. Further results about the range of \(\gamma\) and the convergence properties of  \(E(L(\bar \theta _{k}))\) can be found in \citet{BachDefossezAIStat2015averaged} and    
 \citet{jain2018parallelizing}. \section{Unbiased time-average estimators}\label{se:UnbiasedEstimators}
This section constructs our main estimators. Section \ref{sub:singleTermEstimator} recalls the \emph{single term estimator},  a RMLMC estimator      introduced by \citet{GlynnRhee2015unbiased}. Section \ref{sub:unbiasedLongRun}   constructs the unbiased estimator \(Z_{k}\) of  \(\theta^{*}-E(\bar\theta_{k})\), based on a single term estimator, and uses  \(Z_{k}\) to construct the unbiased estimator  \(\hat f_{k}\) of  \(\theta^{*}\). Section \ref{sub:avgStart} constructs an average-start version of   \(Z_{k}\) and its corresponding unbiased estimator of  \(\theta^{*}\).       
\subsection{The single term estimator}\label{sub:singleTermEstimator} Let   \((Y_{l}, l\geq0)\) be a sequence of   \(d\)-dimensional column vectors  such that \(E(Y_{l})\) exists for \(l\geq0\) and has a limit \(\mu_{Y}\) as \(l\) goes to infinity.     Consider a probability distribution \((p_{l},l\geq0)\)  such that  \(p_{l}>0\) for \(l\geq0\). Let   \(N\in\mathbb{N}\)  be an integral  random variable independent of  \((Y_{l}, l\geq0)\)  such that \(\Pr(N=l)=p_{l}\)   for \(l\geq0\).  Theorem~\ref{th:Glynn}    describes the single term estimator \(Z\) and shows that, under suitable conditions, it is square-integrable and has expectation equal to  \(\mu_{Y}\). Theorem~\ref{th:Glynn} was shown by \citet{GlynnRhee2015unbiased}  (see also~\cite[Theorem 2]{Vihola2018}) when \(d=1\). By considering each component of the \(Y_{l}\)'s separately, it is straightforward to prove Theorem~\ref{th:Glynn}   for any finite \(d\).  
\begin{theorem}[\citet{GlynnRhee2015unbiased}]\label{th:Glynn}    Assume that \(E(||Y_{l}||^{2})\) is finite for any \(l\geq0\). Set \(Z:=(Y_{N}-Y_{N-1})/p_{N}\), with  \(Y_{-1}:=0\). If \(\sum^{\infty}_{l=0}E(||Y_{l}-Y_{l-1}||^{2})/p_{l}\) is finite  then \(Z\) is square-integrable, \(E(Z)=\mu_{Y}\), and
\begin{equation*}
E(||Z||^{2})=\sum^{\infty}_{l=0}\frac{E(||Y_{l}-Y_{l-1}||^{2})}{p_{l}}.
\end{equation*}
\end{theorem}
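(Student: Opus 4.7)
Since the scalar case $d=1$ is already credited to Glynn and Rhee, one route is to apply the scalar statement componentwise to $Y_l^{(1)},\ldots,Y_l^{(d)}$ and reassemble. Alternatively, and just as quickly, one can argue directly in $\mathbb{R}^d$ via two short calculations; I would take the latter route. The underlying idea is simply to condition on $N$, exploit the telescoping identity $\sum_{l=0}^{L}(Y_l-Y_{l-1})=Y_L$, and let $L\to\infty$.

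The first step is to establish square-integrability and the formula for $E(||Z||^2)$. Because $N$ is independent of $(Y_l,l\geq 0)$, a direct conditioning yields
\[
E(||Z||^2)=\sum_{l=0}^{\infty}p_l\cdot\frac{E(||Y_l-Y_{l-1}||^2)}{p_l^2}=\sum_{l=0}^{\infty}\frac{E(||Y_l-Y_{l-1}||^2)}{p_l},
\]
which is finite by hypothesis. This simultaneously delivers the claimed moment formula and square-integrability (and hence, via $E(||Z||)\leq\sqrt{E(||Z||^2)}$, integrability of $Z$).

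The second step is to verify $E(Z)=\mu_Y$ by the analogous conditioning argument combined with the telescoping identity:
\[
E(Z)=\sum_{l=0}^{\infty}p_l\cdot\frac{E(Y_l-Y_{l-1})}{p_l}=\sum_{l=0}^{\infty}E(Y_l-Y_{l-1}),\qquad \sum_{l=0}^{L}E(Y_l-Y_{l-1})=E(Y_L)\longrightarrow\mu_Y.
\]
The only non-routine point---and the main, albeit mild, obstacle---is justifying the interchange of expectation and infinite summation in the first equality. I would handle this by establishing absolute convergence componentwise: writing $Y_l^{(i)}$ for the $i$-th coordinate and applying Cauchy--Schwarz together with Jensen's inequality,
\[
\sum_{l=0}^{\infty}|E(Y_l^{(i)}-Y_{l-1}^{(i)})|\leq\sqrt{\sum_{l=0}^{\infty}p_l}\cdot\sqrt{\sum_{l=0}^{\infty}\frac{E((Y_l^{(i)}-Y_{l-1}^{(i)})^2)}{p_l}}\leq\sqrt{\sum_{l=0}^{\infty}\frac{E(||Y_l-Y_{l-1}||^2)}{p_l}}<\infty.
\]
This legitimizes the interchange, completes the telescoping, and delivers $E(Z)=\mu_Y$. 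Everything else is routine bookkeeping.
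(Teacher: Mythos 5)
Your argument is correct. Note, though, that the paper does not actually prove this theorem: it cites \citet{GlynnRhee2015unbiased} for the scalar case \(d=1\) and remarks that the general case follows "by considering each component of the \(Y_l\)'s separately" --- i.e.\ precisely the componentwise reduction you mention and then set aside. Your direct vector proof is essentially the standard argument behind the single-term estimator (condition on \(N\), use Tonelli for the nonnegative second-moment computation, then telescope for the mean), so it reproves the cited scalar result rather than leaning on it; the benefit is a self-contained statement, at the cost of redoing work the paper outsources. Two small points of bookkeeping: the identity \(E(Z)=\sum_l p_l\,E\bigl((Y_l-Y_{l-1})/p_l\bigr)\) is the decomposition \(E(Z)=\sum_l E(Z\,\mathbf{1}_{\{N=l\}})\), which already converges absolutely once \(Z\) is integrable (guaranteed by your first step), so your Cauchy--Schwarz/Jensen bound is a second, independent justification of the same interchange rather than a strictly necessary one --- but it is a clean and correct one, and it is exactly the estimate one would need if one wanted to prove \(E(Z)=\mu_Y\) without first establishing square-integrability. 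The componentwise route would also need the observation \(E\bigl((Y_l^{(i)}-Y_{l-1}^{(i)})^2\bigr)\le E(||Y_l-Y_{l-1}||^2)\) to verify the scalar hypothesis coordinate by coordinate, which your direct proof uses anyway, so the two routes are of comparable length.
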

In many applications, the expected time to simulate \(Y_{l}\) goes to infinity as \(l\) goes to infinity, but the \(p_{l}\)'s can be chosen so that the expected time to simulate \(Z\) is finite.   

Consider now a positive-definite \(d\times d\) matrix \(\Sigma\). Corollary~\ref{cor:Glynn} shows that the conclusions of  Theorem~\ref{th:Glynn} still hold if the Euclidean norm \(||.||\) is replaced with \(||.||_{\Sigma}\).     
\begin{corollary}\label{cor:Glynn}Assume that \(E(||Y_{l}||^{2})\) is finite for any \(l\geq0\). Set \(Z:=(Y_{N}-Y_{N-1})/p_{N}\), with  \(Y_{-1}:=0\). If \(\sum^{\infty}_{l=0}E(||Y_{l}-Y_{l-1}||_{\Sigma}^{2})/p_{l}\) is finite  then \(Z\) is square-integrable, \(E(Z)=\mu_{Y}\), and
\begin{equation*}
E(||Z||_{\Sigma}^{2})=\sum^{\infty}_{l=0}\frac{E(||Y_{l}-Y_{l-1}||_{\Sigma}^{2})}{p_{l}}.
\end{equation*}
\end{corollary}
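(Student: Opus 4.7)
The plan is to reduce Corollary \ref{cor:Glynn} to Theorem \ref{th:Glynn} by a change of variables that turns the $\Sigma$-norm into the Euclidean norm. Since $\Sigma$ is positive-definite, its symmetric square root $\sqrt{\Sigma}$ is a well-defined invertible $d\times d$ matrix, and by definition $\|v\|_{\Sigma}=\|\sqrt{\Sigma}\,v\|$ for every $v\in\mathbb{R}^{d}$.

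First I would introduce the transformed sequence $Y'_{l}:=\sqrt{\Sigma}\,Y_{l}$ for $l\ge0$, with $Y'_{-1}:=0$. Because $\sqrt{\Sigma}$ is a bounded linear operator on $\mathbb{R}^{d}$, the square-integrability $E(\|Y_{l}\|^{2})<\infty$ carries over to $E(\|Y'_{l}\|^{2})=E(\|Y_{l}\|_{\Sigma}^{2})<\infty$, and the limit $E(Y_{l})\to\mu_{Y}$ carries over to $E(Y'_{l})\to\sqrt{\Sigma}\,\mu_{Y}$ by continuity of $\sqrt{\Sigma}$. Next, since $N$ is independent of $(Y_{l},l\ge0)$ it is also independent of $(Y'_{l},l\ge0)$, so the single term estimator built from $(Y'_{l})$ is exactly $Z':=(Y'_{N}-Y'_{N-1})/p_{N}=\sqrt{\Sigma}\,Z$, and the summability hypothesis of the corollary translates verbatim to $\sum_{l\ge0}E(\|Y'_{l}-Y'_{l-1}\|^{2})/p_{l}<\infty$.

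I would then invoke Theorem \ref{th:Glynn} applied to the sequence $(Y'_{l})$ and the random index $N$. This immediately yields $E(\|Z'\|^{2})<\infty$, $E(Z')=\sqrt{\Sigma}\,\mu_{Y}$, and
\begin{equation*}
E(\|Z'\|^{2})=\sum_{l=0}^{\infty}\frac{E(\|Y'_{l}-Y'_{l-1}\|^{2})}{p_{l}}.
\end{equation*}
Rewriting everything back in terms of $Z$ and $(Y_{l})$ via $\|Z'\|=\|Z\|_{\Sigma}$ and $\|Y'_{l}-Y'_{l-1}\|=\|Y_{l}-Y_{l-1}\|_{\Sigma}$ gives the identity claimed by the corollary, and left-multiplying $E(Z')=\sqrt{\Sigma}\,\mu_{Y}$ by $(\sqrt{\Sigma})^{-1}$ (which exists because $\Sigma$ is positive-definite) gives $E(Z)=\mu_{Y}$. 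Finally, square-integrability of $Z$ in the Euclidean sense follows from $E(\|Z\|^{2})\le\lambda_{\min}(\Sigma)^{-1}E(\|Z\|_{\Sigma}^{2})<\infty$.

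I do not anticipate any serious obstacle: the only thing one must check carefully is that $\sqrt{\Sigma}$ is well-defined, bounded, and invertible, and that expectation commutes with $\sqrt{\Sigma}$; all of these are immediate in finite dimension once $\Sigma$ is positive-definite. The argument is essentially a one-line reduction, so the proof will be very short.
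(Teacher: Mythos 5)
Your proposal is correct and is exactly the paper's argument: the paper proves the corollary by applying Theorem \ref{th:Glynn} to the sequence \((\sqrt{\Sigma}Y_{l},l\geq0)\), which is precisely your reduction. Your extra details (invertibility of \(\sqrt{\Sigma}\) to recover \(E(Z)=\mu_{Y}\), and the bound \(E(||Z||^{2})\le\lambda_{\min}(\Sigma)^{-1}E(||Z||_{\Sigma}^{2})\) for Euclidean square-integrability) are the routine checks the paper leaves implicit.
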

Corollary~\ref{cor:Glynn} follows immediately by applying  Theorem~\ref{th:Glynn}  to the sequence \((\sqrt{\Sigma}Y_{l},l\geq0)\).

\subsection{Unbiased estimator construction  }\label{sub:unbiasedLongRun}
This subsection as well as Section \ref{sub:avgStart} assume that \(H\) is positive-definite. Thus \( \lambda_{\min}>0\) and there is a unique \(d\)-dimensional vector \(\theta^{*}\)  satisfying \eqref{eq:NormalEq}. For  \(k\geq2\), we set \(s(k)=\alpha k\), where  \(\alpha\in(0,1/2]\) is a constant,  ignoring integrality constraints.\(\) We build the unbiased estimator     \(\hat f_{k}\) of \(\theta^{*}\) along the following steps:\begin{enumerate}
\item 
We first construct a random family \((f_{k,l},l\geq0)\)  of  square-integrable \(d\)-dimensional column vectors such that \(E(f_{k,l})\rightarrow \theta^{*}\) as \(l\) goes to infinity.
\item Using Corollary~\ref{cor:Glynn}, we construct the unbiased estimator \(Z_{k}\) of  \(\theta^{*}-E(\bar\theta_{k})\). Combining \(Z_{k}\) and  \(\bar\theta_{k}\) yields  \(\hat f_{k}\).
 \end{enumerate}
 \subsubsection{Construction of \((f_{k,l},l\geq0)\)}Extend the random sequence \(((x_{t},y_{t}), t\geq0)\) to all \(t\in\mathbb{Z}\), so that \((x_{t},y_{t})\), \(t\in\mathbb{Z}\), are independent and identically distributed. For \(m\in\mathbb{Z}\) and \(t\geq m\), define \(\theta_{m:t}\)  by setting \(\theta_{m:m}=\theta_{0}\) and  
\begin{equation}\label{eq:recXim}
 \theta_{m:t+1}=\theta_{m:t}-\gamma ({ x_{t}}^T\theta_{m:t}-y_{t})x_{t},
\end{equation}
\(t\geq m\). Due to the analogy between \eqref{eq:recXim} and \eqref{eq:BasicSGD}, we have \(\theta_{0:t}=\theta_{t}\) for \(t\geq0\) and, for fixed \(m\), the sequence   \((\theta_{m:t}, t\geq m)\) is a Markov chain that is a copy of    \((\theta_{t}, t\geq 0)\).   In other words, the sequence    \((\theta_{m:t}, t\geq m)\) is a shifted version of    \((\theta_{t}, t\geq 0)\), for any fixed  \(m\in\mathbb{Z}\).  Consequently, we have \(\theta_{m:t}\sim \theta_{t-m}\) for \(m\in\mathbb{Z}\) and \(t\geq m\), where `\(\sim\)' denotes equality in distribution.     

For  \(k\ge2\) and \(l\ge0\), set \begin{equation}\label{eq:defFkl}
f_{k,l}:=\theta_{-\tau (k,l):0},
\end{equation} where \(\tau(k,0)\) is uniformly distributed on \(\{s(k),\dots,k-1\} \) and \(\tau (k,l)\) is uniformly distributed on \(\{k(2^{l}-1),\dots,k(2^{l+1}-1)-1\} \) for \(l\geq1\). We assume that the random variables \(\tau (k,l)\), \(l\geq0\), and  \((x_{t},y_{t})\), \(t\in\mathbb{Z}\), are independent. Note that \(\tau (k,l)<\tau (k,l+1)\)  for \(k\ge2\) and \(l\ge0\).

In view of the definition of \(f_{k,l}\), for  \(k\ge2\) and \(l\ge0\), we have \(f_{k,l}\sim\theta_{\tau (k,l)}\). Because \(\tau (k,l)\) ranges over a finite set  and \(\theta_{t}\) is square-integrable for  \(t\geq0\), a standard argument shows that \(f_{k,l}\) is also square-integrable for  \(k\ge2\) and \(l\ge0\). Moreover, \(E(f_{k,l})\) is equal to the average of \(E(\theta_{t})\), as \(t\) ranges over the possible values of \(\tau (k,l)\). By the definition of  \(\tau(k,0)\),  it follows that   \begin{equation}\label{eq:fklSimilarity}
E(f_{k,0})=E(\bar\theta_k),
\end{equation} for \(k\geq2\). Moreover, for fixed \(k\geq2\),  the inequality  \(\tau(k,l)\ge k(2^{l}-1)\) for \(l\geq0\) and  Lemma~\ref{le:BiasedlinReg} show that \(E(f_{k,l})\rightarrow \theta^{*}\) as \(l\) goes to infinity.  A similar argument shows that  \(E(f_{k,l})\rightarrow \theta^{*}\) as \(k\) goes to infinity, for any fixed \(l\geq0\). Furthermore, because   \(\tau(k,l)\ge s(k)\) for \(l\geq0\),    the last \(s(k)\) copies of \((x,y)\) used to simulate \(f_{k,0}\) and \(f_{k,1}\) are the same. Intuitively, this suggests that, under suitable mixing conditions, the vectors  \(f_{k,0}\) and \(f_{k,1}\) are ``close'' to each other if \(s(k)\) is large. In the same vein,  for \(l\geq1\),  the last \(k(2^{l}-1)\) copies of \((x,y)\) used to simulate \(f_{k,l}\) and \(f_{k,l+1}\) are the same. Hence, for large values of \(l\), the vector  \(f_{k,l+1}\) should be ``close'' to \(f_{k,l}\), even if \(k\) is small. This intuitive argument is formalized by Lemma~\ref{le:upperBoundDiffY} below. Set \begin{displaymath}
c:= \frac{\sigma^{2}d}{( 2 -\gamma R^{2})^{2}}+\frac{||\theta_{0}-\theta^{*}||^{2}}{\gamma(2 -\gamma R^{2})}.
\end{displaymath}  
\begin{lemma}\label{le:upperBoundDiffY}
For \(k\ge2\) and  \(l\geq1\),  we have \(E(||f_{k,1}-f_{k,0}||_{H}^{2})\leq4 c/s(k)\) and \begin{equation*}
E(||f_{k,l+1}-f_{k,l}||_{H}^{2})\leq\frac{6c}{k2^{l}}.
\end{equation*}
\end{lemma}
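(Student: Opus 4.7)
The plan is to couple the two SGD trajectories underlying $f_{k,l}$ and $f_{k,l+1}$ through their shared samples, reducing the analysis to a \emph{noiseless} SGD recursion, and then to extract the $1/(k2^{l})$ factor by averaging over $\tau(k,l)$ via a summation identity for the $H$-seminorm of noiseless iterates.

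Condition on $\tau(k,l)=t_{1}$ and $\tau(k,l+1)=t_{2}$. The construction of the blocks $I_{l},I_{l+1}$ (disjoint and ordered) forces $t_{1}<t_{2}$ for all $l\geq 0$. The trajectories $(\theta_{-t_{1}:t})_{t\geq-t_{1}}$ and $(\theta_{-t_{2}:t})_{t\geq-t_{2}}$ share the noise on $t\geq-t_{1}$, so their difference $\delta_{t}:=\theta_{-t_{2}:t}-\theta_{-t_{1}:t}$ obeys the noiseless recursion $\delta_{t+1}=(I-\gamma x_{t}x_{t}^{T})\delta_{t}$ starting from $\delta_{-t_{1}}=\theta_{-t_{2}:-t_{1}}-\theta_{0}$. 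By shift invariance, $\delta_{-t_{1}}\sim\theta_{t_{2}-t_{1}}-\theta_{0}$ and is independent of the matrix $M_{t_{1}}:=(I-\gamma x_{-1}x_{-1}^{T})(I-\gamma x_{-2}x_{-2}^{T})\cdots(I-\gamma x_{-t_{1}}x_{-t_{1}}^{T})$. Hence $f_{k,l+1}-f_{k,l}=M_{t_{1}}\delta_{-t_{1}}$. Setting $T(A):=E[(I-\gamma xx^{T})A(I-\gamma xx^{T})]$ and $\Sigma(u):=E[(\theta_{u}-\theta_{0})(\theta_{u}-\theta_{0})^{T}]$, independence yields
\[E\bigl[\|f_{k,l+1}-f_{k,l}\|_{H}^{2}\mid t_{1},t_{2}\bigr]=\tr\bigl(T^{t_{1}}(H)\,\Sigma(t_{2}-t_{1})\bigr).\]

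Next I would prove the summation identity $\sum_{s=0}^{\infty}T^{s}(H)\preccurlyeq\frac{1}{\gamma(2-\gamma R^{2})}I$. This follows from the one-step dissipation $E[\|(I-\gamma xx^{T})v\|^{2}]\leq\|v\|^{2}-\gamma(2-\gamma R^{2})\|v\|_{H}^{2}$, itself a consequence of expansion and \eqref{eq:Rdef} in the form $E[\|x\|^{2}xx^{T}]\preccurlyeq R^{2}H$; telescoping along the noiseless recursion gives the identity. In parallel, I would establish a uniform matrix bound $\Sigma(u)\preccurlyeq\Sigma_{\infty}$ with $\tr(\Sigma_{\infty})$ of order $\gamma\sigma^{2}d/(2-\gamma R^{2})+\|\theta_{0}-\theta^{*}\|^{2}$, starting from the standard decomposition $\theta_{u+1}-\theta^{*}=(I-\gamma x_{u}x_{u}^{T})(\theta_{u}-\theta^{*})+\gamma\epsilon_{u}x_{u}$ with $E[\epsilon_{u}x_{u}]=0$ by \eqref{eq:NormalEq} and $E[\epsilon_{u}^{2}x_{u}x_{u}^{T}]\preccurlyeq\sigma^{2}H$ by \eqref{eq:defSigma}, combined with the summation identity above applied to the noise term. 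Using $\Sigma(u)\preccurlyeq\Sigma_{\infty}$ and averaging over $t_{1}\in I_{l}$:
\[E\bigl[\|f_{k,l+1}-f_{k,l}\|_{H}^{2}\bigr]\leq\frac{1}{|I_{l}|}\tr\Bigl(\Bigl(\sum_{t=0}^{\infty}T^{t}(H)\Bigr)\Sigma_{\infty}\Bigr)\leq\frac{\tr(\Sigma_{\infty})}{|I_{l}|\,\gamma(2-\gamma R^{2})},\]
which gives $6c/(k2^{l})$ for $l\geq 1$ (since $|I_{l}|=k2^{l}$) and, using $|I_{0}|=k-s(k)\geq s(k)$ from $\alpha\leq 1/2$, yields $4c/s(k)$ for $l=0$ after a slightly sharper tracking of constants.

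The main obstacle is the uniform bound $\Sigma(u)\preccurlyeq\Sigma_{\infty}$: a naive Lyapunov argument on $E[\|\theta_{u}-\theta^{*}\|^{2}]$ produces a contraction rate proportional to $\lambda_{\min}(H)$, which would be fatal for the paper's goal of only poly-logarithmic dependence on $\lambda_{\min}$. The fix is to work in matrix form (controlling $E[(\theta_{u}-\theta^{*})(\theta_{u}-\theta^{*})^{T}]$ in the Loewner order) and to leverage the summation identity of the previous step rather than a pointwise contraction, which keeps the bound independent of $\lambda_{\min}$ with only the expected $\sigma^{2}d$ factor.
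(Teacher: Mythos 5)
Your architecture is close in spirit to the paper's and most of the ingredients are right: the coupling through shared samples giving $f_{k,l+1}-f_{k,l}=M_{t_1}\delta_{-t_1}$ with $M_{t_1}$ independent of $\delta_{-t_1}\sim\theta_{t_2-t_1}-\theta_0$, the conditional identity $E\bigl[\|f_{k,l+1}-f_{k,l}\|_{H}^{2}\mid t_1,t_2\bigr]=\tr\bigl(T^{t_1}(H)\Sigma(t_2-t_1)\bigr)$, the summation identity $\sum_{s\ge0}T^{s}(H)\preccurlyeq(\gamma(2-\gamma R^{2}))^{-1}I$ (which is exactly \eqref{eq:upperBoundMatrixExpectations} with $A=\gamma(2-\gamma R^{2})H$), and the insistence on Loewner-order control of the noise covariance rather than a pointwise contraction (the paper's Lemma \ref{le:matrixExpectation}), which is indeed what keeps $\lambda_{\min}$ out of the bound.

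The gap is the step ``$\Sigma(u)\preccurlyeq\Sigma_{\infty}$ with $\tr(\Sigma_{\infty})$ of order $\gamma\sigma^{2}d/(2-\gamma R^{2})+\|\theta_{0}-\theta^{*}\|^{2}$''. Writing $v=\theta_{0}-\theta^{*}$ and introducing the auxiliary chain $\beta$ started at $\theta^{*}$, one has $\theta_{u}-\theta_{0}=M_{u}v+(\beta_{u}-\theta^{*})-v$, so $\Sigma(u)$ contains the term $E[M_{u}vv^{T}M_{u}^{T}]=T^{u}(vv^{T})$; your own construction of $\Sigma_{\infty}$ via the recursion $\theta_{u+1}-\theta^{*}=(I-\gamma x_{u}x_{u}^{T})(\theta_{u}-\theta^{*})+\gamma\epsilon_{u}x_{u}$ produces exactly this term as well. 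The noise part does admit the uniform majorant $\tfrac{\gamma\sigma^{2}}{2-\gamma R^{2}}I$, but nothing prevents $T$ from moving the rank-one matrix $vv^{T}$ through many directions as $u$ varies, so a single Loewner majorant of $\{T^{u}(vv^{T}):u\ge0\}$ generically has trace of order $d\|v\|^{2}$ rather than $\|v\|^{2}$ --- an extra factor of $d$ on the initial-condition term that the constant $c$ does not contain, so the stated bounds $4c/s(k)$ and $6c/(k2^{l})$ would not follow. The repair is not to majorize this term uniformly but to let it recombine with $M_{t_1}$: since $M_{t_1}$ and the product defining $T^{t_2-t_1}(vv^{T})$ are independent, $\tr\bigl(T^{t_1}(H)\,T^{t_2-t_1}(vv^{T})\bigr)=v^{T}T^{t_2}(H)v$ (equivalently $M_{t_1}M'\sim M_{t_2}$, or use that $T$ is self-adjoint for the trace inner product), and averaging over $t_2$ with the same summation identity gives $\|v\|^{2}/(\gamma(2-\gamma R^{2})k2^{l+1})$ with no $d$. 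This recombination is precisely what the paper's three-term decomposition $f_{k,l+1}-f_{k,l}=(\theta_{-\tau(k,l+1):0}-\beta_{-\tau(k,l+1):0})+(\beta_{-\tau(k,l+1):0}-\beta_{-\tau(k,l):0})+(\beta_{-\tau(k,l):0}-\theta_{-\tau(k,l):0})$ accomplishes; with that fix your argument reproduces the lemma.
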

The bounds in Lemma~\ref{le:upperBoundDiffY} do not depend on the smallest eigenvalue \( \lambda_{\min}\) of \(H\), and the second one decreases proportionally  to \(k2^{l}\). Lemma~\ref{le:upperBoundDiffYExp} provides an alternative bound that decreases exponentially with \(k2^{l}\) but depends on \( \lambda_{\min}\). Let \begin{displaymath}
\xi:=\gamma( 2 -\gamma R^{2}) \lambda_{\min}.
\end{displaymath}        
  
\begin{lemma}\label{le:upperBoundDiffYExp}
We have \(\xi\leq1\) and, for  \(k\ge2\) and  \(l\geq0\), \begin{equation*}
E(||f_{k,l+1}-f_{k,l}||_{H}^{2})\leq6c\exp(-\xi s( k)2^{l}).
\end{equation*}
\end{lemma}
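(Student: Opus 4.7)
The plan has three stages: show $\xi \leq 1$, couple $f_{k,l+1}$ and $f_{k,l}$ via common randomness and exploit a Euclidean-norm contraction of the SGD update operator, then bound the initial perturbation by a multiple of $c$.

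For $\xi \leq 1$: the concave quadratic $\gamma \mapsto \gamma(2-\gamma R^2)$ attains its maximum $1/R^2$ at $\gamma = 1/R^2$, so $\gamma(2-\gamma R^2) \leq 1/R^2$, and \eqref{eq:HR2I} gives $\lambda_{\min}\leq R^2$, hence $\xi \leq 1$.

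Following \eqref{eq:recXim}, set $u_t := \theta_{-\tau(k,l+1):t}$, $v_t := \theta_{-\tau(k,l):t}$, and $\delta_t := u_t - v_t$ for $t \in \{-\tau(k,l),\ldots,0\}$, so that $u_0 = f_{k,l+1}$, $v_0 = f_{k,l}$, $v_{-\tau(k,l)} = \theta_0$, and $\delta_{t+1} = (I - \gamma x_t x_t^T)\delta_t$. Since $\delta_{-\tau(k,l)}$ uses only $(x_t, y_t)$ with $t \in [-\tau(k,l+1), -\tau(k,l)-1]$, it is independent (given $\tau(k,l),\tau(k,l+1)$) of the propagator $M := \prod_{t=-\tau(k,l)}^{-1}(I - \gamma x_t x_t^T)$; by shift-invariance, conditional on $m := \tau(k,l+1)-\tau(k,l)$, $\delta_{-\tau(k,l)}$ is distributed as $\eta_m - \eta_0$, where $\eta_t := \theta_t - \theta^*$. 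Defining $T(A):=E[(I-\gamma xx^T)A(I-\gamma xx^T)]$, the central contraction is $T(I) = I - 2\gamma H + \gamma^2 E[\|x\|^2 xx^T] \preccurlyeq I - \gamma(2-\gamma R^2) H \preccurlyeq (1-\xi) I$, using \eqref{eq:Rdef} and $H\succcurlyeq \lambda_{\min} I$. Linearity and monotonicity of $T$ give $T^s(I) \preccurlyeq (1-\xi)^s I$, and $H \preccurlyeq R^2 I$ from \eqref{eq:HR2I} then yields $T^s(H) \preccurlyeq R^2(1-\xi)^s I \leq R^2 \exp(-\xi s)\, I$. A standard computation gives $E[M^T H M \mid \tau(k,l)] = T^{\tau(k,l)}(H)$, so
\[
E[\|\delta_0\|_H^2 \mid \tau(k,l),\tau(k,l+1), \delta_{-\tau(k,l)}] = \delta_{-\tau(k,l)}^T T^{\tau(k,l)}(H)\, \delta_{-\tau(k,l)} \leq R^2 \exp(-\xi\tau(k,l))\, \|\delta_{-\tau(k,l)}\|^2.
\]
Since $\alpha \leq 1/2$ forces $\tau(k,l) \geq s(k) 2^l$ (checked separately for $l = 0$ and $l \geq 1$), taking expectations gives $E[\|f_{k,l+1}-f_{k,l}\|_H^2] \leq R^2\exp(-\xi s(k) 2^l)\, E[\|\delta_{-\tau(k,l)}\|^2]$.

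The remaining step---and the main technical obstacle---is the uniform bound $R^2 E[\|\delta_{-\tau(k,l)}\|^2] \leq 6c$. After $\|\eta_m-\eta_0\|^2 \leq 2(\|\eta_m\|^2 + \|\eta_0\|^2)$ and $R^2\|\eta_0\|^2 \leq \|\eta_0\|^2/(\gamma(2-\gamma R^2)) \leq c$ (using $R^2\gamma(2-\gamma R^2) \leq 1$), it suffices to bound $R^2 E[\|\eta_m\|^2]$ uniformly in $m$. A naive trace inequality introduces $1/\lambda_{\min}$-dependence, so I would instead track $C_t := E[\eta_t\eta_t^T]$ in the semidefinite order, decomposing $C_t = C_t^{(0)} + C_t^{(1)}$ into an initial-condition part $C_t^{(0)} := T^t(\eta_0\eta_0^T)$ and a stochastic part $C_t^{(1)}$. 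The key identity $T((\gamma\sigma^2/(2-\gamma R^2))\,I) + \gamma^2 \sigma^2 H \preccurlyeq (\gamma\sigma^2/(2-\gamma R^2))\,I$, verified by direct computation using $T(I) \preccurlyeq I - \gamma(2-\gamma R^2) H$, inductively forces $C_t^{(1)} \preccurlyeq (\gamma\sigma^2/(2-\gamma R^2))\,I$, giving $R^2 \tr(C_t^{(1)}) \leq d\sigma^2/(2-\gamma R^2)^2$, while $R^2 \tr(C_t^{(0)}) \leq R^2\|\eta_0\|^2 \leq \|\eta_0\|^2/(\gamma(2-\gamma R^2))$; summing yields $R^2 \tr(C_m) \leq c$ as required. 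Without homoscedasticity, a cross term $-\gamma^2(w\mu_t^T + \mu_t w^T)$ (with $w := E[\epsilon\|x\|^2 x]$ bounded by $\sigma R^3$ via Cauchy--Schwarz and Assumption~1) appears in the recursion for $C_t$ and must be absorbed by a further Cauchy--Schwarz step, which I expect to inflate the constant $4c$ at most to $6c$.
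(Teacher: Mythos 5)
Your overall strategy is the paper's: couple \(f_{k,l+1}\) and \(f_{k,l}\) through the shared inputs \((x_t,y_t)\), \(-\tau(k,l)\le t\le -1\), observe that the difference evolves by pure multiplication by \(I-\gamma x_tx_t^T\), contract its second moment by a factor \(1-\xi\le e^{-\xi}\) per step over \(\tau(k,l)\ge s(k)2^l\) steps, and reduce everything to a bound on \(R^2E(||\theta_m-\theta_0||^2)\) that is uniform in \(m\). That part — including your proof that \(\xi\le1\), the inequality \(\tau(k,l)\ge s(k)2^l\), and the independence of the initial perturbation from the propagator — is correct and matches the paper's Lemma \ref{le:distanceFinitei} in substance. (The paper contracts \(E(||\theta_{m:t}-\theta_t||^2)\) in the Euclidean norm and converts to the \(H\)-norm at the very end via \(||v||_H^2\le R^2||v||^2\); you push \(H\) through the propagator via \(T^s(H)\preccurlyeq R^2(1-\xi)^sI\). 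The two are equivalent.)

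The gap is in the step you yourself flag as the main obstacle: the uniform bound on \(R^2E(||\theta_m-\theta^*||^2)\). Tracking \(C_t:=E(\eta_t\eta_t^T)\) for \(\eta_t=\theta_t-\theta^*\) does not produce the clean recursion \(C_{t+1}=T(C_t)+\gamma^2E(\epsilon^2xx^T)\): writing \(\eta_{t+1}=P_t\eta_t+\gamma\epsilon_tx_t\) with \(\epsilon_t=y_t-x_t^T\theta^*\) and using only \(E(\epsilon x)=0\), the surviving cross term is \(-2\gamma^2E\bigl(\epsilon\,(x^T\mu_t)\,xx^T\bigr)\) with \(\mu_t=E(\eta_t)\), not \(-\gamma^2(w\mu_t^T+\mu_tw^T)\) with \(w=E(\epsilon||x||^2x)\) as you wrote. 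In either form it does not vanish without homoscedasticity, it depends on the drift \(\mu_t\), and absorbing it by Cauchy--Schwarz injects an extra \(||\mu_t||^2\)-dependent term into the recursion whose effect on the final constant you have not computed; ``I expect the constant to inflate at most to \(6c\)'' does not establish the stated bound. The paper sidesteps this entirely by introducing the auxiliary chain \(\beta_t\) satisfying the same recursion \eqref{eq:defBeta} but started at \(\theta^*\), so that \(E(\beta_t-\theta^*)=0\) for all \(t\) by \eqref{eq:Ebetak}, the cross terms vanish identically, and Lemma \ref{le:matrixExpectation} gives \(E(||\beta_t-\theta^*||^2)\le\gamma\sigma^2d/(2-\gamma R^2)\), i.e.\ \eqref{eq:upperBoundNormYk}; the initial-condition error \(\theta_t-\beta_t=M_t(\theta_0-\theta^*)\) is a pure product of contractions, giving \eqref{eq:diffXkVk}. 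Splitting the vector \(\theta_m-\theta_0\) into the three pieces \((\theta_m-\beta_m)+(\beta_m-\theta^*)+(\theta^*-\theta_0)\) and using \(||U+V+W||^2\le3(||U||^2+||V||^2+||W||^2)\) then delivers exactly \(6c/R^2\). If you replace your matrix-level decomposition of \(C_t\) by this vector-level decomposition, the rest of your argument goes through unchanged.
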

\subsubsection{Construction of \(Z_{k}\) and  of \(\hat f_{k}\)}\label{subsub:ZkConstruction}
Fix \(\delta>1\) and set \begin{displaymath}
p_{l}=\frac{1}{2^{l}(l+1)^{\delta}}-\frac{1}{2^{l+1}(l+2)^{\delta}},
\end{displaymath} for \(l\ge0\). Thus,  \((p_{l},l\geq0)\) is a probability distribution and, for \(l\ge0\), we have \begin{equation}\label{eq:plbounds}
1\leq2^{l+1}(l+1)^{\delta}p_{l}\le2.
\end{equation}  For \(k\ge2\), let\begin{equation*}
Z_{k}:=\frac{f_{k,N+1}-f_{k,N}}{p_{N}},
\end{equation*}
 where  \(N\in\mathbb{N}\)  is an integer-valued  random variable independent of  \(((x_{t},y_{t}), t\in\mathbb{Z})\)  such that \(\Pr(N=l)=p_{l}\)  for \(l\geq0\). Lemma~\ref{le:ZkProperties} provides  bounds   on   \(E(||Z_{k}^{2}||_{H})\) and shows that \(Z_{k}\) is  an unbiased estimator of the negated  bias \(\theta^{*}-E(\bar\theta_k)\) of \(\bar\theta_k\).    
 
\begin{lemma}\label{le:ZkProperties} For \(k\ge2\), the random vector \(Z_{k}\) is square-integrable, with  \(E(Z_{k})=\theta^{*}-E(\bar\theta_k)\) and\begin{equation}
\label{eq:ZNSecondMomentGeneralCase}
kE(||Z_{k}||^{2}_{H})\le \frac{11c}{\alpha}+6c\left(4\ln\left(\frac{4\delta+8}{\alpha\xi}\right)\right)^{\delta+1}.
\end{equation}
 Furthermore, if  \begin{equation}\label{eq:ConditionkforExpoDecay}
k\ge\frac{4\delta+8}{\alpha\xi}\ln\left(\frac{4\delta+8}{\alpha\xi}\right),
\end{equation} then\begin{equation}\label{eq:kZkexponentialDec}
kE(||Z_{k}||^{2}_{H})\leq \frac{24c}{k^{\delta+1}}\exp(-\alpha\xi   k/2).
\end{equation}      
\end{lemma}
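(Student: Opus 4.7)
\textbf{Proof plan for Lemma~\ref{le:ZkProperties}.}

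The plan is to reduce everything to Corollary~\ref{cor:Glynn} applied with $\Sigma=H$. Set $Y_{l}:=f_{k,l+1}-f_{k,0}$ for $l\ge 0$ (so that the required convention $Y_{-1}=0$ is automatic), and note that $Y_{l}-Y_{l-1}=f_{k,l+1}-f_{k,l}$ for every $l\ge 0$. By construction $Z_{k}=(Y_{N}-Y_{N-1})/p_{N}$, and $N$ is independent of $((x_{t},y_{t}),t\in\mathbb{Z})$, hence of $(Y_{l},l\ge 0)$. Each $f_{k,l}$ is square-integrable (a fact noted after \eqref{eq:defFkl}), so the same holds for $Y_{l}$. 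Using the identification $E(f_{k,0})=E(\bar\theta_{k})$ from \eqref{eq:fklSimilarity} together with the fact, established in the paragraph after \eqref{eq:fklSimilarity}, that $E(f_{k,l})\to\theta^{*}$, we get $\mu_{Y}=\theta^{*}-E(\bar\theta_{k})$. Once the summability hypothesis of Corollary~\ref{cor:Glynn} is verified, it will follow that $Z_{k}$ is square-integrable, $E(Z_{k})=\theta^{*}-E(\bar\theta_{k})$, and
\begin{equation*}
E(\|Z_{k}\|_{H}^{2})\;=\;\sum_{l=0}^{\infty}\frac{E(\|f_{k,l+1}-f_{k,l}\|_{H}^{2})}{p_{l}}.
\end{equation*}

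Combining this identity with the upper bound $1/p_{l}\le 2^{l+1}(l+1)^{\delta}$ provided by \eqref{eq:plbounds} reduces everything to estimating $\sum_{l}2^{l+1}(l+1)^{\delta}E(\|f_{k,l+1}-f_{k,l}\|_{H}^{2})$, for which we have the two complementary bounds from Lemma~\ref{le:upperBoundDiffY} (polynomial decay in $k2^{l}$, valid for $l\ge 1$, with a separate $4c/s(k)=4c/(\alpha k)$ bound for $l=0$) and Lemma~\ref{le:upperBoundDiffYExp} (double-exponential decay in $k2^{l}$, but proportional to $6c$ in the prefactor). The polynomial bound is useful for small $l$ where the exponential bound's exponent is near zero; the exponential bound kicks in once $\alpha\xi k 2^{l}$ is of order $\log(1/(\alpha\xi))$ or larger, at which point the factor $2^{l+1}(l+1)^{\delta}$ from $1/p_{l}$ is dominated.

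For the general bound \eqref{eq:ZNSecondMomentGeneralCase} I would proceed by splitting the sum at $l^{*}:=\lceil\log_{2}(L)\rceil$ where $L:=(4\delta+8)/(\alpha\xi)$. For the $l=0$ term, Lemma~\ref{le:upperBoundDiffY} and $1/p_{0}\le 2$ give at most $8c/(\alpha k)$. For $1\le l\le l^{*}$, the polynomial bound and $1/p_{l}\le 2^{l+1}(l+1)^{\delta}$ yield a contribution bounded by $(12c/k)\sum_{l=1}^{l^{*}}(l+1)^{\delta}$, which after a routine integral estimate is at most $(6c/k)\cdot(4\ln L)^{\delta+1}$, by the choice of $l^{*}$ and slack built into the constant $4\delta+8$. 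For $l>l^{*}$ the exponential bound of Lemma~\ref{le:upperBoundDiffYExp} dominates the factor $2^{l+1}(l+1)^{\delta}$ so strongly (the exponent doubles with each $l$) that the remaining tail is $o(c/k)$ and is absorbed into the slack between $8c/\alpha$ and $11c/\alpha$; the summability hypothesis of Corollary~\ref{cor:Glynn} is verified along the way. Multiplying by $k$ gives \eqref{eq:ZNSecondMomentGeneralCase}.

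For the sharper bound \eqref{eq:kZkexponentialDec}, I would abandon the polynomial bound entirely and use Lemma~\ref{le:upperBoundDiffYExp} for every $l\ge 0$. The hypothesis \eqref{eq:ConditionkforExpoDecay} says exactly that $\alpha\xi k\ge (4\delta+8)\ln((4\delta+8)/(\alpha\xi))\ge 2(\delta+1)\ln k$ after a short manipulation, so that $k^{\delta+1}\le \exp(\alpha\xi k/2)$. This lets me bound each term $2^{l+1}(l+1)^{\delta}\exp(-\alpha\xi k 2^{l})$ by $\exp(-\alpha\xi k/2)/k^{\delta+1}$ times a factor that is summable in $l$ (because of the double exponential in the exponent, the $l=0$ term dominates and the geometric tail contributes at most a constant multiple). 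This gives \eqref{eq:kZkexponentialDec} with the stated constant $24$. The main technical obstacle throughout is calibrating the splitting point $l^{*}$ and the slack in the constants $11$, $24$, and $4\delta+8$ so that the double-exponential tail really is absorbed and the logarithmic term appears with exponent exactly $\delta+1$; once these are fixed, the estimates are routine.
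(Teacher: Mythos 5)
Your overall strategy coincides with the paper's: telescope via $Y_{l}=f_{k,l+1}-f_{k,0}$, apply Corollary~\ref{cor:Glynn} with $\Sigma=H$, and control $\sum_{l}E(\|f_{k,l+1}-f_{k,l}\|_{H}^{2})/p_{l}$ by using Lemma~\ref{le:upperBoundDiffY} for small $l$ and Lemma~\ref{le:upperBoundDiffYExp} for large $l$. However, your tail estimate for \eqref{eq:ZNSecondMomentGeneralCase} has a genuine gap. Writing $\phi=(4\delta+8)/(\alpha\xi)$, you split at $l^{*}=\lceil\log_{2}\phi\rceil$ and assert that the tail $l>l^{*}$ is $o(c/k)$. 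But at $l=l^{*}+1$ the exponent in Lemma~\ref{le:upperBoundDiffYExp} is only $\alpha\xi k2^{l}\approx 2k(4\delta+8)$ --- a quantity that does \emph{not} grow as $\xi\to0$ --- while the prefactor $2^{l+1}(l+1)^{\delta}$ coming from $1/p_{l}$ is of order $\phi(\ln\phi)^{\delta}$, which is unbounded as $\xi\to0$. The first tail term alone is therefore of order $c\,\phi\,(\ln\phi)^{\delta}e^{-2k(4\delta+8)}$ and cannot be absorbed into a slack of $3c/(\alpha k)$ when $\xi$ is small. The exponential bound only dominates the prefactor once $k2^{l}\ge\phi\ln\phi$, because that is precisely the regime in which $(k2^{l})^{\delta+2}\le\exp(\alpha\xi k2^{l}/2)$, the inequality that actually swallows $1/p_{l}$. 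This is why the paper pushes the split out to the smallest $l_{0}$ with $2^{l_{0}}\ge\phi^{2}$ (roughly $2\log_{2}\phi$ rather than $\log_{2}\phi$) and uses the polynomial bound of Lemma~\ref{le:upperBoundDiffY} on the entire range $1\le l<l_{0}$; that doubling of the split point is exactly where the constant $4$ in $(4\ln\phi)^{\delta+1}$ comes from.

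Two further calibration problems affect your proof of \eqref{eq:kZkexponentialDec}. The intermediate inequality $(4\delta+8)\ln\phi\ge 2(\delta+1)\ln k$ that you extract from \eqref{eq:ConditionkforExpoDecay} is false in general (take $\phi=24$ and $k$ large); the correct route is that $u\ge\phi\ln\phi$ implies $u/\ln u\ge\phi/2$, hence $(\delta+2)\ln u\le\alpha\xi u/2$, i.e.\ $u^{\delta+2}\le\exp(\alpha\xi u/2)$, applied with $u=k2^{l}$ for every $l\ge0$ (the hypothesis guarantees $k2^{l}\ge\phi\ln\phi$ for all $l$). Note also that you need the exponent $\delta+2$ rather than $\delta+1$: the target is $E(\|Z_{k}\|_{H}^{2})\le 24c\,k^{-(\delta+2)}\exp(-\alpha\xi k/2)$, and with only $k^{\delta+1}\le\exp(\alpha\xi k/2)$ your bound falls short by a factor of $k$.
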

Note that the bound \eqref{eq:kZkexponentialDec} is tighter than the bound \eqref{eq:ZNSecondMomentGeneralCase} for large values of \(k\). On the other hand, the lower bound on \(k\)  
required by \eqref{eq:ConditionkforExpoDecay} can be very large if \(\lambda_{\min}\) is small.

Fix \(q\in(0,1]\) and let  \(Q\) be a Bernoulli random variable independent of  \(\bar\theta_k\) with \(\Pr(Q=1)=q\). Set \(\hat f_{k}:=\bar\theta_k+q^{-1}QZ'_{k}\), where \(Z'_{k}\) is a copy of \(Z_{k}\) independent of \((\bar\theta_{k},Q)\). Throughout
the rest of the paper, the running time is measured by the number of simulated copies of \(x\). Let \(T_{k}\) (resp. \(\hat T_{k}\)) be the expected time required to sample \(Z_{k}\) (resp. \(\hat f_{k}\)). Theorem \ref{th:BoundhHatfk} gives a  bound  on \(T_{k}\)  and on  \(\hat T_{k}\)
and shows that  \(\hat f_{k}\) is an unbiased estimator of \(\theta^{*}\). Moreover, it provides a bound on the expected excess risk \(E(L(\hat f_{k}))-L(\theta^{*})\).       
\begin{theorem}\label{th:BoundhHatfk}
Assume that \((x,y)\) is square-integrable, that  \(((x_{t},y_{t}), t\in\mathbb{Z})\) is a sequence of independent copies of  \((x,y)\), and that Assumption 1 holds.  Assume further  that \(0<\gamma<2/R^{2}\) and \(0<\alpha\le1/2\), that  \(q\in(0,1]\), and that \(H\) is positive-definite. Then, for \(k\geq2\), we have \(T_{k}\leq4\delta k/(\delta-1)\) and  \(\hat T_{k}\leq k +qT_{k}\).  Furthermore, we have    \(E(\hat f_{k})=\theta^{*}\) and\begin{equation}\label{eq:optGapRMLMC}
E(L(\hat f_{k}))-L(\theta^{*}) \le\frac{1}{2qk}\left(\frac{27c}{\alpha}+6c\left(4\ln\left(\frac{4\delta+8}{\alpha\xi}\right)\right)^{\delta+1}\right).
\end{equation}
\end{theorem}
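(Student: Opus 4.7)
The plan is to establish the four claims of Theorem~\ref{th:BoundhHatfk} in sequence: the bound on \(T_k\), the bound on \(\hat T_k\), the unbiasedness \(E(\hat f_k)=\theta^*\), and the excess-risk inequality~\eqref{eq:optGapRMLMC}. The first three are direct bookkeeping together with Lemma~\ref{le:ZkProperties}, while the fourth combines the bias--variance identity~\eqref{eq:BiasVarianceOptGap} with the variance bound of Lemma~\ref{le:BiasedlinReg} and the second-moment bound of Lemma~\ref{le:ZkProperties}.

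For the time bound, I would observe that, conditional on \(N=l\), simulating \(f_{k,l}\) and \(f_{k,l+1}\) via the common shift construction~\eqref{eq:recXim} reuses all samples indexed in \(\{-\tau(k,l),\dots,-1\}\), so the number of copies of \(x\) needed to produce \(Z_k\) equals \(\tau(k,N+1)\). Since \(\tau(k,l+1)<4k\cdot 2^l\) and \(p_l\le 2^{-l}(l+1)^{-\delta}\) by~\eqref{eq:plbounds}, taking expectation gives \(T_k\le 4k\sum_{l\ge 0}(l+1)^{-\delta}\le 4k(1+\int_1^\infty x^{-\delta}\,dx)=4k\delta/(\delta-1)\). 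The bound \(\hat T_k\le k+qT_k\) then follows directly from the definition of \(\hat f_k\), since computing \(\bar\theta_k\) uses at most \(k\) samples and \(Z'_k\) is generated only when \(Q=1\). For unbiasedness, the mutual independence of \(\bar\theta_k\), \(Q\) and \(Z'_k\) together with \(E(Z'_k)=\theta^*-E(\bar\theta_k)\) from Lemma~\ref{le:ZkProperties} yields \(E(\hat f_k)=E(\bar\theta_k)+q^{-1}\cdot q\cdot(\theta^*-E(\bar\theta_k))=\theta^*\).

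For the excess risk, applying~\eqref{eq:BiasVarianceOptGap} with \(\psi=\hat f_k\) and using \(E(\hat f_k)=\theta^*\) reduces~\eqref{eq:optGapRMLMC} to bounding \(E(||\hat f_k-\theta^*||_H^2)\). Expanding \(\hat f_k-\theta^*=(\bar\theta_k-\theta^*)+q^{-1}QZ'_k\) under \(||\cdot||_H^2\) and invoking mutual independence, the cross term evaluates to \(-2||E(\bar\theta_k)-\theta^*||_H^2\); combining it with the decomposition \(E(||\bar\theta_k-\theta^*||_H^2)=||E(\bar\theta_k)-\theta^*||_H^2+\var_H(\bar\theta_k)\) leaves a non-positive residual \(-||E(\bar\theta_k)-\theta^*||_H^2\), which we drop to obtain \(E(||\hat f_k-\theta^*||_H^2)\le\var_H(\bar\theta_k)+q^{-1}E(||Z_k||_H^2)\). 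I would then substitute~\eqref{eq:UpperBoundVarHBarTheta}, rewritten as \(\var_H(\bar\theta_k)\le 8(2-\gamma R^2)c/(k-s(k))\) via the identity \((2-\gamma R^2)c=\sigma^2 d/(2-\gamma R^2)+||\theta_0-\theta^*||^2/\gamma\), together with the bound~\eqref{eq:ZNSecondMomentGeneralCase}. The conditions \(\alpha\le 1/2\) and \(2-\gamma R^2<2\le 1/\alpha\) give \(\var_H(\bar\theta_k)\le 16c/(\alpha k)\), and using \(q\le 1\) to absorb this term into the common \(1/(qk)\) prefactor yields \(16c/\alpha+11c/\alpha=27c/\alpha\), producing~\eqref{eq:optGapRMLMC} after dividing by two.

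The main point requiring care is the cross-term cancellation in the excess-risk expansion: the unbiasedness of \(\hat f_k\) translates into a negative contribution that exactly removes the bias component of \(E(||\bar\theta_k-\theta^*||_H^2)\), leaving only \(\var_H(\bar\theta_k)\) inflated by \(q^{-1}E(||Z_k||_H^2)\). Everything else is elementary summation (for \(T_k\)) or arithmetic simplification of the constants supplied by Lemmas~\ref{le:BiasedlinReg} and~\ref{le:ZkProperties}.
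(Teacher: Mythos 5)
Your proposal is correct and follows essentially the same route as the paper: the same counting argument for \(T_{k}\) via \eqref{eq:plbounds} and the bound \(\tau(k,l+1)\le 4k2^{l}\), the same use of independence for \(E(\hat f_{k})=\theta^{*}\), and the same combination of \eqref{eq:BiasVarianceOptGap}, \eqref{eq:UpperBoundVarHBarTheta} and \eqref{eq:ZNSecondMomentGeneralCase} for the excess risk. Your explicit cross-term cancellation leaving the residual \(-||E(\bar\theta_{k})-\theta^{*}||^{2}_{H}\) is algebraically identical to the paper's step of writing \(\var_{H}(\hat f_{k})=\var_{H}(\bar\theta_{k})+\var_{H}(q^{-1}QZ'_{k})\) and then bounding \(\var_{H}(q^{-1}QZ'_{k})\le E(||q^{-1}QZ'_{k}||^{2}_{H})=q^{-1}E(||Z_{k}||^{2}_{H})\).
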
 
For fixed values of \(\alpha\) and \(q\) satisfying the conditions of Theorem \ref{th:BoundhHatfk}, and for \(0<\gamma\le 1/R^{2}\), the bound on the expected excess risk of  \(\hat f_{k}\) provided by \eqref{eq:optGapRMLMC}  is equal to the bound on the expected excess risk of   \(\bar\theta_{k}\) given by \eqref{eq:thMainPositiveSemiDefinite}, up to a poly-logarithmic factor. \subsection{Unbiased estimator with average start}\label{sub:avgStart}
This subsection describes an average start version of \(Z_{k}\) and of \(\hat f_{k}\) obtained by replacing \(\theta_{-\tau (k,l):0}\) in \eqref{eq:defFkl} with the average of \(\theta_{-t}\), where \(t\) ranges over the possible values  of \(\tau (k,l)\). More precisely, for  \(k\ge2\), set \begin{equation}\label{eq:defFklAvg0}
f_{\text{avg},k,0}:=\frac{1}{k-s(k)}\sum^{k-1}_{t=s(k)}\theta_{-t:0},
\end{equation} and, for \(l\ge1\), set\begin{equation}\label{eq:defFklAvgl}
f_{\text{avg},k,l}:=\frac{1}{k2^{l}}\sum^{k(2^{l+1}-1)-1}_{t=k(2^{l}-1)}\theta_{-t:0}.
\end{equation} 
For \(k\ge2\), set\begin{equation*}
Z_{\text{avg},k}:=\frac{f_{\text{avg},k,N+1}-f_{\text{avg},k,N}}{p_{N}},
\end{equation*}
where \(N\) is defined as in Section \ref{subsub:ZkConstruction}. Set \(\hat f_{\text{avg},k}:=\bar\theta_k+q^{-1}QZ'_{\text{avg},k}\), where \(q\) and  \(Q\) are defined as in Section \ref{subsub:ZkConstruction} and \(Z'_{\text{avg},k}\) is a copy of \(Z_{\text{avg},k}\) independent of \((\bar\theta_{k},Q)\). 

For \(k\geq2\) and \(l\geq0\) and any \(j\geq k(2^{l+1}-1)\), we show that \begin{equation}\label{eq:conditionalExpAveragefkl}
f_{\text{avg},k,l}=E(f_{k,l}|(x_{t},y_{t}),-j\le t\le0).
\end{equation}To prove  \eqref{eq:conditionalExpAveragefkl}, consider first the case where \(l=0\). We  have  \begin{eqnarray*}
E(f_{k,0}|(x_{t},y_{t}),-j\le t\le0)&=&E(\theta_{-\tau (k,0):0}|(x_{t},y_{t}),-j\le t\le0)\\
&=&E\left(\frac{1}{k-s(k)}\sum^{k-1}_{i=s(k)}\theta_{-i:0}|(x_{t},y_{t}),-j\le t\le0\right)\\
&=&E( f_{\text{avg},k,0}|(x_{t},y_{t}),-j\le t\le0)\\
&=&f_{\text{avg},k,0}.
\end{eqnarray*} 
The first and third equations follow from the definitions of \(f_{k,0}\) and \( f_{\text{avg},k,0}\).  The  second equation follows from the independence of  \(\tau (k,0)\) and   \((x_{t},y_{t})\), \(t\in\mathbb{Z}\).   The last equation holds because, by \eqref{eq:recXim} and \eqref{eq:defFklAvgl}, \(f_{\text{avg},k,l}\) is a deterministic function of  \((x_{t},y_{t})\), \(-j\le t\le0\). Thus  \eqref{eq:conditionalExpAveragefkl}  holds if \(l=0\), and the case where \(l\ge1\) can be handled in a similar way. 

Taking expectations in \eqref{eq:conditionalExpAveragefkl} and using the tower law implies that \(E(f_{\text{avg},k,l})=E(f_{k,l})\) for    \(k\geq2\) and \(l\geq0\).  Moreover, it follows from \eqref{eq:conditionalExpAveragefkl} that, for \(k\geq2\) and \(l\geq0\), we have \begin{equation*}
f_{\text{avg},k,l+1}-f_{\text{avg},k,l}=E(f_{k,l+1}-f_{k,l}|(x_{t},y_{t}),-k2^{l+2}\le t\le0).
\end{equation*} Since \(E(||E(U|F)||_{H}^{2})\le E(||U||_{H}^{2})\) for any square-integrable \(d\)-dimensional column vector \(U\) and any random variable \(F\), it follows that, for \(k\geq2\) and \(l\geq0\),\begin{equation*}
E(||f_{\text{avg},k,l+1}-f_{\text{avg},k,l}||^{2}_{H})\leq E(||f_{k,l+1}-f_{k,l}||^{2}_{H}).
\end{equation*}  Hence  Lemmas~\ref{le:upperBoundDiffY} and Lemma~\ref{le:upperBoundDiffYExp}    remain valid if the \(f_{k,l}\)'s are replaced with \(f_{\text{avg},k,l}\). The proof of Lemma \eqref{le:ZkProperties} also shows that it still holds if \(Z_{k}\) is replaced with \(Z_{\text{avg},k}\). 

Note that \(f_{\text{avg},k,0}\) can be simulated via \eqref{eq:defFklAvg0} and \eqref{eq:recXim} by sampling \((x_{t},y_{t})\), where \(t\) ranges from \(1-k\) to \(-1\), that is, by simulating \(k-1\) copies of \((x,y)\). However, as the number of vector operations needed to calculate \(\theta_{-t:0}\) is of order \(t\), the total number of vector operations required to calculate  \(f_{k,0}\)  via \eqref{eq:defFklAvg0} and \eqref{eq:recXim} is of order \(k^{2}\). Similarly, for \(l\geq1\),   \(f_{\text{avg},k,l}\) can be calculated via \eqref{eq:defFklAvgl} and \eqref{eq:recXim} by sampling \(O(k2^{l})\) copies of \((x,y)\) with a total number of  \(O(k^{2}2^{2l})\) vector operations. Proposition \ref{pr:efficientRunningTime} shows that, for \(l\geq0\),   \(f_{\text{avg},k,l}\) can be simulated   more efficiently by sampling   \(O(k2^{l})\)  copies of \((x,y)\)   with a total number of  \(O(k2^{l})\) vector operations. More generally, given \(m\le m'\leq0\), Proposition \ref{pr:efficientRunningTime} shows how to simulate  \({(m'+1-m)^{-1}}\sum^{m'}_{h=m}\theta_{h:0}\) with \(O(|m|)\) vector operations and sampling  \((x_{t},y_{t})\), \(m\leq t\leq -1\), i.e., by simulating \(|m|\) copies of \((x,y)\). The proof of Proposition \ref{pr:efficientRunningTime} relies on the affine dependence of  \(\theta_{m:t+1}\) on   \(\theta_{m:t}\) in  \eqref{eq:recXim}. 
\begin{proposition}\label{pr:efficientRunningTime}
Given \(m\le m'\leq0\), define recursively the sequence \((\eta_{t}, m\leq t \leq0)\) by setting \(\eta_{m}=\theta_{0}\) and \begin{equation}\label{eq:etaDef}\eta_{t+1}=\begin{cases}\frac{1}{t+2-m}\biggl((t+1-m)(\eta_{t}-\gamma ({ x_{t}}^T\eta_{t}-y_{t})x_{t})+\theta_{0}\biggr)  & \text{if }m\leq t\leq m'-1\ \\
\eta_{t}-\gamma ({ x_{t}}^T\eta_{t}-y_{t})x_{t} & \text{if } m'\le t\le-1.
\end{cases}\end{equation} 
Then \(\eta_{0}={(m'+1-m)^{-1}}\sum^{m'}_{h=m}\theta_{h:0}\).
\end{proposition}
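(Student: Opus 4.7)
The plan is to exploit the affine structure of the SGD recursion \eqref{eq:recXim}: the update $\theta\mapsto\theta-\gamma(x_{t}^{T}\theta-y_{t})x_{t}=(I-\gamma x_{t}x_{t}^{T})\theta+\gamma y_{t}x_{t}$ is an affine function of $\theta$, so for any family $(c_{h})$ of scalars with $\sum_{h}c_{h}=1$, the combination $\sum_{h}c_{h}\theta_{h:t}$ evolves under the same recursion as each individual $\theta_{h:t}$. In particular, the uniform average over a growing window stays consistent with the SGD iteration. This is exactly what will let us fold the many trajectories $(\theta_{h:\cdot})_{h=m}^{m'}$ into a single running vector $\eta_{t}$ that costs $O(1)$ vector operations per step.

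First I would prove, by induction on $t$, the invariant that for $m\le t\le m'$,
\begin{equation*}
\eta_{t}=\frac{1}{t+1-m}\sum_{h=m}^{t}\theta_{h:t}.
\end{equation*}
The base case $t=m$ is immediate from $\eta_{m}=\theta_{0}=\theta_{m:m}$. For the inductive step, applying the affine map from \eqref{eq:recXim} to $\eta_{t}$ yields, because the weights sum to one, $\eta_{t}-\gamma(x_{t}^{T}\eta_{t}-y_{t})x_{t}=(t+1-m)^{-1}\sum_{h=m}^{t}\theta_{h:t+1}$. Substituting this into the first branch of \eqref{eq:etaDef} and using $\theta_{0}=\theta_{t+1:t+1}$ gives
\begin{equation*}
\eta_{t+1}=\frac{1}{t+2-m}\Biggl(\sum_{h=m}^{t}\theta_{h:t+1}+\theta_{t+1:t+1}\Biggr)=\frac{1}{t+2-m}\sum_{h=m}^{t+1}\theta_{h:t+1},
\end{equation*}
which is the invariant at $t+1$. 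This advances the average correctly by incorporating the new trajectory started at time $t+1$.

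Next, once $t$ reaches $m'$, the window of trajectories is frozen and we need only push their average forward in time. A second, simpler induction (again using the affine property to commute the SGD step with the average over $h$) shows that for $m'\le t\le 0$,
\begin{equation*}
\eta_{t}=\frac{1}{m'+1-m}\sum_{h=m}^{m'}\theta_{h:t}.
\end{equation*}
Specialising to $t=0$ yields the claimed identity $\eta_{0}=(m'+1-m)^{-1}\sum_{h=m}^{m'}\theta_{h:0}$.

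The only delicate point is getting the bookkeeping of the weights $1/(t+1-m)$ and $1/(t+2-m)$ correct when a new trajectory enters, i.e.\ verifying that the coefficient in front of $(\eta_{t}-\gamma(x_{t}^{T}\eta_{t}-y_{t})x_{t})$ and the coefficient in front of $\theta_{0}$ in \eqref{eq:etaDef} are precisely those needed to convert an average over $h\in\{m,\ldots,t\}$ into an average over $h\in\{m,\ldots,t+1\}$. Everything else is routine, and the running-time claim of $O(|m|)$ vector operations follows immediately since each step of \eqref{eq:etaDef} performs a constant number of vector operations and we only sample $(x_{t},y_{t})$ for $m\le t\le -1$.
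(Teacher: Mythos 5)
Your proposal is correct and follows essentially the same route as the paper's proof: an induction on \(t\) using the affine structure of \eqref{eq:recXim} to commute the SGD update with the average, with the same case split between the growing-window phase \(m\le t\le m'-1\) and the frozen-window phase \(m'\le t\le-1\). The paper merely packages your two inductions into a single invariant \(\eta_{t}=(\min(m',t)+1-m)^{-1}\sum_{h=m}^{\min(m',t)}\theta_{h:t}\), which is a cosmetic difference.
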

Let \(T_{\text{avg},k}\) (resp. \(\hat T_{\text{avg},k}\)) be the expected time required to simulate \(Z_{\text{avg},k}\) (resp. \(\hat f_{\text{avg},k}\)) by applying Proposition \ref{pr:efficientRunningTime}. Theorem \ref{th:BoundhHatfkAvg} gives a  bound  on \(T_{\text{avg},k}\)  and on  \(\hat T_{\text{avg},k}\)
and  provides a bound on the expected excess risk of \(\hat f_{\text{avg},k}\). These bounds are the same as their counterparts in Theorem \ref{th:BoundhHatfk}.  The proof of Theorem \ref{th:BoundhHatfkAvg} is analogous to that of Theorem \ref{th:BoundhHatfk} and is omitted. \begin{theorem}\label{th:BoundhHatfkAvg}
Assume that the conditions of Theorem \ref{th:BoundhHatfk} hold. Then, for \(k\geq2\), we have \(T_{\text{avg},k}\leq4\delta k/(\delta-1)\) and  \(\hat T_{\text{avg},k}\leq k +qT_{\text{avg},k}\).  Moreover, we have    \(E(\hat f_{\text{avg},k})=\theta^{*}\) and\begin{equation*}
E(L(\hat f_{\text{avg},k}))-L(\theta^{*}) \le\frac{1}{2qk}\left(\frac{27c}{\alpha}+6c\left(4\ln\left(\frac{4\delta+8}{\alpha\xi}\right)\right)^{\delta+1}\right).
\end{equation*}
\end{theorem}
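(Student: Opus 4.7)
The plan is to mirror the proof of Theorem \ref{th:BoundhHatfk} step by step, relying on two ingredients that the excerpt has already supplied: (i) Lemma \ref{le:ZkProperties} holds verbatim when $Z_{k}$ is replaced by $Z_{\text{avg},k}$ (as noted just after \eqref{eq:conditionalExpAveragefkl}, since $E(f_{\text{avg},k,l}) = E(f_{k,l})$ by the tower law and $E(\|f_{\text{avg},k,l+1} - f_{\text{avg},k,l}\|_{H}^{2}) \le E(\|f_{k,l+1} - f_{k,l}\|_{H}^{2})$ by Jensen), and (ii) Proposition \ref{pr:efficientRunningTime}, which shows $f_{\text{avg},k,l}$ is computable from $O(k 2^{l})$ samples of $(x,y)$.

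For the running time, I would bound the cost of sampling $Z_{\text{avg},k}$ conditionally on $N=l$. Both $f_{\text{avg},k,l}$ and $f_{\text{avg},k,l+1}$ can be generated jointly using Proposition \ref{pr:efficientRunningTime} applied with $m = -(k(2^{l+2}-1)-1)$, requiring at most $k(2^{l+2}-1) < k\cdot 2^{l+2}$ copies of $(x,y)$. Then, using the upper bound $p_l \le 2^{-l}(l+1)^{-\delta}$ from \eqref{eq:plbounds},
\begin{equation*}
T_{\text{avg},k} \le \sum_{l=0}^{\infty} p_l\, k\, 2^{l+2} \le 4k \sum_{n=1}^{\infty} n^{-\delta} \le \frac{4\delta k}{\delta-1},
\end{equation*}
where the last step bounds the zeta sum by $1 + \int_1^{\infty} x^{-\delta}\,dx = \delta/(\delta-1)$. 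The bound $\hat T_{\text{avg},k} \le k + q T_{\text{avg},k}$ follows because simulating $\bar\theta_k$ costs $k$ samples and $Z'_{\text{avg},k}$ is sampled only when $Q=1$, which happens with probability $q$.

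For unbiasedness, since Lemma \ref{le:ZkProperties} transfers, $E(Z_{\text{avg},k}) = \theta^* - E(\bar\theta_k)$; together with the independence of $\bar\theta_k$, $Q$, $Z'_{\text{avg},k}$ and $E(Q)=q$, this yields $E(\hat f_{\text{avg},k}) = \theta^*$. For the excess risk, invoke \eqref{eq:BiasVarianceOptGap} in the unbiased case, reducing to a variance bound, and use the independence structure:
\begin{equation*}
\var_{H}(\hat f_{\text{avg},k}) = \var_{H}(\bar\theta_k) + q^{-1} E(\|Z_{\text{avg},k}\|_{H}^{2}) - \|E(Z_{\text{avg},k})\|_{H}^{2} \le \var_{H}(\bar\theta_k) + q^{-1} E(\|Z_{\text{avg},k}\|_{H}^{2}).
\end{equation*}
Bound the first term via \eqref{eq:UpperBoundVarHBarTheta}: with $s(k)=\alpha k$ and $\alpha \le 1/2$, $k-s(k) \ge k/2$, so $\var_H(\bar\theta_k) \le 16(2-\gamma R^2) c/k \le 32 c/k$. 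Bound the second by the transferred version of \eqref{eq:ZNSecondMomentGeneralCase}.

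The main obstacle, and really the only non-mechanical step, is assembling the constants so they fold cleanly into the stated inequality. The key arithmetic observation is that, because $\alpha \le 1/2$ and $q \le 1$, we get $32c/k \le (16/\alpha)(c/k) \le q^{-1}(16c)/(\alpha k)$, so the burn-in variance contribution absorbs into the coefficient of $1/\alpha$, promoting the $11c/\alpha$ from Lemma \ref{le:ZkProperties} to $27c/\alpha$ and yielding exactly the bound in Theorem \ref{th:BoundhHatfk}. Everything else is a routine verification that the constants do not blow up.
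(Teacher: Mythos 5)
Your proposal is correct and follows exactly the route the paper intends: the paper omits this proof as ``analogous to that of Theorem \ref{th:BoundhHatfk}'', and you carry out that analogy faithfully, using the transferred version of Lemma \ref{le:ZkProperties} (justified by the tower law and Jensen, as in Section \ref{sub:avgStart}) together with Proposition \ref{pr:efficientRunningTime} for the $O(k2^{l})$ sampling cost. The constant bookkeeping ($32c/k\le 16c/(\alpha k)\le q^{-1}\cdot 16c/(\alpha k)$, promoting $11c/\alpha$ to $27c/\alpha$) matches the computation in the proof of Theorem \ref{th:BoundhHatfk}.
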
 

\section{Squared Bias and Variance Estimation}\label{se:BiasVarEst}
Lemma \ref{le:BiasedlinReg} provides analytical upper bounds on the squared bias, variance and expected excess risk of \(\bar\theta_{k}\), while Theorem \ref{th:BoundhHatfk} (resp. Theorem \ref{th:BoundhHatfkAvg}) gives an analytical upper bound on the expected excess risk of \(\hat f_{k}\) (resp. \(\hat f_{\text{avg},k}\)). This section constructs unbiased estimators  of these metrics. Section \ref{sub:sqBiasEst} gives both biased and unbiased estimators of the squared bias of \(\bar\theta_{k}\), while Section \ref{sub:VarianceEst} provides unbiased estimators of  \(\var_{H}(\bar\theta_{k})\) and    \(\var_{H}(\hat f_{k})\).    Together with the results in Section \ref{sub:BiasVariance}, this yields  unbiased estimators of the expected excess risk of \(M\) independent copies of  \(\bar\theta_{k}\),  \(\hat f_{k}\) and \(\hat f_{\text{avg},k}\), for any integer \(M>1\).   All results in this section relating to \(Z_{k}\) (resp.  \(\hat f_{k}\))  also apply to  \(Z_{\text{avg},k}\) (resp.  \(\hat f_{\text{avg},k})\) without any changes.

\subsection{Estimating the Squared Bias}\label{sub:sqBiasEst}
Proposition \ref{pr:EstE|| ||_H} shows how to estimate \(E(||U||^{2}_{H})\), where \(U\) is a square-integrable \(d\)-dimensional column vector, without needing to know \(H\). \begin{proposition}\label{pr:EstE|| ||_H}For any square-integrable \(d\)-dimensional column vector \(U\) independent of \(x\), we have \(E\left(\left(U^Tx\right)^{2}\right)=E(||U||^{2}_{H})\). \end{proposition}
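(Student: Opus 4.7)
The plan is to expand the scalar square $(U^{T}x)^{2}$ as a quadratic form in $x$, then use independence of $U$ and $x$ to pass the conditional expectation through, replacing $xx^{T}$ by its expectation $H$.

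First I would observe that $(U^{T}x)^{2} = U^{T}xx^{T}U$, since the scalar $U^{T}x$ equals its own transpose. Before taking expectations, I would verify that this random variable is integrable: by Cauchy--Schwarz and the fact that $U$ and $x$ are independent and both square-integrable (recall that Assumption 1, via $E(\|x\|^{2}xx^{T})\preccurlyeq R^{2}H$ and $\tr(H)\le R^{2}$, implies $E(\|x\|^{2})<\infty$), we get $E((U^{T}x)^{2})\le E(\|U\|^{2}\|x\|^{2})=E(\|U\|^{2})E(\|x\|^{2})<\infty$.

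Next I would condition on $U$ and use independence: $E(U^{T}xx^{T}U\mid U)=U^{T}E(xx^{T}\mid U)U=U^{T}E(xx^{T})U=U^{T}HU=\|U\|_{H}^{2}$, where the second equality uses that $U$ and $x$ are independent and the last equality is the definition of $\|\cdot\|_{H}$ introduced in Section \ref{se:notation}. Taking expectations and applying the tower property yields $E((U^{T}x)^{2})=E(\|U\|_{H}^{2})$, as claimed.

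The argument is essentially a one-line calculation, so I do not expect any genuine obstacle; the only points of care are (i) the scalar-to-quadratic-form rewriting, (ii) checking integrability so that Fubini/tower applies, and (iii) explicitly invoking the independence of $U$ and $x$ to identify $E(xx^{T}\mid U)$ with $H$. The value of the proposition lies not in its proof but in its use: it shows that $E(\|U\|_{H}^{2})$ can be estimated from samples of $x$ without requiring access to $H$ itself, which will be exploited in the subsequent bias and variance estimators.
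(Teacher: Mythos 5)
Your proposal is correct and follows essentially the same route as the paper: rewrite $(U^{T}x)^{2}=U^{T}xx^{T}U$, use independence of $U$ and $x$ to replace $xx^{T}$ by $H$ inside the expectation, and identify $U^{T}HU$ with $\|U\|_{H}^{2}$. The extra care you take with integrability and the explicit conditioning/tower argument is a slightly more detailed justification of the paper's one-line step, not a different approach.
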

\begin{proof}We have\begin{eqnarray*}E\left(\left(U^Tx^{}\right)^{2}\right)&=&E\left(U^{T}x{x}^TU\right)\nonumber\\
&=&E\left(U^{T}E\left(xx^T\right)U\right)\nonumber\\
&=&E(U^{T}HU)\nonumber\\
&=&E(||U||^{2}_{H}),\end{eqnarray*}
where the second equation follows from the independence of \(U\) and \(x\), and the third one from the definition of \(H\). 
\end{proof}
Proposition \ref{pr:SquaredBiasBiasedEstimator} gives an estimator of the squared bias of \(\bar\theta_{k}\) without using either \(H\) or \(\theta^{*}\).
 \begin{proposition}\label{pr:SquaredBiasBiasedEstimator}
Assume that the conditions of Theorem \ref{th:BoundhHatfk} hold. Fix \(k\geq2\),  \(n\geq1\) and \(n'\geq1\). Let  \(Z_{k}^{(i)}\), \(1\le i\leq n\), be \(n\) copies of \(Z_{k}\), and let  \(x^{(i)}\),  \(1\le i\leq n'\), be \(n'\) copies of \(x\) such that   \(Z_{k}^{(1)},\dots,Z_{k}^{(n)},x^{(1)},\dots,x^{(n')}\) are  independent.  Set \(\bar Z_{k,n}=n^{-1}\sum^{n}_{i=1}Z_{k}^{(i)}\). Then\begin{equation}\label{eq:BarZkn}
E\left(\frac{1}{n'}\sum^{n'}_{j=1}(\bar Z_{k,n}^{T}x^{(j)})^{2}\right)=||E(\bar\theta_{k})-\theta^{*}||^{2}_{H}+\frac{\var_{H}( Z_{k})}{n}.
\end{equation}   \end{proposition}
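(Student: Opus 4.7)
The plan is to reduce the left-hand side to $E(\|\bar Z_{k,n}\|^2_H)$ via Proposition \ref{pr:EstE|| ||_H}, and then to apply the bias-variance identity \eqref{eq:BiasVarSigma} to $\bar Z_{k,n}$ with $\Sigma = H$.

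First, I would observe that $\bar Z_{k,n}$ is a deterministic function of $Z_k^{(1)},\dots,Z_k^{(n)}$, and hence is independent of each $x^{(j)}$ by the joint-independence assumption in the statement. Since $Z_k$ is square-integrable by Lemma \ref{le:ZkProperties}, $\bar Z_{k,n}$ is also square-integrable. Therefore Proposition \ref{pr:EstE|| ||_H} applies with $U = \bar Z_{k,n}$ and the copy $x^{(j)}$ of $x$, yielding
\begin{equation*}
E\bigl((\bar Z_{k,n}^{T} x^{(j)})^{2}\bigr) = E(\|\bar Z_{k,n}\|_H^{2}) \qquad \text{for each } j=1,\dots,n'.
\end{equation*}
Averaging over $j$ gives that the left-hand side of \eqref{eq:BarZkn} equals $E(\|\bar Z_{k,n}\|_H^{2})$.

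Next, I would apply the decomposition \eqref{eq:BiasVarSigma} with $U = \bar Z_{k,n}$ and $\Sigma = H$ to obtain
\begin{equation*}
E(\|\bar Z_{k,n}\|_H^{2}) = \|E(\bar Z_{k,n})\|_H^{2} + \var_H(\bar Z_{k,n}).
\end{equation*}
By Lemma \ref{le:ZkProperties}, $E(Z_k^{(i)}) = \theta^{*} - E(\bar\theta_k)$ for each $i$, so $E(\bar Z_{k,n}) = \theta^{*} - E(\bar\theta_k)$ and hence $\|E(\bar Z_{k,n})\|_H^{2} = \|E(\bar\theta_k) - \theta^{*}\|_H^{2}$. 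Using the independence of the $Z_k^{(i)}$'s together with the standard scaling of variance under averaging (the same calculation that gives $\var_H(\bar\psi_M) = M^{-1}\var_H(\psi)$ in Section~\ref{sub:BiasVariance}), we get $\var_H(\bar Z_{k,n}) = \var_H(Z_k)/n$. Combining these identities yields \eqref{eq:BarZkn}.

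This proof is essentially an assembly of already-established pieces, so there is no real obstacle; the only subtlety to keep in mind is the independence bookkeeping, namely that $\bar Z_{k,n}$ must be independent of each $x^{(j)}$ in order to legitimately invoke Proposition \ref{pr:EstE|| ||_H}, which is exactly what the joint-independence hypothesis of the proposition guarantees.
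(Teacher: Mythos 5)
Your proposal is correct and follows essentially the same route as the paper's proof: reduce each summand to \(E(||\bar Z_{k,n}||^{2}_{H})\) via Proposition \ref{pr:EstE|| ||_H} using the independence of \(\bar Z_{k,n}\) and \(x^{(j)}\), then apply the bias--variance decomposition together with \(E(Z_{k})=\theta^{*}-E(\bar\theta_{k})\) from Lemma \ref{le:ZkProperties} and the \(1/n\) variance scaling for the average of independent copies. No gaps.
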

\begin{proof}
 
 For \(j\geq1\), since \(\bar Z_{k,n}\) and \(x^{(j)}\) are independent, it follows from Proposition \ref{pr:EstE|| ||_H} that\begin{displaymath}
E(\bar Z_{k,n}^{T}x^{(j)})^{2}=E(||\bar Z_{k,n}||^{2}_{H}).\end{displaymath}On the other hand,\begin{eqnarray*}E(||\bar Z_{k,n}||^{2}_{H})
&=&||E(\bar Z_{k,n})||^{2}_{H}+\var_{H}(\bar Z_{k,n})\\
&=&||E(Z_{k})||^{2}_{H}+\frac{\var_{H}(Z_{k})}{n}\\&=&||E(\bar\theta_{k})-\theta^{*}||^{2}_{H}+\frac{\var_{H}( Z_{k})}{n},
\end{eqnarray*}
where the first equation follows from the definition of \(\var_{H}\), the second one by noting that  \(\bar Z_{k,n}\) is the average of \(n\) independent copies of  \(Z_{k}\), and the last one from Lemma~\ref{le:ZkProperties}. Combining the two above equations and summing over \(j\) implies \eqref{eq:BarZkn}.  
\end{proof}
Note that the the right-hand side of  \eqref{eq:BarZkn} converges to  the squared bias of \(\bar\theta_{k}\) as \(n\) goes to infinity. On the other hand, by the law of large numbers,  \(\bar Z_{k,n}\) converges almost surely to \(E(Z_{k})\) as  \(n\) goes to infinity. Thus the summands \((\bar Z_{k,n}^{T}x^{(j)})^{2}\), \(1\leq j\leq n'\), are ``almost'' independent when \(n\) is large. Using the law of large numbers once again, the average  \(n'^{-1}\sum^{n'}_{j=1}(\bar Z_{k,n}^{T}x^{(j)})^{2}\) should be ``close'' to its expectation given by the right-hand side of  \eqref{eq:BarZkn}, when \(n\) and \(n'\) are large. This intuitive argument suggests that \(n'^{-1}\sum^{n'}_{j=1}(\bar Z_{k,n}^{T}x^{(j)})^{2}\) is an accurate estimator of  the squared bias of \(\bar\theta_{k}\) if \(n\) and \(n'\) are large.
    
As the expected time to simulate
\(\bar Z_{k,n}\) is of order \(kn\), we recommend to choose \(n'\) to be of order \(kn\) in practical implementations, so that the total expected time to sample the above estimator is of order  \(kn\) as well.
It follows from  \eqref{eq:BarZkn} that this estimator is in general biased, however. 

Proposition \ref{pr:SquaredBiasUnbiasedEstimator} provides an unbiased estimator of the squared bias of \(\bar\theta_{k}\) by sampling \(Z_{k}\) twice and sampling copies of \(x\), without requiring knowledge of \(H\) or of \(\theta^{*}\).   
\begin{proposition}\label{pr:SquaredBiasUnbiasedEstimator}
Assume that the conditions of Theorem \ref{th:BoundhHatfk} hold. Let \(k\geq2\) and let  \(\tilde Z_{k}\) be a copy of \(Z_{k}\) and let \(x^{(1)},\dots,x^{(n')}\) be \(n'\) copies of \(x \) such that     \(Z_{k}\),  \(\tilde Z_{k}\), \(x^{(1)},\dots,x^{(n')}\) are independent, where \(n'\) is a positive integer.  Then
\begin{equation}\label{eq:avegEstimatorBiased}
E\left(\frac{1}{n'}\sum^{n'}_{j=1}(Z_{k}^{T}x^{(j)} )(\tilde Z_{k}^T x^{(j)})\right)=||E(\bar\theta_{k})-\theta^{*}||^{2}_{H}.
\end{equation}\end{proposition}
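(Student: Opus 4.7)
The strategy is to show that each summand in the empirical average has expectation equal to $\|E(\bar\theta_k)-\theta^*\|_H^2$, and then conclude by linearity. Fix an index $j$. The plan is to compute $E\bigl[(Z_k^T x^{(j)})(\tilde Z_k^T x^{(j)})\bigr]$ by exploiting three independences: $x^{(j)}$ is independent of $(Z_k,\tilde Z_k)$, and $Z_k$ is independent of $\tilde Z_k$.

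First I would verify integrability so that Fubini/tower manipulations are justified. Writing the product as a scalar trace, $(Z_k^T x^{(j)})(\tilde Z_k^T x^{(j)}) = \tr\bigl(x^{(j)}(x^{(j)})^T \tilde Z_k Z_k^T\bigr)$, and applying Cauchy--Schwarz gives $E\bigl[|(Z_k^T x^{(j)})(\tilde Z_k^T x^{(j)})|\bigr] \leq \sqrt{E[(Z_k^T x^{(j)})^2]}\sqrt{E[(\tilde Z_k^T x^{(j)})^2]}$, and each factor equals $E(\|Z_k\|_H^2)$ by Proposition \ref{pr:EstE|| ||_H} together with independence, which is finite by Lemma \ref{le:ZkProperties}.

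Next, the main calculation. Using the trace representation and the independence of $x^{(j)}$ from $(Z_k,\tilde Z_k)$, one gets
\begin{equation*}
E\bigl[(Z_k^T x^{(j)})(\tilde Z_k^T x^{(j)})\bigr]
= \tr\bigl(E[x^{(j)}(x^{(j)})^T]\, E[\tilde Z_k Z_k^T]\bigr)
= \tr\bigl(H\, E[\tilde Z_k Z_k^T]\bigr).
\end{equation*}
Now invoke the independence of $Z_k$ and $\tilde Z_k$, together with the fact that $\tilde Z_k$ is a copy of $Z_k$, to replace $E[\tilde Z_k Z_k^T]$ by $E[Z_k]\,E[Z_k]^T$, yielding $E[Z_k]^T H\, E[Z_k] = \|E(Z_k)\|_H^2$. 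Finally, Lemma \ref{le:ZkProperties} identifies $E(Z_k)=\theta^*-E(\bar\theta_k)$, giving $\|E(\bar\theta_k)-\theta^*\|_H^2$. Summing over $j=1,\dots,n'$ and dividing by $n'$ yields \eqref{eq:avegEstimatorBiased}.

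I do not expect any serious obstacle here: the computation is essentially a two-line consequence of independence and of Proposition \ref{pr:EstE|| ||_H}, with Lemma \ref{le:ZkProperties} supplying the identification $E(Z_k)=\theta^*-E(\bar\theta_k)$. The only genuine point requiring care is the integrability check, which is handled cleanly by Cauchy--Schwarz combined with the square-integrability of $Z_k$ under $\|\cdot\|_H$ from Lemma \ref{le:ZkProperties}.
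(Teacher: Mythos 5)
Your proof is correct and follows essentially the same route as the paper's: factor the expectation of $(Z_{k}^{T}x^{(j)})(\tilde Z_{k}^T x^{(j)})$ using the mutual independence of $Z_{k}$, $\tilde Z_{k}$ and $x^{(j)}$ to obtain $E(Z_{k})^{T}HE(Z_{k})$, then identify $E(Z_{k})=\theta^{*}-E(\bar\theta_{k})$ via Lemma \ref{le:ZkProperties}. The trace reformulation and the explicit Cauchy--Schwarz integrability check are harmless additions but do not change the argument.
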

\begin{proof}
For \(1\leq j\leq n'\), we have
\begin{eqnarray*}E\left((Z_{k}^{T}x^{(j)})(\tilde Z_{k}^T x^{(j)})\right)&=&E\left(Z_{k}^T x^{(j)}{x^{(j)}}^T \tilde Z_{k}\right)\\&=&
E\left(Z_{k}\right)^T  E\left(x^{(j)}{x^{(j)}}^T\right) E\left( \tilde Z_{k}\right)\\
&=&(\theta^{*}-E(\bar\theta_k))^{T}H(\theta^{*}-E(\bar\theta_k))\\
&=&||E(\bar\theta_{k})-\theta^{*}||^{2}_{H},
\end{eqnarray*}
where the second equation follows from the independence of \(Z_{k}\),  \(\tilde Z_{k}\)  and  \(x^{(j)},\)  and the third one from Lemma \ref{le:ZkProperties} and the definition of  \(H\).
Summing over \(j\) achieves the proof.\end{proof}

It follows from   \eqref{eq:avegEstimatorBiased} that \(n'^{-1}\sum^{n'}_{j=1}(Z_{k}^{T}x^{(j)} )(\tilde Z_{k}^T x^{(j)})\) is an unbiased  estimator  of \(||E(\bar\theta_{k})-\theta^{*}||^{2}_{H}\)    for any integer \(n'\ge1\). As the expected running time of \(Z_{k}\) is of order \(k\),  we suggest to choose a value of \(n'\) of order \(k\)  in practical implementations, so that the  expected   time to sample this estimator is of order \(k\) as well. We can then estimate \(||E(\bar\theta_{k})-\theta^{*}||^{2}_{H}\)  by taking the average of several  independent copies of the resulting estimator.

\subsection{Estimating the Variance}\label{sub:VarianceEst}
Proposition \ref{pr:varianceEstimation} constructs two unbiased estimators of \(\var_{H}(\psi)\), where  \(\psi\)  is a square-integrable \(d\)-dimensional  column vector. The first one  uses \(H\)  and simulates only copies   of \(\psi\). The second one does not require  knowledge of \(H\)  and simulates copies of \(x\) and  of \(\psi\). 
\begin{proposition}\label{pr:varianceEstimation}
Let \(\psi\) be a square-integrable  \(d\)-dimensional column vector. Fix   integers  \(n\ge2\) and \(n',K\ge1\). Let \((\psi^{(i)}, 1\le i\le n)\) be copies of \(\psi\)  and  \((x^{(i,j)}, 1\le i\le n, 1\le j\le K)\) and  \((x^{(j)},1\leq j\leq n')\) be copies of \(x \). Assume   that these random  variables are mutually independent. Set \(\bar\psi=\bar\psi_{n}\), where \(\bar\psi_n\) is given by \eqref{eq:PsiBarDef}, with \(M\) being replaced by \(n\). 
 Then\begin{equation}\label{eq:EmpVarPsiFirst}
E\left(\sum^{n}_{i=1}||\psi^{(i)}||_{H}^{2}-n||\bar\psi||^{2}_{H}\right)=(n-1)\var_{H}(\psi),
\end{equation}
and \begin{equation}\label{eq:EmpVarPsiEst}
E\left(\frac{1}{K}\sum^{n}_{i=1}\sum^{K}_{j=1}\left({\psi^{(i)}}^Tx^{(i,j)}\right)^{2}-\frac{n}{n'}\sum^{n'}_{j=1}\left({\bar\psi}^Tx^{(j)}\right)^{2}\right)=(n-1)\var_{H}(\psi).
\end{equation}
\end{proposition}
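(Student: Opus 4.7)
The plan is to establish \eqref{eq:EmpVarPsiFirst} first, which is a classical ``sample variance equals empirical second moment minus squared sample mean'' identity adapted to the $\|\cdot\|_H$ norm, and then to deduce \eqref{eq:EmpVarPsiEst} essentially as a corollary by applying Proposition \ref{pr:EstE|| ||_H} to each relevant vector.

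For \eqref{eq:EmpVarPsiFirst}, I would start from the identity $E(\|\psi\|_H^2) = \|E(\psi)\|_H^2 + \var_H(\psi)$ (which is \eqref{eq:BiasVarSigma} with $\Sigma = H$). Since the $\psi^{(i)}$ are copies of $\psi$, this gives $E\bigl(\sum_{i=1}^n \|\psi^{(i)}\|_H^2\bigr) = n\|E(\psi)\|_H^2 + n\var_H(\psi)$. Next, by the same identity applied to $\bar\psi$, and using the standard facts $E(\bar\psi) = E(\psi)$ and $\var_H(\bar\psi) = \var_H(\psi)/n$ (which hold because the $\psi^{(i)}$ are independent copies; these are recalled just after \eqref{eq:PsiBarDef}), I get $n E(\|\bar\psi\|_H^2) = n\|E(\psi)\|_H^2 + \var_H(\psi)$. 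Subtracting the two expressions cancels the bias contributions and yields $(n-1)\var_H(\psi)$.

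For \eqref{eq:EmpVarPsiEst}, the key observation is that by the mutual independence assumption, $\psi^{(i)}$ is independent of $x^{(i,j)}$, and $\bar\psi$ (being a deterministic function of $\psi^{(1)},\dots,\psi^{(n)}$) is independent of each $x^{(j)}$. Applying Proposition \ref{pr:EstE|| ||_H} with $U = \psi^{(i)}$ and then conditioning gives $E\bigl(({\psi^{(i)}}^T x^{(i,j)})^2\bigr) = E(\|\psi^{(i)}\|_H^2)$, so summing over $j$ (which introduces a factor $K$ that cancels the $1/K$) and over $i$ produces $E\bigl(\sum_{i=1}^n \|\psi^{(i)}\|_H^2\bigr)$. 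Likewise, applying Proposition \ref{pr:EstE|| ||_H} with $U = \bar\psi$ gives $E\bigl(({\bar\psi}^T x^{(j)})^2\bigr) = E(\|\bar\psi\|_H^2)$, so the second sum contributes $n E(\|\bar\psi\|_H^2)$ after the $1/n'$ cancels. The left-hand side thus collapses to exactly the expectation appearing in \eqref{eq:EmpVarPsiFirst}, which equals $(n-1)\var_H(\psi)$.

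There is no real obstacle here: both identities are essentially bookkeeping, and the only subtlety is making sure the independence hypotheses are used correctly so that Proposition \ref{pr:EstE|| ||_H} applies to $\bar\psi$ (which requires that $\bar\psi$ be independent of the external copies $x^{(j)}$, guaranteed by the mutual independence assumption). The presence of the inner averaging index $K$ in \eqref{eq:EmpVarPsiEst} is a red herring at the level of expectation — it disappears immediately — though it would of course play a role if one were analysing the variance of the estimator.
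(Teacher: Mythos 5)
Your proposal is correct and follows essentially the same route as the paper: the first identity via the bias--variance decomposition $E(\|\cdot\|_H^2)=\|E(\cdot)\|_H^2+\var_H(\cdot)$ applied to $\psi$ and to $\bar\psi$, and the second identity by applying Proposition \ref{pr:EstE|| ||_H} to each $\psi^{(i)}$ and to $\bar\psi$ and reducing to the first. No gaps.
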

\begin{proof}
 Since the \(\psi^{(i)}\)'s are copies of \(\psi\), we have
\begin{equation*}
E(\sum^{n}_{i=1}||\psi^{(i)}||_{H}^{2}-n||\bar\psi||^{2}_{H})=nE(||\psi||_{H}^{2})-nE(||\bar\psi||^{2}_{H}).
\end{equation*}
Noting that \(E(||\psi||_{H}^{2})=||E(\psi)||_{H}^{2}+\var_{H}(\psi)\) and \begin{eqnarray*}E(||\bar\psi||_{H}^{2})&=&||E(\bar\psi)||_{H}^{2}+\var_{H}(\bar\psi)\\
&=&||E(\psi)||_{H}^{2}+\frac{\var_{H}(\psi)}{n},\end{eqnarray*}
this implies \eqref{eq:EmpVarPsiFirst}. 
On the other hand, Proposition \ref{pr:EstE|| ||_H} shows that, for \(1\leq i\leq n\) and \(1\leq j\leq K\),\begin{equation*}
E(||\psi^{(i)}||_{H}^{2})=E\left(\left({\psi^{(i)}}^Tx^{(i,j)}\right)^{2}\right),
\end{equation*}
and, for \(1\leq j\leq n'\), \begin{equation*}
E(||\bar\psi||_{H}^{2})=E\left(\left({\bar\psi}^Tx^{(j)}\right)^{2}\right).
\end{equation*}
Summing over \(i\) and \(j\) and using \eqref{eq:EmpVarPsiFirst} implies \eqref{eq:EmpVarPsiEst}. 
\end{proof}

For general square-integrable \(\psi\), the left-hand sides of \eqref{eq:EmpVarPsiFirst} and of \eqref{eq:EmpVarPsiEst} provide unbiased estimators of  \(\var_{H}(\psi)\).
For the estimator  resulting from \eqref{eq:EmpVarPsiEst}   and a given \(n\ge2\), by analogy to Propositions \ref{pr:SquaredBiasBiasedEstimator} and \ref{pr:SquaredBiasUnbiasedEstimator}, we suggest to select a value of \(K\) of the same order of magnitude as the expected time to simulate \(\psi^{(1)}\), and a value of \(n'\) of the same order of magnitude as the total expected time to sample \(\psi^{(1)},\dots,\psi^{(n)}\). When \(\psi = \bar\theta_{k}\) (resp.  \(\psi = \hat f_{k}\)), the left-hand sides of \eqref{eq:EmpVarPsiFirst} and of \eqref{eq:EmpVarPsiEst} provide unbiased estimators of \(\var_{H}(\bar\theta_{k})\) (resp. \(\var_{H}(\hat f_{k})\)), for any integers \(n\ge 2\) and  \(n',K\ge1\).

In summary,  Proposition \ref{pr:SquaredBiasUnbiasedEstimator} provides an unbiased estimator of  \(||E(\bar\theta_{k})-\theta^{*}||^{2}_{H}\), while Proposition \ref{pr:varianceEstimation}  provides an unbiased estimator of  \(\var_{H}(\bar\theta_{k})\) and of \(\var_{H}(\hat f_{k})\). Combining these estimators and using \eqref{eq:BiasVarianceOptGap}  
 yields an unbiased estimator of \(E(L(\bar\theta_{k})))-L( \theta ^{*})\) and of \(E(L(\hat f_{k})))-L( \theta ^{*})\). More generally, combining these estimators and using \eqref{eq:ExpectedOptGapM} yields an unbiased estimator of  \(E(L(\bar\psi_{M})))-L( \theta ^{*})\), for  \(\psi = \bar\theta_{k}\) or  \(\psi = \hat f_{k}\), and any positive integer \(M\). We can also use Proposition \ref{pr:SquaredBiasBiasedEstimator} instead of  Proposition \ref{pr:SquaredBiasUnbiasedEstimator} to estimate  \(||E(\bar\theta_{k})-\theta^{*}||^{2}_{H}\), but  the resulting estimator is biased. 
\subsection{Computational Cost Analysis}\label{sub:CCA}
   An immediate consequence of \eqref{eq:excessRiskAverageUnbiased} and of Theorem~\ref{th:BoundhHatfk} is the following.         
\begin{proposition}\label{pr:avghatf}Consider integers   \(k\geq2\) and   \(M\ge2\). Set   $\bar f_{k,M}=M^{-1}(\sum^{M}_{i=1}\hat f _{k}^{(i)})$,  where \(\hat f _{k}^{(1)},\dots, \hat f _{k}^{(M)}\) are \(M\) independent copies of \(\hat f _{k}\).   Under the assumptions of Theorem~\ref{th:BoundhHatfk},  we have

\begin{equation}\label{eq:avgOptimalityGap}
kM(E(L(\bar f_{k,M}))-L(\theta^{*}))\leq\frac{1}{2q}\left(\frac{27c}{\alpha}+6c\left(4\ln\left(\frac{4\delta+8}{\alpha\xi}\right)\right)^{\delta+1}\right).
\end{equation}
\end{proposition}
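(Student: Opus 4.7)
The plan is to chain together two results already in hand: the unbiasedness of $\hat f_k$ together with the identity \eqref{eq:excessRiskAverageUnbiased} for averages of unbiased estimators, and the explicit bound on $E(L(\hat f_k)) - L(\theta^*)$ provided by Theorem~\ref{th:BoundhHatfk}. There is essentially no new work required; the statement is a direct corollary.

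First I would invoke Theorem~\ref{th:BoundhHatfk} to conclude that, under its assumptions (which coincide with those of Proposition~\ref{pr:avghatf}), $\hat f_k$ is square-integrable and satisfies $E(\hat f_k) = \theta^*$. Hence each independent copy $\hat f_k^{(i)}$ is a square-integrable unbiased estimator of $\theta^*$, and $\bar f_{k,M}$ plays exactly the role of $\bar\psi_M$ in Section~\ref{sub:BiasVariance} with $\psi = \hat f_k$.

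Next I would apply \eqref{eq:excessRiskAverageUnbiased} with this choice of $\psi$ to obtain
\begin{equation*}
E(L(\bar f_{k,M})) - L(\theta^*) = \frac{E(L(\hat f_k)) - L(\theta^*)}{M}.
\end{equation*}
Substituting the bound \eqref{eq:optGapRMLMC} from Theorem~\ref{th:BoundhHatfk} into the numerator yields
\begin{equation*}
E(L(\bar f_{k,M})) - L(\theta^*) \le \frac{1}{2qkM}\left(\frac{27c}{\alpha} + 6c\left(4\ln\left(\frac{4\delta+8}{\alpha\xi}\right)\right)^{\delta+1}\right),
\end{equation*}
and multiplying both sides by $kM$ delivers \eqref{eq:avgOptimalityGap}.

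No genuine obstacle arises; the only point worth remarking is that unbiasedness is essential, since \eqref{eq:excessRiskAverageUnbiased} exploits the cancellation of the squared-bias term in the decomposition \eqref{eq:BiasVarianceOptGap}, and this cancellation is precisely what allows the full excess risk (not merely the variance) to scale as $1/(kM)$.
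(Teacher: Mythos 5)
Your proposal is correct and follows exactly the paper's own route: the paper states Proposition~\ref{pr:avghatf} as an immediate consequence of \eqref{eq:excessRiskAverageUnbiased} (applied with \(\psi=\hat f_{k}\), using the unbiasedness \(E(\hat f_{k})=\theta^{*}\)) combined with the bound \eqref{eq:optGapRMLMC} of Theorem~\ref{th:BoundhHatfk}. Your remark that unbiasedness is what eliminates the squared-bias term and yields the full \(1/(kM)\) scaling is the right observation.
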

Fix \(\alpha\in(0,1/2]\) and  \(q\in(0,1]\). Observe that     \eqref{eq:avgOptimalityGap} implies an upper-bound on  \(E(L(\bar f_{k,M}))-L(\theta^{*})\) that depends on \(k\) and \(M\) only through the product \(kM\) which, by Theorem \ref{th:BoundhHatfk}, is proportional to the expected time required to calculate \(\bar f_{k,M}\). However, if \(kM=k'M'\) with \(k\geq k'\), we intuitively expect   \(E(L(\bar f_{k,M}))-L(\theta^{*})\) to be smaller than  \(E(L(\bar f_{k',M'}))-L(\theta^{*})\). This is because \(E(L(\bar\theta_{k}))-L(\theta^{*})\) ought to be smaller than     \(E(L(\bar\theta_{k'}))-L(\theta^{*})\), making it easier to correct the bias of \(\bar\theta_{k}\)  than that of \(\bar\theta_{k'}\).
On the other hand,
the calculation of \(\bar f_{k',M'}\) is easier to parallelize than that of \(\bar f_{k,M}\)  because it can be conducted on  \(M'\ge M\) independent processors. Selecting an appropriate value for  \(M\) depends on the target accuracy and on the number of processors used, and is a question beyond the scope of this paper.

Assume now for simplicity that \(\delta=2\) and that \(\alpha\) and \(q\) are fixed with \(\alpha,q\ge1/10\), and that \(\gamma=1/R^{2}\). Let \(\epsilon>0\). By \eqref{eq:GammaR21} and the definition of \(c\),   the  integer \(k\) needed to guarantee that \(E(L(\bar \theta _{k}))-L(\theta^{*})\leq\epsilon\) is \(k=k_{\epsilon}=O(c/\epsilon)\), where the constant behind the \(O\) notation is an absolute constant. On the other hand,   Theorem \ref{th:BoundhHatfk} shows that the expected time to simulate \(\bar f_{k,M}\) is at most \(10kM\). By    \eqref{eq:avgOptimalityGap}, it follows after some calculations that the expected time \(T_{\epsilon}\) needed to guarantee that \(E(L(\bar f_{k,M}))-L(\theta^{*})\leq\epsilon\) is \(T_{\epsilon}=O((\ln(2R^{2}/\lambda_{\min}))^{3}c/\epsilon)\), where the constant behind the \(O\) notation is an absolute constant. Thus, \(T_{\epsilon}\) is equal to \(k_{\epsilon}\), up to a poly-logarithmic factor. Note that the upper bound on  \(T_{\epsilon}\)  does not depend on \(k\).

\section{Numerical Experiments}\label{se:numer}
The numerical experiments were conducted on a laptop PC with an Intel  1 gigahertz processor and 8 gigabytes of RAM. The codes were written in the C++ programming language.  Two synthetic distributions for \((x,y)\) were used. Both distributions assume that \(x\) is a \(d\)-dimensional Gaussian vector, that  \(y=x^T\theta^{*}+V\), where \(V\) is a centered Gaussian random variable independent of \(x\) with variance \(\sigma^{2}\), and that all \(d\) components of   \(\theta^{*}\) are equal to             \(2 / \sqrt{d}\).  For both distributions,  the \(i\)-th component of  \(\theta_0\) is equal to            \(\cos(2 \pi i  / d) / \sqrt{d}\),  \(1\le i\le d\). Thus the order of magnitude of   \(||\theta_0||\), of     \(||\theta^{*}||\) and of    \(||\theta_0-\theta^{*}||\)   is   \(1\).  The first distribution is taken from \citet{BachLeastSquaresJMLR2017} and assumes that \(d=25\), with \(H=\hbox{\rm diag}(i^{-3})\), \(1\le i\le 25\), and that  \(\sigma^{2}=1\). The second distribution is taken from  
 \citet{jain2018parallelizing} and supposes that  \(H=\hbox{\rm diag}(i^{-1})\), \(1\le i\le 50\), and that \(\sigma^{2}=0.01\). Tables \ref{tab:BiasBach} and \ref{tab:BiasVarBach} and Fig. \ref{fig:TimeVarianceBach} and Fig. \ref{fig:ExcessRiskBach} are built from the first distribution, while Tables \ref{tab:BiasSidford} and \ref{tab:BiasVarSidford} and Fig. \ref{fig:TimeVarianceSidford} and Fig. \ref{fig:ExcessRiskSidford} are constructed from the second distribution.  A standard calculation shows that, for the two distributions, Assumption 1 holds with \(R^{2}=\tr(H)+2\lambda_{\max}\). In all  experiments, we set  \(\gamma=1/R^{2}\), with \(\alpha=1/2\) and \(\delta=3/2\). 

Table \ref{tab:BiasBach}  calculates the squared bias   \(||E(\bar\theta_{k})- \theta ^{*}||^{2}_{H}\), for several values of \(k\), with three  different methods. The first one assumes that \(H\) and \( \theta ^{*}\) are known and calculates exactly the squared bias. It first computes  \(E(\bar\theta_{k})\) by applying \eqref{eq:Ethetak} in the appendix to all \(t\in[s(k),k-1]\), then uses   \(E(\bar\theta_{k})\),  \( \theta ^{*}\)  and   \(H\) to calculate the squared bias. The remaining two methods do not require  knowledge of either \(H\) or   \( \theta ^{*}\). The second method (RMLMCB) uses the estimator of  Proposition \ref{pr:SquaredBiasBiasedEstimator}, where    \(Z_{k}\) was replaced by   \(Z_{\text{avg},k}\), with \(n=10^{8}/k\) and \(n'=10^{8}\). The third method (URMLMCB) takes the average of  \(n=10^{8}/k\)  independent copies of the estimator in Proposition \ref{pr:SquaredBiasUnbiasedEstimator}, where each estimator is calculated with \(n'=k\), with    \(Z_{k}\) being replaced by   \(Z_{\text{avg},k}\) once again. In the last two methods, the choice of \(n\) and \(n'\)  is motivated by practical considerations related to computing time. Indeed, by the discussions following  Propositions \ref{pr:SquaredBiasBiasedEstimator} and  \ref{pr:SquaredBiasUnbiasedEstimator}, this choice    ensures that  the order of magnitude of the running time of the corresponding estimators  is about \(10^{8}\),  for any \(k\). The variable ``Cost'' refers to the total number of simulated copies of \(x\), while the variable ``Time'' refers to the total running time in seconds. 

For the two methods RMLMCB and URMLMCB and all values of \(k\) in Table \ref{tab:BiasBach}, the estimated squared bias is within \(10\%\) from its exact value, and the variable  ``Cost'' is of order \(10^{8}\) or \(10^{9}\), which is consistent with our discussions above. Furthermore, the variable ``Time'' is roughly proportional to variable  ``Cost''. The running of the URMLMCB method is larger than that of RMLMCB because it simulates twice as many copies of \(Z_{k}\).
\begin{table}
\caption{Squared Bias of averaged SGD with \(H=\hbox{\rm diag}(i^{-3})\), \(1\le i\le 25\), \(\sigma^{2}=1\), \(s(k)=k/2\) and \(n=10^{8}/k\).}
%\begin{tiny}
\begin{tabular}{rrcccc}\hline
$k$  & Method  & $n$  & Sq. Bias  & Cost  &Time\\\hline
$50$ &Exact &  -- & $6.382\times10^{-3} $ &-- &--\\
$50$ &RMLMCB & $ 2\times10^6$ & $6.253\times10^{-3} $ &$6.28\times10^8 $ &$5.44\times10^2$\\
$50$ &URMLMCB & $ 2\times10^6$ & $5.892\times10^{-3} $ &$1.13\times10^9 $ &$9.56\times10^2$\\
$200$ &Exact &  -- & $3.729\times10^{-3} $ &-- &--\\
$200$ &RMLMCB & $ 5\times10^5$ & $3.657\times10^{-3} $ &$6.23\times10^8 $ &$5.24\times10^2$\\
$200$ &URMLMCB & $ 5\times10^5$ & $3.765\times10^{-3} $ &$1.19\times10^9 $ &$1.01\times10^3$\\
$800$ &Exact &   --  & $1.926\times10^{-3} $ &-- &--\\
$800$ &RMLMCB & $ 1.25\times10^5$ & $1.906\times10^{-3} $ &$6.29\times10^8 $ &$5.28\times10^2$\\
$800$ &URMLMCB & $ 1.25\times10^5$ & $1.906\times10^{-3} $ &$1.16\times10^9 $ &$9.78\times10^2$\\
$3200$ &Exact &   --  & $7.057\times10^{-4} $ &-- &--\\
$3200$ &RMLMCB & $ 3.125\times10^4$ & $7.123\times10^{-4} $ &$5.81\times10^8 $ &$4.89\times10^2$\\
$3200$ &URMLMCB & $ 3.125\times10^4$ & $7.093\times10^{-4} $ &$1.15\times10^9 $ &$1.01\times10^3$\\
\hline\end{tabular}
% \end{tiny}
\label{tab:BiasBach}
\end{table}

Table \ref{tab:BiasVarBach}  simulates  \(n=10^{8}/k\)  independent copies  of \(\bar\theta_{k}\) (AvSGD), of \(\hat f_{k}\) (USGD) and of  \(\hat f_{\text{avg},k}\) (AUSGD) for several values of \(k\). For each of the estimators \(\hat f_{k}\) and \(\hat f_{\text{avg},k}\), the probability \(q\) is chosen so that the expected running time is twice that of \(\bar\theta_{k}\). 
Here again, the choice of \(n\) is prompted by practical considerations to    guarantee that  the running time of the different estimators  is of order \(10^{8}\) for all values of \(k\). The variable ``Sq. Dist.'' in Table \ref{tab:BiasVarBach}  computes the squared distance from \(\theta^{*}\), in the \(||.||_{H}\) norm, of the average of the \(n\) copies. In other words,  it calculates \(||\bar\psi-\theta^{*}||^{2}_{H}\), where \(\psi\) is equal to \(\bar\theta_{k}\), \(\hat f_{k}\), or  \(\hat f_{\text{avg},k}\), and \(\bar\psi\) is the average of  \(n\) independent copies \(\psi^{(1)},\dots,\psi^{(n)}\)  of \(\psi\). Based on  \eqref{eq:EmpVarPsiFirst}, the  variable ``Variance'' estimates  \(\var_{H}(\psi)\) via the formula \((n-1)^{-1}(\sum^{n}_{i=1}||\psi^{(i)}||_{H}^{2}-n||\bar\psi||^{2}_{H})\), where \(\psi\) is one of the three aforementioned estimators of \(\theta^{*}\). The variable ``Est. Variance'' estimates   \(\var_{H}(\psi)\)  via \eqref{eq:EmpVarPsiEst}, with \(K=k\) and \(n'=nk\).   The variable ``Cost'' (resp. ``Time'') represents the total number of simulated copies of \(x\) (resp. running time in seconds) needed to sample \(\psi^{(1)},\dots,\psi^{(n)}\). The variable ``Cost of Est.'' (resp. ``Time of Est.'') refers to  the total number of simulated copies of \(x\) (resp. running time in seconds) needed to estimate   \(\var_{H}(\psi)\)  via \eqref{eq:EmpVarPsiEst}.

The law of large numbers suggests that \(\bar\psi\) is ``close'' to \(E(\psi)\), which implies that  \(||\bar\psi-\theta^{*}||^{2}_{H}\) is close to the squared bias   \(||E(\psi)-\theta^{*}||^{2}_{H}\). When \(\psi=\bar\theta_{k}\), the values of   \(||\bar\psi-\theta^{*}||^{2}_{H}\) in Table \ref{tab:BiasVarBach} are indeed within \(0.5\%\) of the exact values of the squared bias   \(||E(\bar\theta_{k})-\theta^{*}||^{2}_{H}\) in Table \ref{tab:BiasBach}. When  \(\psi\) is equal to \(\hat f_{k}\) or  \(\hat f_{\text{avg},k}\), the values of   \(||\bar\psi-\theta^{*}||^{2}_{H}\) in Table \ref{tab:BiasVarBach} are of order \(10^{-5}\) or less, which is consistent with the equalities \(E(\hat f_{k})=E(\hat f_{\text{avg},k})=\theta^{*}\). For each of the three estimators of \(\theta^{*}\), the variance computed via \eqref{eq:EmpVarPsiEst}  is within \(20\%\) of the variance estimated via   \eqref{eq:EmpVarPsiFirst}. The variance of the three estimators decreases as \(k\) increases, which is consistent with Lemma \ref{le:BiasedlinReg} and Theorems \ref{th:BoundhHatfk} and \ref{th:BoundhHatfkAvg}. 
The variances of  \(\hat f_{k}\) and of \(\hat f_{\text{avg},k}\) are much larger than that of  \(\bar\theta_{k}\),  especially  for small values of \(k\). This can be explained intuitively by observing that the variances  of  \(\hat f_{k}\) and of \(\hat f_{\text{avg},k}\)  are partly due to the bias of  \(\bar\theta_{k}\), that is large when  \(k\) is small. 
For each value of \(k\),   the variable ``Cost'' is about twice for   the estimators \(\hat f_{k}\) and \(\hat f_{\text{avg},k}\)  than for \(\bar\theta_{k}\), which is consistent with our choice of \(q\). The variances of  \(\hat f_{k}\) and of \(\hat f_{\text{avg},k}\) are similar for all values of \(k\). To explain this, we note that, because \(\hat f_{\text{avg},k}\) is an averaged version of \(\hat f_{k}\), it should have a smaller  variance  if both estimators use the same value of \(q\).     On the other hand, the expected running time of \(Z_{\text{avg},k}\) is larger than that of \(Z_{k}\). The corresponding value of \(q\) is therefore smaller for  \(Z_{\text{avg},k}\), which causes  the variance of \(\hat f_{\text{avg},k}\) to increase. The two effects tend to cancel each other.
Finally,  the variable ``Time'' (resp. ``Time of Est.'') is roughly proportional to variable  ``Cost'' (resp. ``Cost of Est.'').
\begin{table}
\caption{Squared Bias and Variance  with \(H=\hbox{\rm diag}(i^{-3})\), \(1\le i\le 25\), \(\sigma^{2}=1\), \(s(k)=k/2\) and \(n=10^{8}/k\).}
\begin{tiny}
\begin{tabular}{rrrclclrrr}\hline
$k$  & Method  & $n$  & Sq. Dist.  &  Variance & Cost  & Est. Variance & Cost of Est. &Time & Time of Est.\\\hline
$50$ &AvSGD& $ 2\times10^6$ & $6.381\times10^{-3} $ &$0.082 $ &$9.80\times10^7 $ &$0.082 $ &$2.98\times10^8 $ &$84$ &$247$\\
$50$ &USGD& $ 2\times10^6$ & $3.003\times10^{-6} $ &$17 $ &$1.90\times10^8 $ &$19 $ &$3.90\times10^8 $ &$158$ &$331$\\
$50$ &AUSGD& $ 2\times10^6$ & $1.166\times10^{-5} $ &$19 $ &$1.91\times10^8 $ &$17 $ &$3.89\times10^8 $ &$166$ &$319$\\
$200$ &AvSGD& $ 5\times10^5$ & $3.726\times10^{-3} $ &$0.035 $ &$9.95\times10^7 $ &$0.035 $ &$3.00\times10^8 $ &$87$ &$243$\\
$200$ &USGD& $ 5\times10^5$ & $7.152\times10^{-6} $ &$2.5 $ &$1.90\times10^8 $ &$2.3 $ &$3.91\times10^8 $ &$162$ &$319$\\
$200$ &AUSGD& $ 5\times10^5$ & $7.126\times10^{-6} $ &$2.8 $ &$2.00\times10^8 $ &$2.3 $ &$3.90\times10^8 $ &$171$ &$321$\\
$800$ &AvSGD& $ 1.25\times10^5$ & $1.920\times10^{-3} $ &$0.014 $ &$9.99\times10^7 $ &$0.014 $ &$3.00\times10^8 $ &$86$ &$244$\\
$800$ &USGD& $ 1.25\times10^5$ & $1.715\times10^{-6} $ &$0.4 $ &$2.03\times10^8 $ &$0.37 $ &$3.85\times10^8 $ &$169$ &$316$\\
$800$ &AUSGD& $ 1.25\times10^5$ & $2.661\times10^{-6} $ &$0.38 $ &$1.88\times10^8 $ &$0.38 $ &$3.92\times10^8 $ &$159$ &$322$\\
$3200$ &AvSGD& $ 3.125\times10^4$ & $7.041\times10^{-4} $ &$0.0058 $ &$1.00\times10^8 $ &$0.0059 $ &$3.00\times10^8 $ &$86$ &$244$\\
$3200$ &USGD& $ 3.125\times10^4$ & $2.150\times10^{-6} $ &$0.05 $ &$1.87\times10^8 $ &$0.055 $ &$4.34\times10^8 $ &$156$ &$364$\\
$3200$ &AUSGD& $ 3.125\times10^4$ & $1.499\times10^{-6} $ &$0.054 $ &$1.93\times10^8 $ &$0.047 $ &$3.85\times10^8 $ &$164$ &$321$\\

\hline\end{tabular}
 \end{tiny}
\label{tab:BiasVarBach}
\end{table}

Fig.~\ref{fig:TimeVarianceBach} plots the product of the variance and average running time  of  \(\hat f_{k}\) (resp. \(\hat f_{\text{avg},k}\)), for several values of \(k\), calculated through the formula \(\text{Cost}\times\text{Variance}/n\), where the variables  ``Cost'' and  ``Variance'' are extracted from Table \ref{tab:BiasVarBach}. By the discussion at the end of Section \ref{sub:BiasVariance}, this product measures the efficiency of the corresponding estimator.  Fig.~\ref{fig:TimeVarianceBach} shows that this product decreases as \(k\) increases, which implies that  the efficiency of   \(\hat f_{k}\) (resp. \(\hat f_{\text{avg},k}\)) increases as \(k\) increases. This is in line with the discussion following Proposition \ref{pr:avghatf}.  
\begin{figure}
%\begin{tiny}
\begin{center}
\begin{tikzpicture}
\begin{loglogaxis}[
    title={},
    xlabel={$k$},
 ylabel={Avg. Cost$\times$Variance},
  legend pos=south west,
    ymajorgrids=true,
    grid style=dashed,
]
\addplot[
    color=blue,
    mark=square,
    ]
    coordinates {
(50 , 1627.1)
(200 , 959.226)
(800 , 643.028)
(3200 , 301.212)
    };
\addplot[
    color=green,
    mark=triangle,
    ]
    coordinates {
(50 , 1855.8)
(200 , 1116.57)
(800 , 571.265)
(3200 , 335.592)
};    
\legend{USGD, AUSGD}
\end{loglogaxis}
\end{tikzpicture}
\end{center}
%\end{tiny}
\caption{Product of  Average Running Time and of Variance with \(H=\hbox{\rm diag}(i^{-3})\), \(1\le i\le 25\), \(\sigma^{2}=1\), \(s(k)=k/2\) and \(n=10^{8}/k\).}
\label{fig:TimeVarianceBach}
\end{figure}
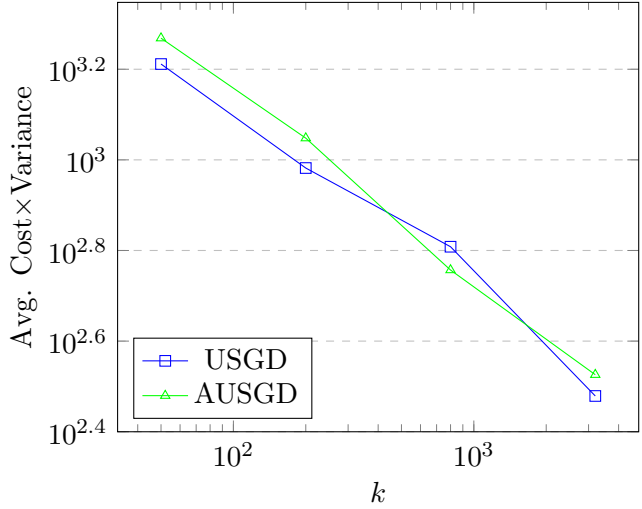

Fig. \ref{fig:ExcessRiskBach} plots the expected excess risk of \(\bar\psi_{M}\), calculated via \eqref{eq:ExpectedOptGapM}, where  \(\psi\) is equal to \(\bar\theta_{k}\), \(\hat f_{k}\), or  \(\hat f_{\text{avg},k}\), for several values of \(k\) and \(M\).  The exact value of the squared bias of  \(\bar\theta_{k}\) was  extracted from Table \ref{tab:BiasBach}, while the squared bias  of \(\hat f_{k}\) and of  \(\hat f_{\text{avg},k}\) was set to \(0\). The variance of the three estimators was extracted from the  variable ``Variance'' in Table \ref{tab:BiasVarBach}. Fig. \ref{fig:ExcessRiskBach} shows that the expected excess risk of \(\bar\psi_{M}\) is essentially the same when \(\psi\) equals \(\hat f_{k}\) or  \(\hat f_{\text{avg},k}\). For \(M=1\) or \(M=100\),    the smallest expected excess risk of \(\bar\psi_{M}\) occurs  when   \(\psi=\bar\theta_{k}\), in general, while the opposite happens when \(M=10^4\). This is explained by noting that, when  \(M\)   is very large, the bias term is dominant in \eqref{eq:ExpectedOptGapM}.      
\begin{figure}
\centering
\begin{subfigure}[b]{0.3\textwidth}
\centering
\begin{tikzpicture}[scale=.5]
\begin{loglogaxis}[
    title={$M = 1$},
    xlabel={$k$},
    ylabel={$E(L(\bar \psi_{M}))-L( \theta ^{*})$},
 %   legend pos=south west,
% legend style={at={(0.5,-0.25)},anchor=north   
ymajorgrids=true,
    grid style=dashed,
]
 \addplot[
    color=red,
    mark=diamond,
    ]
    coordinates {
(50 , 0.0443199)
(200 , 0.0194352)
(800 , 0.00817922)
(3200 , 0.0032751)
};
\addplot[
    color=blue,
    mark=square,
    ]
    coordinates {
(50 , 8.56133)
(200 , 1.25948)
(800 , 0.19816)
(3200 , 0.0251964)
    };
\addplot[
    color=green,
    mark=triangle,
    ]
    coordinates {
(50 , 9.71937)
(200 , 1.39332)
(800 , 0.189771)
(3200 , 0.0271484)
};    
\legend{AvSGD, USGD, AUSGD}
\end{loglogaxis}
\end{tikzpicture}
%\hspace{1cm}
%\caption{Convergence on sonar dataset with \(\lambda=\bar L/n\)}
\end{subfigure}
\begin{subfigure}[b]{0.3\textwidth}
\centering
\begin{tikzpicture}[scale=.5]
\begin{loglogaxis}[
    title={$M = 100$},
    xlabel={$k$},
    ylabel={$E(L(\bar \psi_{M}))-L( \theta ^{*})$},
 %   legend pos=south west,
    ymajorgrids=true,
    grid style=dashed,
]
 \addplot[
    color=red,
    mark=diamond,
    ]
    coordinates {
(50 , 0.00360226)
(200 , 0.00204021)
(800 , 0.00103529)
(3200 , 0.000382082)
};
\addplot[
    color=blue,
    mark=square,
    ]
    coordinates {
(50 , 0.0856133)
(200 , 0.0125948)
(800 , 0.0019816)
(3200 , 0.000251964)
    };
\addplot[
    color=green,
    mark=triangle,
    ]
    coordinates {
(50 , 0.0971937)
(200 , 0.0139332)
(800 , 0.00189771)
(3200 , 0.000271484)
};    
\legend{AvSGD, USGD, AUSGD}
\end{loglogaxis}
\end{tikzpicture}
%\hspace{1cm}
\end{subfigure}
\begin{subfigure}[b]{0.3\textwidth}
\centering
\begin{tikzpicture}[scale=.5]
\begin{loglogaxis}[
    title={$M = 10,000$},
    xlabel={$k$},
    ylabel={$E(L(\bar \psi_{M}))-L( \theta ^{*})$},
    legend pos=south west,
    ymajorgrids=true,
    grid style=dashed,
]
 \addplot[
    color=red,
    mark=diamond,
    ]
    coordinates {
(50 , 0.00319508)
(200 , 0.00186626)
(800 , 0.000963854)
(3200 , 0.000353151)
};
\addplot[
    color=blue,
    mark=square,
    ]
    coordinates {
(50 , 0.000856133)
(200 , 0.000125948)
(800 , 1.9816e-005)
(3200 , 2.51964e-006)
    };
\addplot[
    color=green,
    mark=triangle,
    ]
    coordinates {
(50 , 0.000971937)
(200 , 0.000139332)
(800 , 1.89771e-005)
(3200 , 2.71484e-006)
};    
\legend{AvSGD, USGD, AUSGD}
\end{loglogaxis}
\end{tikzpicture}
%\hspace{1cm}
%\caption{Convergence on sonar dataset with \(\lambda=\bar L/n\)}
\end{subfigure}
\caption{Expected Excess Risk with \(H=\hbox{\rm diag}(i^{-3})\), \(1\le i\le 25\), \(\sigma^{2}=1\), \(s(k)=k/2\) and \(n=10^{8}/k\).}
\label{fig:ExcessRiskBach}
\end{figure}
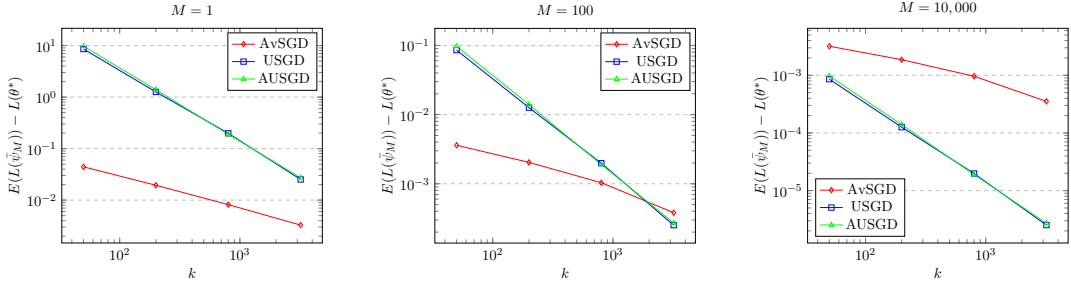

Table \ref{tab:BiasSidford} is constructed in the same way as Table \ref{tab:BiasBach}. For the two methods RMLMCB and URMLMCB and all values of \(k\) in Table \ref{tab:BiasSidford}, the estimated squared bias is within \(2\%\) from its exact value. Here again,  the variable  ``Cost'' is of order \(10^{8}\) or \(10^{9}\),
and the variable ``Time'' is roughly proportional to variable  ``Cost''.
\begin{table}
\caption{Squared Bias  of  averaged SGD with \(H=\hbox{\rm diag}(i^{-1})\), \(1\le i\le 50\), \(\sigma^{2}=0.01\), \(s(k)=k/2\) and \(n=10^{8}/k\).}
%\begin{tiny}
\begin{tabular}{rrcccc}\hline
$k$  & Method  & $n$  & Sq. Bias  & Cost  &Time\\\hline
$50$ &Exact& -- & $1.117\times10^{-1} $ &-- &--\\
$50$ &RMLMCB& $ 2\times10^6$ & $1.114\times10^{-1} $ &$6.18\times10^8 $ &$1.06\times10^3$\\
$50$ &URMLMCB& $ 2\times10^6$ & $1.116\times10^{-1} $ &$1.14\times10^9 $ &$1.94\times10^3$\\
$200$ &Exact& -- & $2.594\times10^{-2} $ &-- &--\\
$200$ &RMLMCB& $ 5\times10^5$ & $2.593\times10^{-2} $ &$6.28\times10^8 $ &$1.04\times10^3$\\
$200$ &URMLMCB& $ 5\times10^5$ & $2.593\times10^{-2} $ &$1.26\times10^9 $ &$2.08\times10^3$\\
$800$ &Exact& -- & $4.001\times10^{-4} $ &-- &--\\
$800$ &RMLMCB& $ 1.25\times10^5$ & $4.002\times10^{-4} $ &$6.22\times10^8 $ &$1.03\times10^3$\\
$800$ &URMLMCB& $ 1.25\times10^5$ & $4.000\times10^{-4} $ &$1.14\times10^9 $ &$1.89\times10^3$\\
$3200$ &Exact& -- & $5.096\times10^{-9} $ &-- &--\\
$3200$ &RMLMCB& $ 3.125\times10^4$ & $5.168\times10^{-9} $ &$6.08\times10^8 $ &$1.01\times10^3$\\
$3200$ &URMLMCB& $ 3.125\times10^4$ & $5.078\times10^{-9} $ &$1.16\times10^9 $ &$1.94\times10^3$\\

\hline\end{tabular}
% \end{tiny}
\label{tab:BiasSidford}
\end{table}

Table \ref{tab:BiasVarSidford} is built in the same manner as Table \ref{tab:BiasVarBach}.
When \(\psi=\bar\theta_{k}\), the values of   \(||\bar\psi-\theta^{*}||^{2}_{H}\) in Table \ref{tab:BiasVarSidford} differ by less than \(0.2\%\) from the exact values of the squared bias  in Table \ref{tab:BiasSidford} for all values of \(k\) expect when \(k=3200\), in which case both quantities are of order \(10^{-8}\) or less.
When  \(\psi\) is equal to \(\hat f_{k}\) or  \(\hat f_{\text{avg},k}\), the values of   \(||\bar\psi-\theta^{*}||^{2}_{H}\) in Table \ref{tab:BiasVarSidford} are of order \(10^{-6}\) or less. For each of the three estimators of \(\theta^{*}\), the variance computed via  \eqref{eq:EmpVarPsiEst} is within \(5\%\) of the variance estimated via    \eqref{eq:EmpVarPsiFirst}. Once again, the variance of the three estimators decreases as \(k\) increases. For small values of \(k\), the variances of  \(\hat f_{k}\) and of \(\hat f_{\text{avg},k}\) are much larger than that of  \(\bar\theta_{k}\), but the three variances have a similar order of magnitude for large values of \(k\). This can be explained intuitively by the fast decay of the bias of  \(\bar\theta_{k}\), which is a source of the variance of  \(\hat f_{k}\) and of \(\hat f_{\text{avg},k}\). Here again, for each  of   the estimators \(\hat f_{k}\) and \(\hat f_{\text{avg},k}\), the variable ``Cost'' is about twice  that of \(\bar\theta_{k}\),  the variances of  \(\hat f_{k}\) and of \(\hat f_{\text{avg},k}\) are similar for all values of \(k\), and  the variable ``Time'' (resp. ``Time of Est.'') is roughly proportional to variable  ``Cost'' (resp. ``Cost of Est.'').

\begin{table}
\caption{Squared Bias and Variance  with \(H=\hbox{\rm diag}(i^{-1})\), \(1\le i\le 50\), \(\sigma^{2}=0.01\), \(s(k)=k/2\) and \(n=10^{8}/k\).}
\begin{tiny}
\begin{tabular}{rrrclclrrr}\hline
$k$  & Method  & $n$  & Sq. Dist. &  Variance & Cost  & Est. Variance & Cost of Est. &Time & Time of Est.\\\hline
$50$ &AvSGD& $ 2\times10^6$ & $1.116\times10^{-1} $ &$0.036 $ &$9.80\times10^7 $ &$0.036 $ &$2.98\times10^8 $ &$166$ &$514$\\
$50$ &USGD& $ 2\times10^6$ & $1.478\times10^{-6} $ &$3.9 $ &$1.90\times10^8 $ &$3.9 $ &$3.91\times10^8 $ &$320$ &$626$\\
$50$ &AUSGD& $ 2\times10^6$ & $3.139\times10^{-6} $ &$4.1 $ &$1.89\times10^8 $ &$4.1 $ &$3.99\times10^8 $ &$316$ &$646$\\
$200$ &AvSGD& $ 5\times10^5$ & $2.594\times10^{-2} $ &$0.015 $ &$9.95\times10^7 $ &$0.015 $ &$3.00\times10^8 $ &$168$ &$478$\\
$200$ &USGD& $ 5\times10^5$ & $1.067\times10^{-6} $ &$0.37 $ &$1.90\times10^8 $ &$0.37 $ &$3.88\times10^8 $ &$309$ &$621$\\
$200$ &AUSGD& $ 5\times10^5$ & $9.334\times10^{-7} $ &$0.37 $ &$1.93\times10^8 $ &$0.37 $ &$3.95\times10^8 $ &$322$ &$638$\\
$800$ &AvSGD& $ 1.25\times10^5$ & $3.996\times10^{-4} $ &$0.0017 $ &$9.99\times10^7 $ &$0.0017 $ &$3.00\times10^8 $ &$169$ &$480$\\
$800$ &USGD& $ 1.25\times10^5$ & $6.047\times10^{-8} $ &$0.0085 $ &$1.87\times10^8 $ &$0.0085 $ &$3.88\times10^8 $ &$306$ &$621$\\
$800$ &AUSGD& $ 1.25\times10^5$ & $1.121\times10^{-7} $ &$0.0082 $ &$1.91\times10^8 $ &$0.0082 $ &$3.91\times10^8 $ &$343$ &$632$\\
$3200$ &AvSGD& $ 3.125\times10^4$ & $2.005\times10^{-8} $ &$0.00045 $ &$1.00\times10^8 $ &$0.00043 $ &$3.00\times10^8 $ &$169$ &$480$\\
$3200$ &USGD& $ 3.125\times10^4$ & $1.569\times10^{-8} $ &$0.00045 $ &$1.92\times10^8 $ &$0.00043 $ &$3.90\times10^8 $ &$318$ &$688$\\
$3200$ &AUSGD& $ 3.125\times10^4$ & $1.363\times10^{-8} $ &$0.00045 $ &$1.85\times10^8 $ &$0.00047 $ &$3.88\times10^8 $ &$316$ &$629$\\
\hline\end{tabular}
 \end{tiny}
\label{tab:BiasVarSidford}
\end{table}
Fig. \ref{fig:TimeVarianceSidford} is built in the same way as Fig.~\ref{fig:TimeVarianceBach}.  Here again,   the efficiency of   \(\hat f_{k}\) (resp. \(\hat f_{\text{avg},k}\)) increases as \(k\) increases. 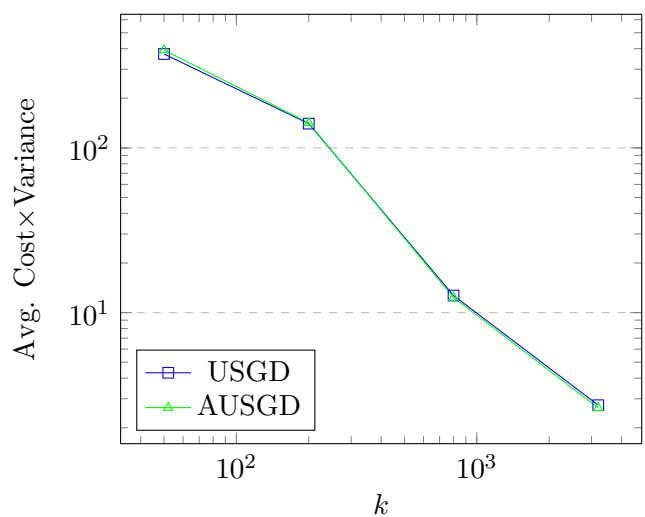
\begin{figure}
%\begin{tiny}
\begin{center}
\begin{tikzpicture}
\begin{loglogaxis}[
    title={},
    xlabel={$k$},
 ylabel={Avg. Cost$\times$Variance},
  legend pos=south west,
    ymajorgrids=true,
    grid style=dashed,
]
\addplot[
    color=blue,
    mark=square,
    ]
    coordinates {
(50 , 371.389)
(200 , 140.252)
(800 , 12.6958)
(3200 , 2.74194)
    };
\addplot[
    color=green,
    mark=triangle,
    ]
    coordinates {
(50 , 392.177)
(200 , 141.737)
(800 , 12.3552)
(3200 , 2.63982)
};    
\legend{USGD, AUSGD}
\end{loglogaxis}
\end{tikzpicture}
\end{center}
%\end{tiny}
\caption{Product of  Average Running Time and of Variance with \(H=\hbox{\rm diag}(i^{-1})\), \(1\le i\le 50\), \(\sigma^{2}=0.01\), \(s(k)=k/2\) and \(n=10^{8}/k\).}
\label{fig:TimeVarianceSidford}
\end{figure}

Fig. \ref{fig:ExcessRiskSidford} is constructed in the same manner as Fig. \ref{fig:ExcessRiskBach}. One again,  the expected excess risk of \(\bar\psi_{M}\) is essentially the same when \(\psi\) equals \(\hat f_{k}\) or  \(\hat f_{\text{avg},k}\). Moreover, for \(M=1\) or \(M=100\),   the expected excess risk of \(\bar\psi_{M}\) is in general the smallest when  \(\psi=\bar\theta_{k}\), and the opposite happens when \(M=10^4\).  
\begin{figure}
\centering
\begin{subfigure}[b]{0.3\textwidth}
\centering
\begin{tikzpicture}[scale=.5]
\begin{loglogaxis}[
    title={$M = 1$},
    xlabel={$k$},
    ylabel={$E(L(\bar \psi_{M}))-L( \theta ^{*})$},
 %   legend pos=south west,
%legend style={at={(0.5,-0.25)},anchor=north,   
ymajorgrids=true,
    grid style=dashed,
]
 \addplot[
    color=red,
    mark=diamond,
    ]
    coordinates {
(50 , 0.0737852)
(200 , 0.0203559)
(800 , 0.00106433)
(3200 , 0.000223547)
};
\addplot[
    color=blue,
    mark=square,
    ]
    coordinates {
(50 , 1.95654)
(200 , 0.185012)
(800 , 0.00423576)
(3200 , 0.000223239)
    };
\addplot[
    color=green,
    mark=triangle,
    ]
    coordinates {
(50 , 2.06959)
(200 , 0.183364)
(800 , 0.00404257)
(3200 , 0.000223418)
};    
\legend{AvSGD, USGD, AUSGD}
\end{loglogaxis}
\end{tikzpicture}
%\hspace{1cm}
%\caption{Convergence on sonar dataset with \(\lambda=\bar L/n\)}
\end{subfigure}
\begin{subfigure}[b]{0.3\textwidth}
\centering
\begin{tikzpicture}[scale=.5]
\begin{loglogaxis}[
    title={$M = 100$},
    xlabel={$k$},
    ylabel={$E(L(\bar \psi_{M}))-L( \theta ^{*})$},
 %   legend pos=south west,
    ymajorgrids=true,
    grid style=dashed,
]
 \addplot[
    color=red,
    mark=diamond,
    ]
    coordinates {
(50 , 0.0560051)
(200 , 0.0130447)
(800 , 0.000208671)
(3200 , 2.23799e-006)
};
\addplot[
    color=blue,
    mark=square,
    ]
    coordinates {
(50 , 0.0195654)
(200 , 0.00185012)
(800 , 4.23576e-005)
(3200 , 2.23239e-006)
    };
\addplot[
    color=green,
    mark=triangle,
    ]
    coordinates {
(50 , 0.0206959)
(200 , 0.00183364)
(800 , 4.04257e-005)
(3200 , 2.23418e-006)
};    
\legend{AvSGD, USGD, AUSGD}
\end{loglogaxis}
\end{tikzpicture}
%\hspace{1cm}
\end{subfigure}
\begin{subfigure}[b]{0.3\textwidth}
\centering
\begin{tikzpicture}[scale=.5]
\begin{loglogaxis}[
    title={$M = 10,000$},
    xlabel={$k$},
    ylabel={$E(L(\bar \psi_{M}))-L( \theta ^{*})$},
    legend pos=south west,
    ymajorgrids=true,
    grid style=dashed,
]
 \addplot[
    color=red,
    mark=diamond,
    ]
    coordinates {
(50 , 0.0558273)
(200 , 0.0129716)
(800 , 0.000200114)
(3200 , 2.49025e-008)
};
\addplot[
    color=blue,
    mark=square,
    ]
    coordinates {
(50 , 0.000195654)
(200 , 1.85012e-005)
(800 , 4.23576e-007)
(3200 , 2.23239e-008)
    };
\addplot[
    color=green,
    mark=triangle,
    ]
    coordinates {
(50 , 0.000206959)
(200 , 1.83364e-005)
(800 , 4.04257e-007)
(3200 , 2.23418e-008)
};    
\legend{AvSGD, USGD, AUSGD}
\end{loglogaxis}
\end{tikzpicture}
%\hspace{1cm}
%\caption{Convergence on sonar dataset with \(\lambda=\bar L/n\)}
\end{subfigure}
\caption{Expected Excess Risk with  \(H=\hbox{\rm diag}(i^{-1})\), \(1\le i\le 50\), \(\sigma^{2}=0.01\), \(s(k)=k/2\) and \(n=10^{8}/k\).}
\label{fig:ExcessRiskSidford}
\end{figure}
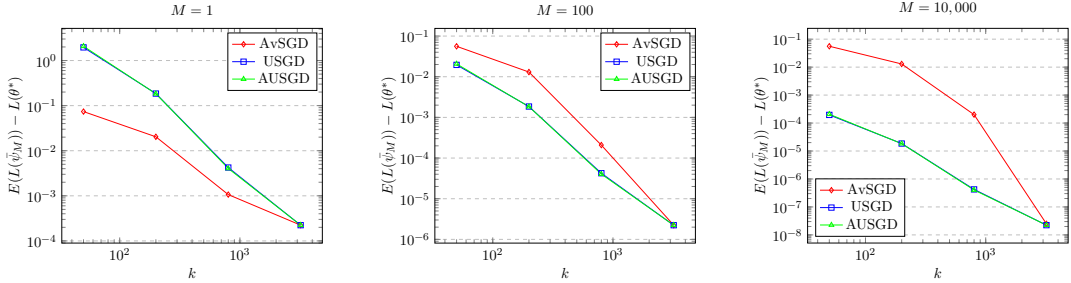
\subsection{Discussion}
Figs. \ref{fig:ExcessRiskBach} and \ref{fig:ExcessRiskSidford}   suggest that, for moderate values of \(M\),  it is more efficient to sample \(M\) independent copies of \(\bar\theta_{k}\) than  of \(\hat f_{k}\) or of \(\hat f_{\text{avg},k}\). But the contrary is true for very large values of \(M\) if the squared bias of  \(\hat f_{k}\) is non-negligible. Given \(k\ge2\) and \(\epsilon>0\), we can use \eqref{eq:ExpectedOptGapM} together with estimations of the squared bias  of \(\bar\theta_{k}\)   and of the variance of  \(\bar\theta_{k}\) and   of \(\hat f_{k}\) to determine an integer \(M\) with\begin{equation}\label{eq:target}
E(L(\bar\psi_{M}))-L( \theta ^{*})\le\epsilon^{2},
\end{equation}where \(\psi\) is either   \(\bar\theta_{k}\)  or \(\hat f_{k}\), without requiring knowledge of either \(H\) or  \(\theta^{*}\). Consider for instance the first distribution, and assume that  \(k=800\). The estimator URMLMCB in Table \ref{tab:BiasBach}  shows that \(||E(\bar\theta_{k})-\theta^{*}||^{2}_{H}\approx1.906\times10^{-3}\), while the variance estimator in Table \ref{tab:BiasVarBach} shows that \(\var_{H}(\bar\theta_{k})\approx0.014\) and \(\var_{H}(\hat f_{k})\approx0.37\). Using \eqref{eq:ExpectedOptGapM} and the equality \(E(\hat f_{k})=\theta^{*}\), it follows that  \(M=149\) (resp. \(M=185\)) independent copies of    \(\bar\theta_{k}\) (resp. \(\hat f_{k}\))  achieve \eqref{eq:target},    when \(\epsilon^{2}=10^{-3}\). However, when \(\epsilon^{2}=10^{-4}\), no integer \(M\)   achieves  \eqref{eq:target}    for the estimator     \(\bar\theta_{k}\), and   \eqref{eq:target}    is achieved    for  \(\hat f_{k}\)   when \(M=1850\). A similar calculation can be performed for \(\hat f_{\text{avg},k}\).  
\section{Conclusion}\label{se:conc}
We have studied the  least squares regression problem using the on-line framework of \citet{bachMoulines2013non}.   The expected number of time-steps of our unbiased estimator \(\hat f_{k}\)  of \(\theta^{*}\)   is of order \(k\), while the expected excess risk  of  \(\hat f_{k}\)  is \(O(1/k)\). The constant behind the \(O\) notation is, up to a poly-logarithmic factor, no greater  than the leading constant in  the bound on  the expected excess risk of \(\bar\theta_{k}\)  shown by  \citet{bachMoulines2013non}  for \(0<\gamma<1/R^{2}\). We   provide both a biased and  unbiased estimator of the squared bias    of  \(\bar\theta_{k}\),  and an unbiased estimator of the variance of  \(\bar\theta_{k}\) and of  \(\hat f_{k}\), that do not require knowledge of either \(H\) or  \(\theta^{*}\). This allows the numerical estimation of the expected excess risk of the average of \(M\) independent copies of \(\bar\theta_{k}\) and  of  \(\hat f_{k}\), for any integer \(M>1\). We also describe an average-start version of all our estimators, with similar properties. Our numerical experiments confirm our theoretical findings.
As a byproduct of our techniques, we prove an upper bound on the expected excess risk of  \(\bar\theta_{k}\) that is no worse than that  of \citet{bachMoulines2013non}, up to a constant factor, but with step-sizes twice as large. We provide an example where the expected excess risk of  \(\bar\theta_{k}\) goes to infinity with \(k\) for even larger step-sizes. Extensions of our research to accelerated optimization methods and to the minimization of other classes of functions is left for future work.
\section*{Acknowledgments} The author thanks Francis Bach and Aaron Sidford for helpful conversations.
\bibliography{poly}
\appendix
\section{Preliminary matrix inequalities}
\begin{lemma}
\label{le:traceDecrease}Let \(B\) be a symmetric positive semi-definite  \(d\times d\) matrix,   and let \(P\) be a random symmetric  \(d\times d\) matrix such that \(E(P^{2})\preccurlyeq I\). Then \(E(\tr(PBP))\leq\tr(B)\).
\end{lemma}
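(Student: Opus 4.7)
The plan is a short chain of standard manipulations exploiting the cyclic property of trace together with the fact that the trace of a product of two positive semi-definite matrices is nonnegative.

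First, since $P$ is symmetric, the cyclic invariance of trace gives
\begin{equation*}
\tr(PBP) = \tr(BP^2).
\end{equation*}
Taking expectations and using the linearity of both $\tr(\cdot)$ and $E(\cdot)$ yields $E(\tr(PBP)) = \tr(B \, E(P^2))$. So it suffices to show that $\tr(BM) \leq \tr(B)$ whenever $M := E(P^2)$ satisfies $M \preccurlyeq I$ and $B \succcurlyeq 0$.

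The key auxiliary fact is that for any symmetric positive semi-definite $d \times d$ matrices $A$ and $C$, one has $\tr(AC) \geq 0$. Indeed, writing $A = \sqrt{A}\sqrt{A}$ and using cyclicity gives $\tr(AC) = \tr(\sqrt{A}\, C\, \sqrt{A})$, and $\sqrt{A}\, C\, \sqrt{A}$ is positive semi-definite (being of the form $X^T C X$ with $C \succcurlyeq 0$), hence has nonnegative trace. Applying this with $A = B$ and $C = I - M$ — both positive semi-definite by hypothesis — gives $\tr(B(I - M)) \geq 0$, i.e., $\tr(BM) \leq \tr(B)$, which completes the argument.

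There is essentially no obstacle here; the only mild subtlety is justifying $E(\tr(BP^2)) = \tr(B\, E(P^2))$, which follows at once from linearity once one notes that $\tr(BP^2)$ is a linear combination of the entries of $P^2$.
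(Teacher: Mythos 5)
Your proof is correct and follows essentially the same route as the paper: both arguments rest on the cyclic property of the trace and a conjugation by a matrix square root to reduce the claim to monotonicity of the trace on positive semi-definite matrices. The paper writes \(E(\tr(PBP))=\tr(\sqrt{B}E(P^{2})\sqrt{B})\le\tr(B)\) directly, whereas you route through the auxiliary fact \(\tr(AC)\ge0\) for positive semi-definite \(A\) and \(C\); this is only a cosmetic reorganization of the same idea.
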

\begin{proof}
Since \(\tr(DC)=\tr(CD)\), 
\begin{eqnarray*}
E(\tr(PBP))
&=&E(\tr(\sqrt{B}P^{2}\sqrt{B}))\\
&=&\tr(\sqrt{B}E(P^{2})\sqrt{B})\\
&\le&\tr(B),
\end{eqnarray*}
where the last equation follows from the inequality \(\sqrt{B}E(P^{2})\sqrt{B}\preccurlyeq B\). 
\end{proof}
\begin{lemma}\label{le:matrixExpectationAndTrace}Let \((P_{t},t\ge0)\) be a random sequence of independent and identically distributed \(d\times d\) symmetric matrices such that \(E({P_{0}}^2)\preccurlyeq I-A\), where \(A\) is a symmetric positive semi-definite  \(d\times d\) matrix. For \(t\geq0\), let  \(M_{t}:=P_{{t-1}}P_{{t-2}}\cdots P_{{0}}\), with \(M_{0}:=I\). Then, for \(t\geq0\),  \begin{equation}\label{eq:upperBoundMatrixExpectations}
\sum^{t}_{j=0}E({M_{j}}^{T}AM_{j})\preccurlyeq I,
\end{equation}
 and\begin{equation}\label{eq:UpperBoundTrace}
E(\tr({M_{t}}^{T}AM_{t}))\leq \frac{d}{t+1}.
\end{equation} 
\end{lemma}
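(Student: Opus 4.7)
The plan is to establish \eqref{eq:upperBoundMatrixExpectations} via a telescoping argument, then deduce \eqref{eq:UpperBoundTrace} by combining the resulting summability with a monotonicity property of $a_t := E(\tr(M_t^T A M_t))$.

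For \eqref{eq:upperBoundMatrixExpectations}, since $P_j$ is independent of $M_j$ and symmetric, conditioning on $M_j$ gives $E(M_{j+1}^T M_{j+1}\mid M_j) = M_j^T E(P_j^2) M_j \preccurlyeq M_j^T (I - A) M_j = M_j^T M_j - M_j^T A M_j$, where the hypothesis $E(P_0^2)\preccurlyeq I - A$ is used in the inequality. Taking full expectations and rearranging yields $E(M_j^T A M_j) \preccurlyeq E(M_j^T M_j) - E(M_{j+1}^T M_{j+1})$. Summing from $j=0$ to $t$ telescopes to $E(M_0^T M_0) - E(M_{t+1}^T M_{t+1}) = I - E(M_{t+1}^T M_{t+1}) \preccurlyeq I$, which is \eqref{eq:upperBoundMatrixExpectations}.

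Taking the trace in \eqref{eq:upperBoundMatrixExpectations} gives $\sum_{j=0}^t a_j \leq d$. If I can show that $(a_j)$ is nonincreasing, then since $a_j \geq 0$ I immediately obtain $(t+1)a_t \leq \sum_{j=0}^t a_j \leq d$, yielding \eqref{eq:UpperBoundTrace}. To prove the monotonicity I would introduce the linear superoperator $\Phi(X) := E(P_0 X P_0)$ on symmetric matrices. Using the independence of $P_t$ and $M_t$, conditioning on $M_t$ yields $E(M_{t+1}^T X M_{t+1} \mid M_t) = M_t^T \Phi(X) M_t$ for any symmetric $X$, and a straightforward induction on $t$ then gives $E(M_t^T X M_t) = \Phi^t(X)$. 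In particular, $U_t := E(M_t^T A M_t) = \Phi^t(A)$ is deterministic and positive semi-definite, and satisfies $U_{t+1} = \Phi(U_t)$. Applying Lemma \ref{le:traceDecrease} to $B = U_t$ and $P = P_0$, which is legitimate because $E(P_0^2) \preccurlyeq I - A \preccurlyeq I$, gives $a_{t+1} = \tr(\Phi(U_t)) = E(\tr(P_0 U_t P_0)) \leq \tr(U_t) = a_t$, establishing the monotonicity.

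The main subtlety is that one cannot expect the matrix-level inequality $\Phi(A) \preccurlyeq A$ to hold (it fails even for simple deterministic examples when $A$ and $P_0$ do not commute), so passing through the trace via Lemma \ref{le:traceDecrease} is essential; only after contracting to a scalar quantity does the sub-unital nature of $\Phi$ (encoded in $E(P_0^2) \preccurlyeq I$) become usable.
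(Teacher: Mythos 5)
Your proof is correct, and its overall architecture coincides with the paper's: the telescoping argument for \eqref{eq:upperBoundMatrixExpectations} is identical, and \eqref{eq:UpperBoundTrace} is obtained in both cases by showing that $a_t:=E(\tr(M_t^TAM_t))$ is nonincreasing (via Lemma \ref{le:traceDecrease}) and combining this with $\sum_{j=0}^t a_j\le d$, which follows from taking the trace in \eqref{eq:upperBoundMatrixExpectations}. The only genuine difference is the device used to make Lemma \ref{le:traceDecrease} applicable in the monotonicity step. The paper exploits the distributional time-reversal symmetry $M_t\sim M_t^T$ (valid because the $P_i$ are i.i.d.\ and symmetric) to replace $E(\tr(M_t^TAM_t))$ by $E(\tr(M_tAM_t^T))$; in the reversed product the newly applied factor $P_t$ sits on the outside, so the lemma applies directly with the random $B=M_tAM_t^T$ after conditioning. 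You instead iterate the superoperator $\Phi(X)=E(P_0XP_0)$ to show $U_t:=E(M_t^TAM_t)=\Phi^t(A)$ is a deterministic positive semi-definite matrix satisfying $U_{t+1}=\Phi(U_t)$, and then apply the lemma with the deterministic $B=U_t$. Your route costs an extra induction but avoids the (slightly non-obvious) distributional identity, and it makes explicit that $U_t$ evolves autonomously under a trace-nonincreasing map; the paper's route is shorter but leans on the i.i.d.\ structure in a way yours does not. Both are complete and rigorous.
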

\begin{proof}
For \(j\geq0\), we have \(M_{j+1}=P_{j}M_{j}\). Thus\begin{eqnarray*}E({M_{j+1}}^{T}M_{j+1})&=&E({M_{j}}^{T}{P_{j}}^{2}M_{j})\\&\preccurlyeq&E({M_{j}}^{T}(I-A)M_{j})
\\&=&E({M_{j}}^{T}M_{j})- E({M_{j}}^{T}AM_{j}),\end{eqnarray*} where the second equation follows from the inequality \(E({P_{j}}^2)\preccurlyeq I-A\) and the independence of \(P_{j}\) and \(M_{j}\). 
Thus  \begin{displaymath}
 E({M_{j}}^{T}AM_{j})\preccurlyeq E({M_{j}}^{T}M_{j})-E({M_{j+1}}^{T}M_{j+1}).
\end{displaymath}Summing over \(j\) and noting that \({M_{t+1}}^{T}M_{t+1}\) is symmetric positive semi-definite implies \eqref{eq:upperBoundMatrixExpectations}.

We now prove~\eqref{eq:UpperBoundTrace}.  For \(t\geq0\), we have \begin{eqnarray*}
E(\tr(M_{t+1}A{M_{t+1}}^{T}))&=&E(\tr(P_{t}M_{t}A{M_{t}}^{T}P_{t}))\\
&\le&E(\tr(M_{t}A{M_{t}}^{T})),
\end{eqnarray*}
where the second equation follows from Lemma~\ref{le:traceDecrease} and the independence of \(P_{t}\) and \(M_{t}\). Thus the sequence  \((E(\tr(M_{t}A{M_{t}}^{T})),t\geq0)\) is decreasing. Because \(M_{t}\sim{M_{t}}^{T}\), we have \(E({M_{t}}^{T}AM_{t})=E(M_{t}A{M_{t}}^{T})\). Hence the sequence  \((E(\tr({M_{t}}^{T}AM_{t})),t\geq0)\) is decreasing as well.  Taking the trace in   \eqref{eq:upperBoundMatrixExpectations} implies~\eqref{eq:UpperBoundTrace}.
\end{proof}
\begin{lemma}\label{le:matrixExpectation}Let \((P_{t},t\ge0)\) be a random sequence of    \(d\times d\) symmetric matrices and let \((v_{t},t\ge0)\) be a random  sequence of   \(d\)-dimensional  column vectors.  Assume that the pairs    \((P_{t},v_{t}),t\ge0\), are independent, and that \(v_{t}\)  and each component of \(P_{t}\) is square-integrable. Suppose also that  \(E(v_{t})=0\) and \(E(v_{t}{v_{t}}^{T})\preccurlyeq \zeta(I-E({P_{t}}^2))\), where \(\zeta\) is a non-negative constant.   Define the random sequence  \((w_{t},t\ge0)\) of   \(d\)-dimensional  column vectors via the recursion \(w_{t+1}=P_{t}w_{t}+v_{t}\), \(t\geq0\), with \(w_{0}=0\). Then, for \(t\geq0\), \begin{equation}\label{eq:UpperBoundMatrix}
E(w_{t}{w_{t}}^T)\preccurlyeq \zeta I.
\end{equation} 
\end{lemma}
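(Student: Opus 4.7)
The plan is to prove the bound by induction on $t$, with the base case $t=0$ holding trivially since $w_0 = 0$ forces $E(w_0 w_0^T) = 0 \preccurlyeq \zeta I$. The inductive expansion of $w_{t+1} w_{t+1}^T = (P_t w_t + v_t)(P_t w_t + v_t)^T$ produces a diagonal quadratic piece $P_t w_t w_t^T P_t$, a noise piece $v_t v_t^T$, and two cross terms; the argument reduces to showing (a) the cross terms have zero expectation, and (b) the remaining two pieces combine via the hypothesis on $E(v_t v_t^T)$.

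For the cross terms, I would first establish by a trivial secondary induction that $E(w_t) = 0$ for every $t$: since $w_t$ is a function of $(P_s, v_s)$, $s < t$, and the pair $(P_t, v_t)$ is independent of this history, $E(w_{t+1}) = E(P_t) E(w_t) + E(v_t) = 0$. Then, because $(P_t, v_t)$ is independent of $w_t$, conditioning on $w_t$ and using $E(v_t) = 0$ together with the linearity in $w_t$ yields $E(P_t w_t v_t^T) = 0$, and symmetrically for the other cross term. This is the only place the independence between $(P_t,v_t)$ and $w_t$ really bites in a subtle way, but it is the standard ``past is independent of present'' observation for the sequence.

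For the quadratic piece, conditioning again on $P_t$ (which is independent of $w_t$) gives $E(P_t w_t w_t^T P_t \mid P_t) = P_t E(w_t w_t^T) P_t$. Invoking the induction hypothesis $E(w_t w_t^T) \preccurlyeq \zeta I$ and conjugating by the symmetric matrix $P_t$ preserves the ordering, so $P_t E(w_t w_t^T) P_t \preccurlyeq \zeta P_t^2$ almost surely; taking expectations gives $E(P_t w_t w_t^T P_t) \preccurlyeq \zeta E(P_t^2)$. Adding the noise term and applying the hypothesis $E(v_t v_t^T) \preccurlyeq \zeta(I - E(P_t^2))$ then telescopes to
\begin{equation*}
E(w_{t+1} w_{t+1}^T) \preccurlyeq \zeta E(P_t^2) + \zeta(I - E(P_t^2)) = \zeta I,
\end{equation*}
closing the induction.

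The main obstacle, if any, is not technical but bookkeeping: one has to be careful that $P_t$ is only assumed symmetric (not positive semidefinite) so inequalities like $A \preccurlyeq B \Rightarrow P_t A P_t \preccurlyeq P_t B P_t$ must be justified via the quadratic-form characterization $x^T P_t (B-A) P_t x = (P_t x)^T (B-A)(P_t x) \geq 0$, rather than by any square-root manipulation. Everything else is a direct consequence of the independence of $(P_t, v_t)$ from the past and the fact that $E(v_t)=0$.
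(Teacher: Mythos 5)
Your proof is correct and follows essentially the same route as the paper: induction on $t$, a secondary induction giving $E(w_t)=0$, vanishing cross terms via the independence of $w_t$ from the pair $(P_t,v_t)$, and the conjugation inequality $P_t E(w_t w_t^T)P_t \preccurlyeq \zeta P_t^2$ justified through the quadratic form. One small fix in the cross-term step: $E(P_t w_t v_t^T)=0$ follows by conditioning on $(P_t,v_t)$ and using $E(w_t)=0$ (the fact you prepared for exactly this purpose), not from $E(v_t)=0$ --- conditioning on $w_t$ and invoking $E(v_t)=0$ would additionally require $P_t$ and $v_t$ to be independent of each other, which is not assumed and fails in the paper's application where $P_t=I-\gamma x_t x_t^T$ and $v_t=\gamma(y_t-x_t^T\theta^*)x_t$ are built from the same sample.
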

\begin{proof}
Because \(w_{t}\) is a deterministic function of     \((P_{j},v_{j}),0\le j\le t-1\),  it is  independent of \((P_{t},v_{t})\). Consequently, it can be shown by induction on \(t\) that \(E(w_{t})=0\) for \(t\geq0\). We now show 
\eqref{eq:UpperBoundMatrix}  by induction on \(t\). 
\eqref{eq:UpperBoundMatrix} clearly holds for \(t=0\). Assume that \eqref{eq:UpperBoundMatrix}  holds for \(t\). Since   \(w_{t}\) and \((P_{t},v_{t})\) are independent and  \(E(w_{t})=0\), we have \(E(P_{t}w_{t}{v_{t}}^T)=0\). As \(E((U+V)(U+V)^{T})=E(UU^{T})+E(VV^{T})\) for any random square-integrable  \(d\)-dimensional column vectors \(U\) and \(V\) with \(E(UV^{T})=0\), it follows that\begin{eqnarray*}E(w_{t+1}{w_{t+1}}^T)
&=&E(P_{t}w_{t}{w_{t}}^{T}P_{t})+E(v_{t}{v_{t}}^T)\\
&\preccurlyeq&\zeta E({P_{t}}^{2})+\zeta(I-E({P_{t}}^2))\\
&=&\zeta I,
\end{eqnarray*}
where the second equation follows from the induction hypothesis and the independence of \(w_{t}\) and \(P_{t}\). Thus  \eqref{eq:UpperBoundMatrix}  holds for \(t+1\).
\end{proof}
For any square-integrable  \(d\)-dimensional random column vectors \(U\) and \(V\), let \begin{equation*}
\cov(U,V):=E(U^T V)-E(U)^T E(V).
\end{equation*}Thus, \(\var(U)=\cov(U,U)\), and \(\cov(U,V)=0\) if \(U\) and \(V\) are independent.
For any random  square-integrable  \(d\)-dimensional column vectors \(U\),  \(V\) and \(V'\), any deterministic   \(d\times d\) matrix \(B\), and any random square-integrable  \(d\times d\)  matrix  \(B'\)  independent of \((U,V)\),  it can be shown that \(\cov(U,V+V')=\cov(U,V)+\cov(U,V')\), and   \(\cov(BU,V)=\cov(U,B^{T}V)\), with \(\cov(U,B'V)=\cov(U,E(B')V)\). Furthermore, if \(B\)    and \(B'\) are deterministic and symmetric  with \(B\preccurlyeq B'\), then \(\cov(U,BU)\le \cov(U,B'U)\). 
\begin{lemma}\label{le:rCov}Let \((P_{t},t\ge0)\) be a  random sequence of  identically distributed  \(d\times d\) matrices and let \((v_{t},t\ge0)\) be a random  sequence of   \(d\)-dimensional  column vectors. Assume that the pairs    \((P_{t},v_{t}),t\ge0\), are independent, and that \(v_{t}\)  and each component of \(P_{t}\) is square-integrable. Consider a random sequence  \((w_{t},t\ge0)\) of    \(d\)-dimensional  column vectors satisfying  the recursion \(w_{t+1}=P_{t}w_{t}+v_{t}\) for \(t\geq0\), where \(w_{0}\) is deterministic. Then, for non-negative integers \(t\) and \(j\) and any \(d\times d\) matrix \(B\), we have
\begin{equation}\label{eq:rcovEquality}
\cov(Bw_{t},w_{t+j})=\cov(Bw_{t},(E(P_{0}))^{j}w_{t}).
\end{equation}
\end{lemma}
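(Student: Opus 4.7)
\textbf{Proof plan for Lemma \ref{le:rCov}.} I would unroll the recursion over the $j$ steps starting at time $t$ and then apply the bilinearity identities for $\cov$ listed just above the lemma statement. Setting $M_{t,j} := P_{t+j-1} P_{t+j-2} \cdots P_t$ with the convention $M_{t,0} := I$, a straightforward induction on $j$ gives
\[
w_{t+j} \;=\; M_{t,j}\, w_t \;+\; \eta_{t,j}, \qquad \eta_{t,j} \;:=\; \sum_{s=0}^{j-1} P_{t+j-1} P_{t+j-2} \cdots P_{t+s+1}\, v_{t+s},
\]
where the empty product equals $I$. Both $M_{t,j}$ and $\eta_{t,j}$ are deterministic functions of the pairs $\{(P_{t+s}, v_{t+s}) : 0 \le s < j\}$, while $w_t$ is a deterministic function of the deterministic $w_0$ together with $\{(P_r, v_r) : 0 \le r < t\}$. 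The hypothesis that the pairs $(P_r, v_r)$ are independent across $r$ therefore implies that $M_{t,j}$ and $\eta_{t,j}$ are jointly independent of the pair $(Bw_t, w_t)$.

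Using the additivity property $\cov(U, V + V') = \cov(U, V) + \cov(U, V')$, I would decompose
\[
\cov(Bw_t, w_{t+j}) \;=\; \cov(Bw_t, M_{t,j} w_t) \;+\; \cov(Bw_t, \eta_{t,j}).
\]
The second summand vanishes because $\eta_{t,j}$ is independent of $Bw_t$. For the first summand, I would invoke the identity $\cov(U, B'V) = \cov(U, E(B')V)$ valid whenever the random matrix $B'$ is independent of $(U,V)$, applied with $B' = M_{t,j}$, $U = Bw_t$, $V = w_t$; this yields $\cov(Bw_t, M_{t,j} w_t) = \cov(Bw_t, E(M_{t,j}) w_t)$. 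Finally, because the $P_s$'s are mutually independent (a marginal consequence of the independence of the pairs) and identically distributed, $E(M_{t,j}) = \prod_{s=0}^{j-1} E(P_{t+s}) = (E(P_0))^j$, which matches the right-hand side of the claim.

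The only delicate step is the verification that the expectation-substitution property genuinely applies, i.e.\ that $M_{t,j}$ is independent of the \emph{joint} distribution of $(Bw_t, w_t)$, not merely of each factor. This is immediate from the partition of the index set into $\{r : r < t\}$ (which drives $w_t$) and $\{t+s : 0 \le s < j\}$ (which drives $M_{t,j}$), combined with the mutual independence of the pairs $(P_r, v_r)$; everything else is routine bookkeeping on the recursion.
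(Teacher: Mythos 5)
Your argument is correct, but it is organized differently from the paper's. The paper proves \eqref{eq:rcovEquality} by induction on \(j\), peeling off one step of the recursion at a time: it writes \(w_{t+j+1}=P_{t+j}w_{t+j}+v_{t+j}\), drops the \(v_{t+j}\) term and replaces \(P_{t+j}\) by \(E(P_{0})\) using the independence of \((P_{t+j},v_{t+j})\) from \((w_{t},w_{t+j})\), transposes \(E(P_{0})\) onto the left argument, and then applies the induction hypothesis with \(B\) replaced by \(E(P_{0})^{T}B\) --- the strengthening of the statement to ``any matrix \(B\)'' exists precisely to make this step work. You instead unroll all \(j\) steps at once into \(w_{t+j}=M_{t,j}w_{t}+\eta_{t,j}\) and apply the covariance identities a single time; this avoids the strengthened induction hypothesis and is arguably more transparent, at the cost of having to handle products of several random matrices. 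The one point you should make explicit is that \(\cov(Bw_{t},\eta_{t,j})\) and \(\cov(Bw_{t},M_{t,j}w_{t})\) are well defined, i.e.\ that \(\eta_{t,j}\) and \(M_{t,j}w_{t}\) are square-integrable: this holds because each entry of \(M_{t,j}\) and of each summand of \(\eta_{t,j}\) is a sum of products of square-integrable factors drawn from \emph{distinct, independent} pairs \((P_{r},v_{r})\), so that \(E\bigl((XY)^{2}\bigr)=E(X^{2})E(Y^{2})<\infty\) applies factor by factor (and similarly for \(M_{t,j}w_{t}\) using the independence of \(M_{t,j}\) and \(w_{t}\)). The paper's one-step induction sidesteps this verification, since it only ever multiplies \(w_{t+j}\) by a single \(P_{t+j}\). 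With that integrability remark added, your proof is complete; the independence bookkeeping you single out as the delicate step is handled exactly as you describe.
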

\begin{proof}
As in the proof of Lemma \ref{le:matrixExpectation}, it can be shown that \(P_{t}\) is independent of \(w_{t}\), for \(t\geq0\). It follows  by induction on \(t\) that \(w_{t}\) is square-integrable, for \(t\geq0\). Thus, the two sides of  \eqref{eq:rcovEquality}  are well-defined. We prove \eqref{eq:rcovEquality} by induction on \(j\). The base case \(j=0\) is trivial. Assume now that  \eqref{eq:rcovEquality} holds for  \(j\) and any \(d\times d\) matrix \(B\). Because  \((w_{t},w_{t+j})\)    is a deterministic function of     \((P_{i},v_{i}),0\le i\le t+j-1\),  it is  independent of \((P_{t+j},v_{t+j})\). Thus,
\begin{eqnarray*}
\cov(Bw_{t},w_{t+j+1})&=&\cov(Bw_{t},P_{t+j}w_{t+j}+v_{t+j})\\
&=&\cov(Bw_{t},P_{t+j}w_{t+j})\\
&=&\cov(Bw_{t},E(P_{0})w_{t+j})\\
&=&\cov(E(P_{0})^{T}Bw_{t},w_{t+j})\\
&=&\cov(E(P_{0})^{T}Bw_{t},(E(P_{0}))^{j}w_{t})\\
&=&\cov(Bw_{t},(E(P_{0}))^{j+1}w_{t}),
\end{eqnarray*}
where the second equation follows from the independence of \(w_{t}\) and \(v_{t+j}\), the third one from the independence of \(P_{t+j}\) and \((w_{t},w_{t+j})\), and the fifth one from the induction hypothesis applied to the matrix \(E(P_{0})^{T}B\). Thus   \eqref{eq:rcovEquality} holds for  \(j+1\). \end{proof}
\begin{lemma}\label{le:sumMatrixInequality}
Let \(\Sigma \) be a \(d\times d\) symmetric matrix such that \(2I-\Sigma \) is positive-definite. Then, for \(i\geq0\), we have 
 \begin{equation*}
\sum^{i-1}_{t=0}(I-\Sigma )^{2t}\Sigma \preccurlyeq(2I-\Sigma )^{-1}.
\end{equation*}
\begin{proof}
It can be shown by induction on \(i\) that, for \(i\geq0\),
 \begin{equation*}
(2I-\Sigma )\Sigma \sum^{i-1}_{t=0}(I-\Sigma )^{2t}=I-(I-\Sigma )^{2i}.
\end{equation*}Consequently,\begin{eqnarray*}\sum^{i-1}_{t=0}(I-\Sigma )^{2t}\Sigma &=&(2I-\Sigma )^{-1}(I-(I-\Sigma )^{2i})\\
&=&(2I-\Sigma )^{-1}-(I-\Sigma )^{i}(2I-\Sigma )^{-1}(I-\Sigma )^{i}.\end{eqnarray*}
As \((2I-\Sigma )^{-1}\) is positive-definite, this concludes the proof.
\end{proof}
\end{lemma}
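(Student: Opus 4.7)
The plan is to establish a closed form for the partial sum and then argue positive semi-definiteness of the remainder.

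First I would notice that every matrix involved is a polynomial in the symmetric matrix $\Sigma$, so all the relevant matrices commute with one another and are symmetric. This will let me manipulate them freely.

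The key observation I would pursue is the telescoping identity
\begin{equation*}
(I-\Sigma)^{2t}\Sigma (2I-\Sigma) = (I-\Sigma)^{2t}(2\Sigma-\Sigma^{2}) = (I-\Sigma)^{2t}\bigl(I-(I-\Sigma)^{2}\bigr) = (I-\Sigma)^{2t}-(I-\Sigma)^{2t+2}.
\end{equation*}
Summing this over $t=0,\dots,i-1$ gives
\begin{equation*}
\Bigl(\sum_{t=0}^{i-1}(I-\Sigma)^{2t}\Sigma\Bigr)(2I-\Sigma) = I-(I-\Sigma)^{2i}.
\end{equation*}
Since $2I-\Sigma$ is positive-definite, it is invertible, and multiplying both sides by $(2I-\Sigma)^{-1}$ on the right gives a closed form for the partial sum. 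Using commutativity of the polynomials in $\Sigma$, I can split $(I-\Sigma)^{2i}(2I-\Sigma)^{-1} = (I-\Sigma)^{i}(2I-\Sigma)^{-1}(I-\Sigma)^{i}$ and rewrite
\begin{equation*}
(2I-\Sigma)^{-1}-\sum_{t=0}^{i-1}(I-\Sigma)^{2t}\Sigma = (I-\Sigma)^{i}(2I-\Sigma)^{-1}(I-\Sigma)^{i}.
\end{equation*}
The right-hand side is of the form $A^{T}BA$ with $A=(I-\Sigma)^{i}$ symmetric and $B=(2I-\Sigma)^{-1}$ positive-definite, hence it is positive semi-definite. This yields the desired inequality.

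The bulk of the work is the algebraic identity; once it is obtained the positivity argument is immediate. The only subtlety worth being careful about is justifying the factorization $(I-\Sigma)^{2i}(2I-\Sigma)^{-1} = (I-\Sigma)^{i}(2I-\Sigma)^{-1}(I-\Sigma)^{i}$, which rests on the fact that everything in sight is a polynomial in $\Sigma$ and therefore commutes. If one prefers to avoid this step, one can instead verify the closed-form identity $(2I-\Sigma)\Sigma\sum_{t=0}^{i-1}(I-\Sigma)^{2t}=I-(I-\Sigma)^{2i}$ by induction on $i$, which is the main obstacle in a strictly inductive proof, and then conclude as above.
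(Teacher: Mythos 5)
Your proof is correct and follows essentially the same route as the paper: both establish the closed-form identity $\sum_{t=0}^{i-1}(I-\Sigma)^{2t}\Sigma=(2I-\Sigma)^{-1}-(I-\Sigma)^{i}(2I-\Sigma)^{-1}(I-\Sigma)^{i}$ and conclude by noting the subtracted term has the form $A^{T}BA$ with $B$ positive-definite. The only cosmetic difference is that you obtain the algebraic identity by an explicit telescoping sum where the paper invokes induction on $i$; these are interchangeable, as you yourself note.
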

\section{Proof of Lemma~\ref{le:BiasedlinReg}}
\eqref{eq:upperBoundGammaLamdaMax} 
is an immediate consequence of  \eqref{eq:HR2I} and \eqref{eq:upperBoundGamma}. For \(t\geq0\), set\begin{equation}\label{eq:PkDef}
P_{t}:=I-\gamma x_{t}{x_{t}}^{T}.
\end{equation}By the definition of \(H\), we have\begin{equation}\label{eq:E(Pt)}
E(P_{t})= I - \gamma H,
\end{equation} for \(t\geq0\). Moreover, \eqref{eq:BasicSGD} implies that \begin{equation}\label{eq:deftheta}
\theta _{t+1}=P_{t}\theta _{t}+\gamma y_{t}x_{t},
\end{equation}for \(t\geq0\). Define the random sequence of \(d\)-dimensional column vectors \((\beta_{t},t\geq0)\) through the recursion \begin{equation}\label{eq:defBeta}
\beta _{t+1}=P_{t}\beta _{t}+\gamma y_{t}x_{t},
\end{equation}    with  \(\beta_{0}=\theta^{*}\). Thus \((\beta_{t},t\geq0)\) and \((\theta_{t},t\geq0)\) satisfy the same recursion. 
\subsection{Bounding the bias}
\begin{lemma}
For \(t\geq 0\), we have
\begin{equation}
\label{eq:Ethetak}E(\theta_{t}-\theta ^{*})=(I-\gamma H)^{t}(\theta_{0}-\theta ^{*}),
\end{equation}
and
\begin{equation}
\label{eq:Ebetak}E( \beta _{t})=\theta ^{*}.
\end{equation}
 
\end{lemma}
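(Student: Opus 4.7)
The plan is to prove \eqref{eq:Ethetak} by induction on $t$, then deduce \eqref{eq:Ebetak} immediately from the observation that $(\beta_t, t \geq 0)$ obeys the same stochastic recursion as $(\theta_t, t \geq 0)$ but is initialized at $\theta^*$.

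For \eqref{eq:Ethetak}, I would start from \eqref{eq:deftheta} and rewrite $\theta_{t+1} - \theta^*$ in a form that isolates $\theta_t - \theta^*$. Specifically, subtracting $\theta^*$ from both sides of \eqref{eq:deftheta} and adding and subtracting $P_t \theta^*$ gives
\begin{equation*}
\theta_{t+1} - \theta^* = P_t(\theta_t - \theta^*) + \gamma(y_t - x_t^T\theta^*) x_t,
\end{equation*}
since $P_t \theta^* = \theta^* - \gamma x_t x_t^T \theta^*$. Now I would take expectations, using the fact that $\theta_t$ is a deterministic function of $(x_s, y_s)$ for $s \leq t-1$, hence independent of $(x_t, y_t)$, so that $E(P_t(\theta_t-\theta^*)) = E(P_t) E(\theta_t - \theta^*) = (I-\gamma H) E(\theta_t - \theta^*)$ by \eqref{eq:E(Pt)}.

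The key step is to verify that the inhomogeneous term vanishes in expectation: using the normal equation \eqref{eq:NormalEq},
\begin{equation*}
E((y_t - x_t^T\theta^*) x_t) = E(y x) - E(x x^T) \theta^* = E(yx) - H\theta^* = 0.
\end{equation*}
Thus $E(\theta_{t+1} - \theta^*) = (I-\gamma H) E(\theta_t - \theta^*)$, and \eqref{eq:Ethetak} follows by a trivial induction with base case $t=0$ (where the right-hand side is $\theta_0 - \theta^*$ by convention $M^0 = I$, matching the deterministic initial state).

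For \eqref{eq:Ebetak}, I note that \eqref{eq:defBeta} is identical in form to \eqref{eq:deftheta}, so the same derivation applies to $\beta_t$ with $\beta_0 = \theta^*$ in place of $\theta_0$. Therefore $E(\beta_t - \theta^*) = (I-\gamma H)^t(\beta_0 - \theta^*) = 0$, which yields \eqref{eq:Ebetak}. There is no real obstacle here; the only subtlety worth stating explicitly is the independence of $\theta_t$ (and $\beta_t$) from $(x_t, y_t)$, which justifies factoring the expectation of the product $P_t (\theta_t - \theta^*)$.
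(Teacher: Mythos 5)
Your proof is correct and follows essentially the same route as the paper: an induction on \(t\) using the recursion \eqref{eq:deftheta}, the independence of \(\theta_t\) and \((x_t,y_t)\), and the normal equation \eqref{eq:NormalEq}, with \eqref{eq:Ebetak} obtained by the same argument applied to the recursion started at \(\theta^*\). Recentering the recursion around \(\theta^*\) before taking expectations, rather than after as the paper does, is only a cosmetic reorganization of the same computation.
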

\begin{proof}
By \eqref{eq:BasicSGD}, \(\theta _{t}\) is  a deterministic function of \((x_{j},y_{j})\), \(0\leq j\leq t-1\). Hence  \(\theta _{t}\) is independent of \(P_{t}\). We prove~\eqref{eq:Ethetak} by induction on \(t\). Clearly, \eqref{eq:Ethetak} holds for \(t=0\). Assume that  \eqref{eq:Ethetak}  holds for \(t\). Then 
\begin{eqnarray*}
E(\theta _{t+1}-\theta ^{*})
&=&E(P_{t})E(\theta _{t})+\gamma E( yx)-\theta ^{*}\\
&=&(I -  \gamma H)E(\theta _{t})+\gamma H \theta^{*}-\theta ^{*}\\
&=&(I-\gamma H)E(\theta _{t}-\theta ^{*}),
\end{eqnarray*}where the first equation follows from  \eqref{eq:deftheta}  and the independence of   \(\theta _{t}\) and \(P_{t}\), and the second one from \eqref{eq:NormalEq}  and \eqref{eq:E(Pt)}. Thus \eqref{eq:Ethetak}  holds for \(t+1\).    
A similar inductive proof implies~\eqref{eq:Ebetak}.\end{proof} 
We now prove \eqref{eq:UpperBoundSqrBiasHBarTheta}. By~\eqref{eq:Ethetak}, for \(0\leq t\leq k-1\), 
\begin{equation*}\, ||E(\theta _{t}-\theta ^{*})||^{2}_{H}
={(\theta_{0}-\theta ^{*})}^{T}(I-\gamma H)^{2t}H(\theta_{0}-\theta ^{*}).\end{equation*}
Because the quadratic function \(v\mapsto ||v||^{2}_{H}\) is convex over \(\mathbb{R}^{d}\), and\begin{displaymath}
E(\bar \theta _{k}-\theta ^{*})=\frac{1}{k-s(k)}\sum^{k-1}_{t=s(k)}E(\theta _{t}-\theta ^{*}),
\end{displaymath} we have\begin{eqnarray*}||E(\bar \theta _{k}-\theta ^{*})||^{2}_{H}
&\le&\frac{1}{k-s(k)}\sum^{k-1}_{t=s(k)}||E(\theta _{t}-\theta ^{*})||^{2}_{H}
\\
&\le&\frac{1}{k-s(k)}\sum^{k-1}_{t=0}||E(\theta _{t}-\theta ^{*})||^{2}_{H}
\\
&=&\frac{1}{k-s(k)}\sum^{k-1}_{t=0}{(\theta_{0}-\theta ^{*})}^{T}(I-\gamma H)^{2t}H (\theta_{0}-\theta ^{*})\\
&\le&\frac{1}{\gamma( k-s(k))} {(\theta_{0}-\theta ^{*})}^{T}(2I-\gamma H)^{-1} (\theta_{0}-\theta ^{*})\\
&\le&\frac{||\theta_{0}-\theta ^{*}||^{2}}{\gamma(2-\gamma \lambda_{\max})( k-s(k))}.
\end{eqnarray*}
The fourth equation follows by applying Lemma~\ref{le:sumMatrixInequality} with \(\Sigma =\gamma H\) and observing that, by  \eqref{eq:upperBoundGammaLamdaMax}, the symmetric matrix  \(2I-\gamma H \) is positive-definite.
The last equation follows by noting that the largest eigenvalue of \((2I-\gamma H)^{-1}\) is \((2-\gamma \lambda_{\max})^{-1}\).
This implies \eqref{eq:UpperBoundSqrBiasHBarTheta}. 

If \(H\) is positive-definite then, owing to \eqref{eq:upperBoundGammaLamdaMax}, the eigenvalues of \(I-\gamma H\) are less then \(1\) in absolute value which, by \eqref{eq:Ethetak}, implies that \(E(\theta_{k})\rightarrow \theta^{*}\) as \(k\) goes to infinity.

\subsection{Bounding the variance}Set \begin{equation}\label{eq:defA}
A=\gamma(2-\gamma R^{2})H.
\end{equation}
Owing to \eqref{eq:upperBoundGamma}, the matrix \(A\) is symmetric positive semi-definite. \begin{lemma}\label{le:E(P0^2)}We have
\begin{equation}\label{eq:EP0^2}
E({P_{0}}^{2})\preccurlyeq I-A.
\end{equation}  For \(t\geq0\), we have\begin{equation}\label{eq:upperBoundMatrixYk}
E((\beta _{t}-\theta^{*})(\beta _{t}-\theta^{*})^{T})\leq\frac{\gamma\sigma^{2}}{2-\gamma R^{2}} I,
\end{equation}
and\begin{equation}\label{eq:upperBoundNormYk}
E(||\beta _{t}-\theta^{*}||^{2})\leq  \frac{\gamma\sigma^{2}d}{2-\gamma R^{2}} .
\end{equation}
Furthermore, \begin{equation}\label{eq:diffXkVk}
E(|| \theta_{t}-\beta _{t}||^{2})\leq||\theta_{0}-\theta^{*}||^{2}.
\end{equation}

\end{lemma}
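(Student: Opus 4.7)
The lemma has four assertions, which I would prove in the order given, with parts two through four each leveraging part one.

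For \eqref{eq:EP0^2}, I would simply expand $P_0^2 = I - 2\gamma x_0 x_0^T + \gamma^2 \|x_0\|^2 x_0 x_0^T$, take expectations, and combine $E(x x^T) = H$ with the bound $E(\|x\|^2 x x^T) \preccurlyeq R^2 H$ from \eqref{eq:Rdef}. This yields $E(P_0^2) \preccurlyeq I - 2\gamma H + \gamma^2 R^2 H$, which equals $I - A$ by definition \eqref{eq:defA}. This step is routine algebra.

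For \eqref{eq:upperBoundMatrixYk}, I would recognize that $w_t := \beta_t - \theta^*$ satisfies a recursion of the exact type covered by Lemma \ref{le:matrixExpectation}. Subtracting $\theta^*$ from \eqref{eq:defBeta} and using $H\theta^* = E(yx)$ from \eqref{eq:NormalEq}, one obtains $w_{t+1} = P_t w_t + v_t$ with $v_t := \gamma(y_t - x_t^T \theta^*) x_t$ and $w_0 = 0$. The condition $E(v_t) = 0$ is immediate from \eqref{eq:NormalEq}; the crux is verifying the matrix inequality $E(v_t v_t^T) \preccurlyeq \zeta(I - E(P_t^2))$. Here \eqref{eq:defSigma} yields $E(v_t v_t^T) \preccurlyeq \gamma^2 \sigma^2 H$, while \eqref{eq:EP0^2} yields $I - E(P_t^2) \succcurlyeq A = \gamma(2 - \gamma R^2) H$. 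Matching coefficients forces the choice $\zeta = \gamma \sigma^2/(2 - \gamma R^2)$, which is well-defined owing to \eqref{eq:upperBoundGamma}, and reproduces the constant on the right-hand side of \eqref{eq:upperBoundMatrixYk}. Applying Lemma \ref{le:matrixExpectation} concludes this step, and \eqref{eq:upperBoundNormYk} follows by taking the trace and using $\tr(I) = d$.

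For \eqref{eq:diffXkVk}, the key observation is that the inhomogeneous term $\gamma y_t x_t$ appears identically in both \eqref{eq:deftheta} and \eqref{eq:defBeta}, so it cancels in the difference $u_t := \theta_t - \beta_t$, yielding the homogeneous recursion $u_{t+1} = P_t u_t$ with deterministic initial condition $u_0 = \theta_0 - \theta^*$. Writing $u_t = M_t u_0$ with $M_t := P_{t-1} \cdots P_0$, I would prove $E(M_t^T M_t) \preccurlyeq I$ by induction on $t$: since $M_t$ is a function of $(x_j, y_j)$ for $j < t$, it is independent of $P_t$, so $E(M_{t+1}^T M_{t+1}) = E(M_t^T E(P_t^2) M_t) \preccurlyeq E(M_t^T M_t)$, where the inequality uses \eqref{eq:EP0^2} together with $A \succcurlyeq 0$ to conclude $E(P_t^2) \preccurlyeq I$. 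Then $E(\|u_t\|^2) = u_0^T E(M_t^T M_t) u_0 \le \|u_0\|^2 = \|\theta_0 - \theta^*\|^2$, completing the proof. I do not anticipate a substantive obstacle: the architecture of Lemmas \ref{le:matrixExpectationAndTrace}--\ref{le:matrixExpectation} is purpose-built for exactly this kind of argument. The main care point is the coefficient matching in part two, where both bounds of Assumption 1 must combine just so to produce the constant $\gamma \sigma^2/(2 - \gamma R^2)$.
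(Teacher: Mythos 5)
Your proposal is correct and follows essentially the same route as the paper: expanding $P_0^2$ and invoking \eqref{eq:Rdef} for \eqref{eq:EP0^2}, applying Lemma \ref{le:matrixExpectation} to $w_t=\beta_t-\theta^*$ with exactly the coefficient matching $\zeta=\gamma\sigma^2/(2-\gamma R^2)$ for \eqref{eq:upperBoundMatrixYk}--\eqref{eq:upperBoundNormYk}, and exploiting the homogeneous recursion $\theta_{t+1}-\beta_{t+1}=P_t(\theta_t-\beta_t)$ together with independence and $E(P_t^2)\preccurlyeq I$ for \eqref{eq:diffXkVk}. The only cosmetic difference is that for the last part you pass through the product matrix $M_t$ and the inequality $E(M_t^TM_t)\preccurlyeq I$, whereas the paper directly shows the scalar sequence $E(||\theta_t-\beta_t||^2)$ is non-increasing; the underlying argument is identical.
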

\begin{proof}
We have
\begin{eqnarray*}
I - 2P_{0}+{P_{0}}^{2}&=&
(I -P_{0})^{2}\\&=&\gamma^{2}(x_{0}{x_{0}}^{T})^{2}\\
&=&\gamma^{2} ||x_{0}||^{2}{x_{0}}x_{0}^{T},
\end{eqnarray*}
where the second equation follows from \eqref{eq:PkDef}. Taking expectations and using \eqref{eq:E(Pt)} shows that \begin{eqnarray*}I -2(I - \gamma H )+E((P_{0})^{2})&=& \gamma^{2} E(||x||^{2}{x}x^{T})\\
&\preccurlyeq&\gamma^{2} R^{2}H,\end{eqnarray*}
where the second equation follows from \eqref{eq:Rdef}. This implies \eqref{eq:EP0^2}. 

We now prove \eqref{eq:upperBoundMatrixYk}. By \eqref{eq:defBeta}, for \(t\geq0\),
we have  \begin{equation*}
\beta _{t+1}-\theta^{*}=P_{t}(\beta _{t}-\theta^{*})+v_{t},
\end{equation*} where \(v_{t}=\gamma(y_{t}- x_{t}^T\theta^{*})x_{t}\).    For  \(t\geq0\), we have \(E(v_{t})=0\) because of~\eqref{eq:NormalEq}, and\begin{eqnarray*}
E(v_{t}{v_{t}}^{T})&=&\gamma^{2}E( (y- x^T\theta^{*})^{2}{x} x^{T})\\
&\preccurlyeq&\gamma^{2}\sigma^{2} H\\
&\preccurlyeq&\frac{\gamma\sigma^{2}}{2-\gamma R^{2}}(I-E({P_{0}}^{2})),
\end{eqnarray*}where the second equation follows from \eqref{eq:defSigma}, and the last one from \eqref{eq:EP0^2}. Applying Lemma~\ref{le:matrixExpectation} with \(w_{t}=\beta _{t}-\theta^{*}\) implies \eqref{eq:upperBoundMatrixYk}. As \(\tr(vv^T)=||v||^{2}\) for \(v\in\mathbb{R}^{d}\), taking the trace in  \eqref{eq:upperBoundMatrixYk} implies \eqref{eq:upperBoundNormYk}.

Next, we prove \eqref{eq:diffXkVk}.
By \eqref{eq:deftheta} and \eqref{eq:defBeta}, for \(t\geq0\), we have\begin{displaymath}
\theta_{t+1}-\beta _{t+1}=P_{t}(\theta_{t}-\beta _{t}).
\end{displaymath}Hence,  \(\theta_{t}-\beta _{t}\) is a deterministic function of \(P_{0},\dots,P_{t-1}\), and is independent of \(P_{t}\).   Thus,
\begin{eqnarray*}E(||\theta_{t+1}-\beta _{t+1}||^{2})
&=& E((\theta_{t}-\beta _{t})^{T}{P_{t}}^{2}(\theta_{t}-\beta _{t}))\\
&\le&E(|| \theta_{t}-\beta _{t}||^{2}),\end{eqnarray*}where the second equation follows from \eqref{eq:EP0^2} and the independence of \(\theta_{t}-\beta _{t}\) and \(P_{t}\). This implies \eqref{eq:diffXkVk}.
\end{proof}
\begin{lemma}\label{le:covSum} For \(0\leq j\leq h\), we have
\begin{displaymath}
\sum^{h}_{t=j}\cov(H\theta_{j},\theta_{t})\le4\left(\frac{\sigma^{2}d}{2-\gamma R^{2}}+ \frac{||\theta_{0}-\theta ^{*}||^{2}}{\gamma}\right).
\end{displaymath}
\end{lemma}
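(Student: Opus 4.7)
\textbf{Proof plan for Lemma~\ref{le:covSum}.}

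The plan is to use Lemma~\ref{le:rCov} to turn each covariance $\cov(H\theta_j, \theta_{j+i})$ into a covariance of $\theta_j$ with a deterministic matrix applied to itself, collapse the telescoping geometric sum, and finally bound the variance of $\theta_j$ via the auxiliary process $(\beta_t)$.

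First, apply Lemma~\ref{le:rCov} with $w_t = \theta_t$, $B = H$, and indices shifted so that $t \leftarrow j$, $j \leftarrow i$ (noting that $\theta_0$ is deterministic, that $(P_t,\gamma y_t x_t)_{t\ge 0}$ is a sequence of i.i.d.\ pairs by construction, and that $E(P_0) = I - \gamma H$ by \eqref{eq:E(Pt)}):
\begin{equation*}
\cov(H\theta_j,\theta_{j+i}) = \cov(H\theta_j,(I-\gamma H)^{i}\theta_j) = \cov(\theta_j, H(I-\gamma H)^{i}\theta_j),
\end{equation*}
where the second equality uses $\cov(BU,V) = \cov(U,B^T V)$ and the symmetry of $H$. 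Summing $i$ from $0$ to $h-j$, linearity of $\cov$ in its second argument gives
\begin{equation*}
\sum_{t=j}^{h}\cov(H\theta_j,\theta_t) = \cov(\theta_j,B_{h-j}\theta_j), \qquad B_{i}:=H\sum_{r=0}^{i}(I-\gamma H)^{r}.
\end{equation*}
Since $H$ and $I-\gamma H$ commute and all polynomials in $H$ are symmetric, the identity $B_{i} = \gamma^{-1}\bigl(I - (I-\gamma H)^{i+1}\bigr)$ holds (checked on each eigenspace). By \eqref{eq:upperBoundGammaLamdaMax} every eigenvalue of $I-\gamma H$ lies in $[-1,1]$, so every eigenvalue of $B_{i}$ lies in $[0,2\gamma^{-1}]$, hence $B_{h-j}\preccurlyeq 2\gamma^{-1}I$. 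By the stated monotonicity of $\cov(\cdot,\cdot\,\cdot)$ in its deterministic symmetric second slot, we obtain
\begin{equation*}
\sum_{t=j}^{h}\cov(H\theta_j,\theta_t) \le 2\gamma^{-1}\var(\theta_j).
\end{equation*}

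It remains to bound $\var(\theta_j)$. Write
\begin{equation*}
\theta_j - E(\theta_j) = \bigl(\theta_j-\beta_j-E(\theta_j-\beta_j)\bigr)+(\beta_j-\theta^*),
\end{equation*}
using $E(\beta_j)=\theta^*$ from \eqref{eq:Ebetak}. The $L^2$ triangle inequality, the bound $E(\|\theta_j-\beta_j-E(\theta_j-\beta_j)\|^2)\le E(\|\theta_j-\beta_j\|^2)\le \|\theta_0-\theta^*\|^2$ from \eqref{eq:diffXkVk}, and $E(\|\beta_j-\theta^*\|^2)\le \gamma\sigma^2 d/(2-\gamma R^2)$ from \eqref{eq:upperBoundNormYk} together give
\begin{equation*}
\sqrt{\var(\theta_j)}\le \|\theta_0-\theta^*\|+\sqrt{\gamma\sigma^2 d/(2-\gamma R^2)}.
\end{equation*}
Applying $(a+b)^2\le 2(a^2+b^2)$ yields $\var(\theta_j)\le 2\|\theta_0-\theta^*\|^2+2\gamma\sigma^2 d/(2-\gamma R^2)$. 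Substituting into the preceding display completes the proof.

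The main obstacle is the matrix step: one must correctly set up Lemma~\ref{le:rCov}, verify that the symmetric polynomial $H\sum_{r=0}^{i}(I-\gamma H)^{r}$ evaluates to $\gamma^{-1}(I-(I-\gamma H)^{i+1})$ even when $H$ is singular, and then exploit \eqref{eq:upperBoundGammaLamdaMax} to obtain the spectral bound $B_{h-j}\preccurlyeq 2\gamma^{-1}I$ so that the monotonicity of $\cov$ can be invoked. Everything else is routine given the earlier lemmas.
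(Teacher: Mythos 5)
Your proposal is correct and follows essentially the same route as the paper's proof: Lemma~\ref{le:rCov} with $B=H$, the telescoping identity $H\sum_{r=0}^{i}(I-\gamma H)^{r}=\gamma^{-1}(I-(I-\gamma H)^{i+1})$, the spectral bound $\preccurlyeq 2\gamma^{-1}I$ from \eqref{eq:upperBoundGammaLamdaMax}, and the decomposition of $\theta_j$ through $\beta_j$ using \eqref{eq:upperBoundNormYk} and \eqref{eq:diffXkVk}. The only cosmetic difference is that you bound $\var(\theta_j)$ via the $L^2$ triangle inequality plus $(a+b)^2\le 2(a^2+b^2)$, whereas the paper applies $\var(U+V)\le 2(\var(U)+\var(V))$ directly; both yield the identical constant.
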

\begin{proof}
Using  \eqref{eq:E(Pt)} and \eqref{eq:deftheta} and applying Lemma~\ref{le:rCov} with \(w_{t}=\theta _{t}\) and \(v_{t}=\gamma y_{t}x_{t}\), we have
\begin{eqnarray*}
\sum^{h}_{t=j}\cov(H\theta _{j},\theta _{t})
&=&\sum^{h}_{t=j}\cov(H\theta _{j},(I-\gamma H)^{t-j}\theta _{j})\\
&=&\sum^{h}_{t=j}\cov(\theta _{j},H(I-\gamma H)^{t-j}\theta _{j})\\
&=&\gamma^{-1}\cov(\theta _{j},(I-(I-\gamma H)^{h+1-j})\theta _{j})\\
&\le&2\gamma^{-1}\var(\theta _{j}).
\end{eqnarray*}
The third equation follows  from the identity
 \begin{equation*}
\sum^{n}_{i=0}\Sigma(I-\Sigma )^{i} =I-(I-\Sigma )^{n+1},
\end{equation*} 
for \(n\ge0\) and any \(d\times d\) matrix \(\Sigma \). The last equation follows by observing that, by  \eqref{eq:HR2I},   \( -(I-\gamma H)^{n}\preccurlyeq I\), for \(n\geq0\). On the other hand, 
\begin{eqnarray*}
\var(\theta _{j}) 
&\le& 2(\var(\beta_{j})+\var(\theta _{j}-\beta_{j})) \\
&\le& 2\left(  \frac{\gamma\sigma^{2}d}{2-\gamma R^{2}} +||\theta_{0}-\theta ^{*}||^{2}\right),
\end{eqnarray*}
where the first equation follows from the inequality  \(\var(U+V)\le2(\var(U)+\var(V))\) for  square-integrable  \(d\)-dimensional column vectors \(U\) and  \(V\), and the second one from \eqref{eq:upperBoundNormYk} and  \eqref{eq:diffXkVk}. This concludes the proof.
\end{proof}
We now prove  \eqref{eq:UpperBoundVarHBarTheta}. Let  \(k\geq1\). We have \(\var_{H}(\bar \theta _{k})=\cov(H\bar \theta _{k},\bar \theta _{k})\). Because \(H\) is symmetric positive semi-definite,\begin{eqnarray*}
\cov(H\bar \theta _{k},\bar \theta _{k})&=&\frac{1}{(k-s(k))^{2}}(\sum^{k-1}_{j=s(k)}\cov(H \theta _{j}, \theta _{j})+2\sum^{k-1}_{j=s(k)}\sum^{k-1}_{t=j+1}\cov(H \theta _{j}, \theta _{t}))
\\&\le&\frac{2}{(k-s(k))^{2}}\sum^{k-1}_{j=s(k)}\sum^{k-1}_{t=j}\cov(H \theta _{j}, \theta _{t})
\\&\le&\frac{8}{k-s(k)}\left(\frac{\sigma^{2}d}{2-\gamma R^{2}}+ \frac{||\theta_{0}-\theta ^{*}||^{2}}{\gamma}\right),
\end{eqnarray*}where the last equation follows from Lemma~\ref{le:covSum}. This implies   \eqref{eq:UpperBoundVarHBarTheta}.
\subsection{Combining bias and variance terms} 
Combining \eqref{eq:HR2I} and \eqref{eq:UpperBoundSqrBiasHBarTheta} shows that, for \(k\geq1\),\begin{equation*}
||E(\bar \theta _{k}-\theta ^{*})||^{2}_{H}\leq\frac{||\theta_{0}-\theta ^{*}||^{2}}{\gamma(2-\gamma R^{2})( k-s(k))}.
\end{equation*} Together with \eqref{eq:BiasVarianceOptGap} and \eqref{eq:UpperBoundVarHBarTheta}, this implies \eqref{eq:thMainPositiveSemiDefinite}.

\section{Proof of Proposition~\ref{pr:OptimalRadius}}
  Because \(xx^{T}=DUU^{T}+(1-D)vv^{T}\) and \(E(UU^{T})=d^{-1}I\), we have \(H=(1-p)d^{-1}I +pvv^{T}\) by its definition. As \(0<p<1\), it follows that \(H\) is positive-definite. Moreover, because \(x\) and \(y\) are independent and \(E(y)=0\), we have \(E(yx)=0\). By \eqref{eq:NormalEq}, it follows that \(L(\theta)\) attains its minimum at \(\theta^{*}=0\). Moreover, using once again the independence of \(x\) and \(y\) shows that \(E(y^{2}{x}x^{T})=\sigma^{2}H\). Hence \eqref{eq:defSigma} holds. Finally, because \(||x||^{2}=1\) with probability \(1\), we have \(E(||x||^{2}{x}x^{T})=H\). Thus \eqref{eq:Rdef} holds with \(R^{2}=1\). On the other hand, a simple calculation shows that\ \((I-\gamma H)v=av\), where \(a=-(1+(\gamma-2)d^{-1})\). By a proof similar to that of \eqref{eq:Ethetak}, we have \(E(\theta_{k})=(I-\gamma H)^{k}\theta_{0}\) for \(k\geq0\). Because \((I-\gamma H)^{k}v=a^{k}v\), it follows that, for \(k\geq0\), \begin{equation*}
v^T E(\theta_{k})=a^{k}v^T \theta_{0}.
\end{equation*}
Hence, for \(k\geq1\), \begin{equation*}
v^T E(\bar\theta_{k})=\frac{1-a^{k}}{1-a}v^T \theta_{0}.
\end{equation*} Moreover, because  \(pvv^{T}\preccurlyeq H\), it follows from \eqref{eq:minimumf}  that,  for \(k\geq1\),  
\begin{equation*}
L(\bar\theta_{k})-L(0)\geq\frac{p}{2}(v^T \,\bar\theta_{k})^{2}.
\end{equation*}
Taking expectations and using Jensen's inequality implies that \begin{displaymath}
E(L(\bar\theta_{k}))-L(0)\geq\frac{p}{2}\left(v^T \,E(\bar\theta_{k})\right)^{2}.
\end{displaymath}
Consequently, \begin{displaymath}
E(L(\bar\theta_{k}))-L(0)\geq\frac{p}{2}\left(\frac{1-a^{k}}{1-a}\right)^{2}(v^T \theta_{0})^{2}.
\end{displaymath}
Because \(a<-1\), this implies that \(E(L(\bar \theta_{k}))\) goes to infinity as \(k\) goes to infinity. 
 
\section{Proof of   Lemma~\ref{le:upperBoundDiffY}}
Define \(P_{t}\), \(t\in\mathbb{Z}\), via  \eqref{eq:PkDef}, and \(A\) via \eqref{eq:defA}. For \(m\in\mathbb{Z}\) and \(t\geq m\), define  \(\beta_{m:t}\) by setting \(\beta_{m:m}=\theta^{*}\) and  
\begin{equation*}
 \beta_{m:t+1}=P_{t}\beta_{m:t} +\gamma y_{t}x_{t},
\end{equation*}
\(t\geq m\). Thus,  \((\beta_{m:t}, t\geq m)\) is a Markov chain starting at \(\theta^{*}\) and satisfying the same recursion as  \((\theta_{m:t}, t\geq m)\).
\begin{lemma}\label{le:YimYi}
For \(m\le0\le t\),
we have\begin{displaymath}
E(||\beta_{m:t}-\beta_{t}||_{H}^{2})\leq \frac{\sigma^{2}d}{( 2 -\gamma R^{2})^{2}(t+1)}.
\end{displaymath}\end{lemma}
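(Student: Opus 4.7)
The plan is to exploit the fact that for $t \ge 0$, both $\beta_{m:t}$ and $\beta_t = \beta_{0:t}$ satisfy the \emph{same} stochastic recursion driven by the noise $(x_j,y_j)$, $j \ge 0$, so their difference obeys a deterministic (matrix-product) forward map that kills the noise. The only remaining randomness splits cleanly into a ``past'' piece (the initial gap $\beta_{m:0}-\theta^{*}$) and a ``future'' piece (the product $M_t = P_{t-1}\cdots P_0$), which are independent. Each piece is then controlled by a lemma already proved in the appendix.

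\textbf{Step 1: Forward factorization.} Using \eqref{eq:defBeta} applied to both $\beta_{m:t}$ and $\beta_t$, together with $\beta_0=\theta^{*}$, I would show by induction on $t\ge0$ that
\begin{equation*}
\beta_{m:t}-\beta_{t} = M_t\,(\beta_{m:0}-\theta^{*}), \qquad M_t := P_{t-1}P_{t-2}\cdots P_{0},\ M_0:=I.
\end{equation*}
The inductive step is immediate because $\gamma y_t x_t$ is common to both recursions and therefore cancels in the difference.

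\textbf{Step 2: Independence and reduction to two traces.} Since $M_t$ depends only on $(x_j,y_j)$ for $0\le j\le t-1$, whereas $U := \beta_{m:0}-\theta^{*}$ depends only on $(x_j,y_j)$ for $m\le j\le -1$, the random matrix $M_t$ and the random vector $U$ are independent. Conditioning on $U$ gives
\begin{equation*}
E(\|\beta_{m:t}-\beta_{t}\|_{H}^{2}) \;=\; E\bigl(U^{T} E(M_t^{T}HM_t)\,U\bigr) \;=\; \tr\!\bigl(E(M_t^{T}HM_t)\,E(UU^{T})\bigr).
\end{equation*}

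\textbf{Step 3: Bound each factor.} For the ``past'' factor, $U=\beta_{m:0}-\theta^{*}$ has the same distribution as $\beta_{-m}-\theta^{*}$ (by the shift-invariance noted before \eqref{eq:defFkl}), so \eqref{eq:upperBoundMatrixYk} yields $E(UU^{T}) \preccurlyeq \frac{\gamma\sigma^{2}}{2-\gamma R^{2}}\,I$. For the ``future'' factor, Lemma~\ref{le:E(P0^2)} supplies $E(P_0^{2})\preccurlyeq I-A$ with $A=\gamma(2-\gamma R^{2})H$, so Lemma~\ref{le:matrixExpectationAndTrace} gives $E(\tr(M_t^{T}AM_t))\le d/(t+1)$, i.e.
\begin{equation*}
\tr\!\bigl(E(M_t^{T}HM_t)\bigr) \;\le\; \frac{d}{\gamma(2-\gamma R^{2})(t+1)}.
\end{equation*}

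\textbf{Step 4: Combine.} Since $E(M_t^{T}HM_t)\succcurlyeq 0$, the inequality $E(UU^{T})\preccurlyeq \zeta I$ with $\zeta=\frac{\gamma\sigma^{2}}{2-\gamma R^{2}}$ yields $\tr(E(M_t^{T}HM_t)E(UU^{T}))\le \zeta\,\tr(E(M_t^{T}HM_t))$, and multiplying the two bounds gives exactly
\begin{equation*}
E(\|\beta_{m:t}-\beta_{t}\|_{H}^{2}) \;\le\; \frac{\gamma\sigma^{2}}{2-\gamma R^{2}}\cdot\frac{d}{\gamma(2-\gamma R^{2})(t+1)} \;=\; \frac{\sigma^{2}d}{(2-\gamma R^{2})^{2}(t+1)},
\end{equation*}
as required. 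The one conceptual hurdle is Step 1, i.e.\ recognizing the clean ``coupled'' forward factorization that allows the independence splitting; once that is in hand, the rest is a bookkeeping combination of \eqref{eq:upperBoundMatrixYk} and Lemma~\ref{le:matrixExpectationAndTrace}.
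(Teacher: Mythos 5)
Your proposal is correct and follows essentially the same route as the paper's proof: the coupled factorization \(\beta_{m:t}-\beta_{t}=M_{t}(\beta_{m:0}-\theta^{*})\), the independence of \(M_{t}\) and \(\beta_{m:0}\), the bound \(E((\beta_{m:0}-\theta^{*})(\beta_{m:0}-\theta^{*})^{T})\preccurlyeq \frac{\gamma\sigma^{2}}{2-\gamma R^{2}}I\) via \(\beta_{m:0}\sim\beta_{-m}\) and \eqref{eq:upperBoundMatrixYk}, and the trace bound \eqref{eq:UpperBoundTrace} with \(A=\gamma(2-\gamma R^{2})H\). The only cosmetic difference is that you separate the two factors as \(\tr(BC)\le\zeta\tr(B)\) whereas the paper substitutes the covariance bound directly inside the trace; the arguments are equivalent.
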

\begin{proof}
For \(t\geq 0\), it can be shown by induction on \(t\) that  \(\beta_{m:t}-\beta_{t}=M_{t}(\beta_{m:0}-\theta^{*})\), where \(M_{t}\) is defined as in Lemma~\ref{le:matrixExpectationAndTrace}. Hence 
\begin{eqnarray*} E(||\beta_{m:t}-\beta_{t}||_{H}^{2}) &=&E(||M_{t}(\beta_{m:0}-\theta^{*})||_{H}^{2})\\
&=&E(\tr(\sqrt{H}M_{t}(\beta_{m:0}-\theta^{*})(\beta_{m:0}-\theta^{*})^{T}{M_{t}}^{T}\sqrt{H}))\\
&=&E(\tr(\sqrt{H}M_{t}E((\beta_{m:0}-\theta^{*})(\beta_{m:0}-\theta^{*})^{T}){M_{t}}^{T}\sqrt{H})).
\end{eqnarray*}
The second equation follows from the equality \(||v||_{H}^{2}=\tr(\sqrt{H}vv^{T}\sqrt{H})\), \(v\in\mathbb{R}^{d}\), whereas the last equation follows from the independence of \(M_{t}\) and \(\beta_{m:0}\). As \(\beta_{m:0}\sim\beta_{-m}\), it follows from \eqref{eq:upperBoundMatrixYk} that\begin{equation*}E((\beta_{m:0}-\theta^{*})(\beta_{m:0}-\theta^{*})^{T})\le  \frac{\gamma\sigma^{2}}{2 -\gamma R^{2}}I.\end{equation*}
Consequently, \begin{eqnarray*}
E(||\beta_{m:t}-\beta_{t}||_{H}^{2})
&\le&  \frac{\gamma\sigma^{2}}{2 -\gamma R^{2}}E(\tr(\sqrt{H}M_{t}{M_{t}}^{T}\sqrt{H}))\\
&=&\frac{\sigma^{2}}{( 2 -\gamma R^{2})^{2}}E(\tr({M_{t}}^{T}AM_{t})),\\&\le&  \frac{\sigma^{2}d}{( 2 -\gamma R^{2})^{2}(t+1)},
\end{eqnarray*}
where  the last equation follows from  \eqref{eq:EP0^2} and \eqref{eq:UpperBoundTrace}.
\end{proof}
\begin{lemma}\label{le:XiYiH}
Let \(J\) be a random integer uniformly distributed on \(\{h,\dots,h'\}\), where \(h\le h'\) are non-negative integers. Assume that \(J\) and \((x_{t},y_{t}),t\in \mathbb{Z}\), are independent. Then,\begin{equation*}
E(||\theta_{J}-\beta_{J}||^{2}_{H})\leq\frac{||\theta_{0}-\theta^{*}||^{2}}{\gamma(2 -\gamma R^{2})(h'-h+1)}.
\end{equation*}
\end{lemma}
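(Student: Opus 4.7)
\textbf{Proof plan for Lemma \ref{le:XiYiH}.} The plan is to reduce everything to the matrix inequality \eqref{eq:upperBoundMatrixExpectations} from Lemma~\ref{le:matrixExpectationAndTrace}, which was already set up with exactly this kind of bound in mind. First, I would observe that by subtracting the recursion \eqref{eq:defBeta} from \eqref{eq:deftheta}, the difference satisfies $\theta_{t+1} - \beta_{t+1} = P_t(\theta_t - \beta_t)$, with $\theta_0 - \beta_0 = \theta_0 - \theta^*$. Iterating gives $\theta_t - \beta_t = M_t(\theta_0 - \theta^*)$, where $M_t = P_{t-1}P_{t-2}\cdots P_0$ is as defined in Lemma~\ref{le:matrixExpectationAndTrace}. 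Hence
\begin{equation*}
||\theta_t - \beta_t||^2_H = (\theta_0 - \theta^*)^T M_t^T H M_t (\theta_0 - \theta^*).
\end{equation*}

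Next, I would use the independence of $J$ from $(x_t,y_t)_{t\in\mathbb{Z}}$ to write
\begin{equation*}
E(||\theta_J - \beta_J||^2_H) = \frac{1}{h'-h+1}\sum_{t=h}^{h'} E(||\theta_t - \beta_t||^2_H) = \frac{(\theta_0 - \theta^*)^T \Bigl(\sum_{t=h}^{h'} E(M_t^T H M_t)\Bigr) (\theta_0 - \theta^*)}{h'-h+1}.
\end{equation*}
Recalling from \eqref{eq:defA} that $A = \gamma(2-\gamma R^2) H$, I would then rewrite $H = \gamma^{-1}(2-\gamma R^2)^{-1} A$, so that the inner sum equals $\gamma^{-1}(2-\gamma R^2)^{-1}\sum_{t=h}^{h'} E(M_t^T A M_t)$.

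The key step is invoking \eqref{eq:upperBoundMatrixExpectations}, which applies because \eqref{eq:EP0^2} verifies its hypothesis $E(P_0^2) \preccurlyeq I - A$ with our $A$. Since each $E(M_t^T A M_t)$ is positive semi-definite, I would use monotonicity of partial sums of PSD matrices to write
\begin{equation*}
\sum_{t=h}^{h'} E(M_t^T A M_t) \preccurlyeq \sum_{t=0}^{h'} E(M_t^T A M_t) \preccurlyeq I.
\end{equation*}
Plugging this in yields $\sum_{t=h}^{h'} E(M_t^T H M_t) \preccurlyeq \frac{1}{\gamma(2-\gamma R^2)} I$, and bounding the resulting quadratic form by $\frac{||\theta_0 - \theta^*||^2}{\gamma(2-\gamma R^2)}$ and dividing by $h'-h+1$ gives the claimed inequality.

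I do not foresee a genuine obstacle: the heavy lifting (the coupling $P_t$ of the two recursions and the matrix bound \eqref{eq:upperBoundMatrixExpectations}) has already been done in earlier lemmas. The only mildly delicate point is making sure the sum can be extended from $\{h,\ldots,h'\}$ to $\{0,\ldots,h'\}$ without losing the bound, which is immediate because $A \succcurlyeq 0$ makes every added term PSD.
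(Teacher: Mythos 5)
Your proposal is correct and follows essentially the same route as the paper: both reduce to the identity \(\theta_t-\beta_t=M_t(\theta_0-\theta^*)\), average over \(J\), rescale \(H\) into \(A=\gamma(2-\gamma R^2)H\), and invoke \eqref{eq:upperBoundMatrixExpectations} together with positive semi-definiteness of each summand to bound \(\sum_{t=h}^{h'}E(M_t^TAM_t)\preccurlyeq I\). No gaps.
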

\begin{proof}
It can be shown by induction on \(t\) that
\(\theta_{t}-\beta_{t}=M_{t}(\theta_{0}-\theta^{*})\) for \(t\geq0\). Thus, \begin{displaymath}
E(||\theta_{J}-\beta_{J}||^{2}_{H})=(\theta_{0}-\theta^{*})^{T}E({M_{J}}^{T}HM_{J})(\theta_{0}-\theta^{*}).
\end{displaymath}
On the other hand, \begin{eqnarray*}E({M_{J}}^{T}HM_{J})&=&\frac{1}{ \gamma(2 -\gamma R^{2})}E({M_{J}}^{T}AM_{J})\\&=&\frac{1}{\gamma(2 -\gamma R^{2})(h'-h+1)}\sum^{h'}_{j=h}E({M_{j}}^{T}AM_{j})\\&\le&\frac{1}{\gamma(2 -\gamma R^{2})(h'-h+1)}I,\end{eqnarray*}
where the last equation follows from \eqref{eq:upperBoundMatrixExpectations}.
This achieves the proof.\end{proof}
The proof of Proposition \ref{pr:simBeta} below follows immediately from that of \cite[Proposition B.1]{kahale2023}, and is omitted.  \begin{proposition}[\cite{kahale2023}]\label{pr:simBeta}For \(n,m,m'\in\mathbb{Z}\) with \(m, m'\le n\), we have \begin{equation}\label{eq:SimInd}
(\beta_{m:n},\beta_{m':n})\sim(\beta_{n-m},\beta_{m'-m:n-m}),
\end{equation}
and\begin{equation}\label{eq:SimIndTheta}
(\theta_{m:n},\theta_{m':n})\sim(\theta_{n-m},\theta_{m'-m:n-m}),
\end{equation}
\end{proposition}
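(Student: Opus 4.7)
The plan is to establish both equalities in distribution through a time-shift invariance argument, exploiting the fact that \(((x_t,y_t), t\in\mathbb{Z})\) is an i.i.d.\ sequence. The key observation is that shifting the indices by any integer \(m\) yields a process with the same joint law as the original, so any quantity built deterministically from the \((x_t,y_t)\) by a shift-invariant recipe has a distribution depending only on how it references relative positions within the sequence.

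First, I would unroll the recursion \eqref{eq:defBeta} starting from \(\beta_{m:m}=\theta^{*}\) to express \(\beta_{m:n}\) as \(g_{n-m}(x_m,y_m,\dots,x_{n-1},y_{n-1};\theta^{*})\) for a deterministic map \(g_{n-m}\) that only depends on the length \(n-m\) of the input block and the common recursion. Similarly, \(\beta_{m':n}\) equals \(g_{n-m'}(x_{m'},y_{m'},\dots,x_{n-1},y_{n-1};\theta^{*})\). Crucially, both maps are evaluated on (possibly overlapping) windows drawn from the same underlying i.i.d.\ sequence, so the pair \((\beta_{m:n},\beta_{m':n})\) is a deterministic function of the block \(((x_t,y_t), m\wedge m'\le t\le n-1)\).

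Next, I would introduce the shifted sequence \((x'_t,y'_t):=(x_{t+m},y_{t+m})\). Because the \((x_t,y_t)\) are i.i.d., the joint distributions of the two blocks \(((x_t,y_t), m\le t\le n-1)\) and \(((x'_t,y'_t), 0\le t\le n-m-1)\) coincide, and more generally the pair of windows used above maps to the windows \(\{0,\dots,n-m-1\}\) and \(\{m'-m,\dots,n-m-1\}\) under the shift. Applying the same deterministic maps \(g_{n-m}\) and \(g_{n-m'}\) to the shifted variables turns \(\beta_{m:n}\) into \(g_{n-m}(x'_0,y'_0,\dots;\theta^{*})=\beta_{0:n-m}=\beta_{n-m}\) and turns \(\beta_{m':n}\) into \(\beta_{m'-m:n-m}\). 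The shift-invariance of the joint law therefore gives \eqref{eq:SimInd}. The second equality \eqref{eq:SimIndTheta} is obtained by exactly the same argument, replacing the common initial value \(\theta^{*}\) with the deterministic \(\theta_{0}\) in the recursions \eqref{eq:recXim}.

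The main obstacle, such as it is, is purely bookkeeping: keeping track of how the two windows are jointly mapped under the shift, and verifying the boundary identification \(\beta_{0:n-m}=\beta_{n-m}\) (and analogously for \(\theta\)). No mixing, stationarity of the underlying Markov chain, or convergence argument is needed; all the content comes from the i.i.d.\ structure and the fact that \(\beta_{m:n}\) and \(\theta_{m:n}\) are built by a recursion that depends on the driving noise only through its time-ordered window.
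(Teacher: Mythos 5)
Your argument is correct. The paper itself omits the proof of Proposition \ref{pr:simBeta}, deferring to \cite[Proposition B.1]{kahale2023}, and the proof there is precisely the shift-invariance argument you give: unroll the recursions to write both coordinates as a fixed deterministic functional of a window of the i.i.d.\ driving sequence, shift indices by \(m\), and use the identifications \(\beta_{0:t}=\beta_{t}\) and \(\theta_{0:t}=\theta_{t}\). No gaps.
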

We now prove Lemma~\ref{le:upperBoundDiffY}. 
For \(k\geq2\) and \(l\geq0\), we have
\begin{displaymath}
f_{k,l+1}-f_{k,l}=U+V +W,
\end{displaymath}
where \(U=\theta_{-\tau(k,l+1):0}-\beta_{-\tau(k,l+1):0}\), \(V=\beta_{-\tau(k,l+1):0} - \beta_{-\tau(k,l):0}\) and \(W=\beta_{-\tau(k,l):0}-\theta_{-\tau(k,l):0}\). Using the inequality \(||U+V+W||_{H}^{2}\le3(||U||_{H}^2+||V||_{H}^2+||W||_{H}^{2})\),  it follows that
\begin{equation}\label{eq:Xitaul+1}
||f_{k,l+1}-f_{k,l}||_{H}^{2}\le3(||U||_{H}^2+||V||_{H}^2+||W||_{H}^{2}). 
\end{equation}
Because \(\tau(k,l)\) and \(\tau(k,l+1)\) are independent of \((x_{t},y_{t})\), \(t\in\mathbb{Z}\),   \eqref{eq:SimInd} still holds if  \(m\) is replaced by \(-\tau( k,l)\) and \(m'\) by \(-\tau(k,l+1)\), with \(n=0\). Consequently, \begin{displaymath}
( \beta_{-\tau(k,l):0},\beta_{-\tau( k,l+1):0} )\sim(\beta_{\tau(k,l)},\beta_{\tau(k,l)-\tau( k,l+1):\tau(k,l)}  ),
\end{displaymath}  \(\)which implies that\begin{eqnarray}\label{eq:Yitaul+1}
E(||V||_{H}^{2})&=&E(||\beta_{\tau(k,l)-\tau(k,l+1):\tau(k,l)} - \beta_{\tau(k,l)}||_{H}^{2})\nonumber\\
&\le& \frac{\sigma^{2}d}{( 2 -\gamma R^{2})^{2}\max(s(k),k(2^{l}-1))},
\end{eqnarray}
where the second equation follows from Lemma~\ref{le:YimYi} and the inequalities \(\tau(k,l+1)\ge\tau(k,l)\ge \max(s(k),k(2^{l}-1))\).
In the same vein,     
it can be shown that
\((\theta_{-\tau(k,0):0},\beta_{-\tau(k,0):0})\sim(\theta_{\tau(k,0)},\beta_{\tau(k,0)})\)  which, by applying Lemma~\ref{le:XiYiH} with \(J=\tau(k,0)\) implies that
\begin{equation}\label{eq:Xitau0Bound}
E(||\theta_{-\tau(k,0):0}-\beta_{-\tau(k,0):0}||^{2}_{H})\leq\frac{||\theta_{0}-\theta^{*}||^{2}}{\gamma(2 -\gamma R^{2})(k-s(k))}.
\end{equation}
Similarly, for \(l\geq1\),
we have\begin{equation}\label{eq:XitaulBound}
E(||\theta_{-\tau(k,l):0}-\beta_{-\tau(k,l):0}||^{2}_{H})\leq\frac{||\theta_{0}-\theta^{*}||^{2}}{\gamma(2 -\gamma R^{2})k2^{l}}.
\end{equation}
Combining \eqref{eq:Xitaul+1}, \eqref{eq:Yitaul+1}, \eqref{eq:Xitau0Bound} and \eqref{eq:XitaulBound} shows after some calculations that  \(E(||f_{k,1}-f_{k,0}||_{H}^{2})\leq4 c/s(k)\). Similarly, combining \eqref{eq:Xitaul+1}, \eqref{eq:Yitaul+1} and \eqref{eq:XitaulBound} implies that \(E(||f_{k,l+1}-f_{k,l}||_{H}^{2})\leq 6c/(k2^{l})\) for \(l\geq1\).

\section{Proof of Lemma~\ref{le:upperBoundDiffYExp}}
\begin{lemma}\label{le:distanceFinitei}
We have \(\xi\leq1\) and, for \(m\le0\le t\),\begin{equation}\label{eq:distanceFinitei}
E(||\theta_{m:t}-\theta_{t}||^{2})\le  \frac{6c}{ R^{2}}(1- \xi)^{t}.
\end{equation}  
\end{lemma}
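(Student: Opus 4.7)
The plan is to first handle the easy bound $\xi \le 1$, then set up a product decomposition for $\theta_{m:t}-\theta_t$, establish a geometric contraction for the relevant moment matrix, and finally control the initial separation using bounds already proved.

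For $\xi \le 1$, I would argue as follows. By \eqref{eq:EP0^2}, $E(P_0^2) \preccurlyeq I - A$, and since $E(P_0^2) \succcurlyeq 0$, this forces $A \preccurlyeq I$. As $A = \gamma(2-\gamma R^2)H$, the largest eigenvalue of $A$ equals $\gamma(2-\gamma R^2)\lambda_{\max}$, so $\gamma(2-\gamma R^2)\lambda_{\max} \le 1$, and a fortiori $\xi = \gamma(2-\gamma R^2)\lambda_{\min} \le 1$. In particular $\xi I \preccurlyeq A$, so \eqref{eq:EP0^2} implies the scalar bound $E(P_0^2) \preccurlyeq (1-\xi)I$.

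For \eqref{eq:distanceFinitei}, iterating the recursion \eqref{eq:recXim} for both chains and subtracting gives $\theta_{m:t}-\theta_t = M_t(\theta_{m:0}-\theta_0)$, where $M_t=P_{t-1}\cdots P_0$ is defined as in Lemma~\ref{le:matrixExpectationAndTrace}. Since the $(x_s,y_s)$ for $s<0$ and for $s\ge 0$ are independent, $\theta_{m:0}-\theta_0$ is independent of $M_t$. Therefore
\begin{equation*}
E(\|\theta_{m:t}-\theta_t\|^2) \;=\; \tr\bigl(E(M_t^T M_t)\,E((\theta_{m:0}-\theta_0)(\theta_{m:0}-\theta_0)^T)\bigr).
\end{equation*}
I would then show by induction on $t$ that $E(M_t^T M_t) \preccurlyeq (1-\xi)^t I$. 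The base case $t=0$ is trivial. For the inductive step, writing $M_{t+1}=P_0 \cdot \widetilde M_t$ where $\widetilde M_t = P_t \cdots P_1$ is independent of $P_0$ and has the same law as $M_t$, conditioning on $P_0$ and using the inductive hypothesis yields $E(\widetilde M_t^T \widetilde M_t) \preccurlyeq (1-\xi)^t I$, so $E(M_{t+1}^T M_{t+1}) \preccurlyeq (1-\xi)^t E(P_0^2) \preccurlyeq (1-\xi)^{t+1} I$.

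It remains to bound $E(\|\theta_{m:0}-\theta_0\|^2)$. Since $\theta_{m:0} \sim \theta_{-m}$ (as noted after \eqref{eq:recXim}), I would decompose $\theta_{-m}-\theta_0 = (\theta_{-m}-\beta_{-m}) + (\beta_{-m}-\theta^*) + (\theta^*-\theta_0)$, apply the elementary inequality $\|a+b+c\|^2 \le 3(\|a\|^2+\|b\|^2+\|c\|^2)$, and invoke \eqref{eq:diffXkVk} and \eqref{eq:upperBoundNormYk} to obtain
\begin{equation*}
E(\|\theta_{m:0}-\theta_0\|^2) \;\le\; 3\Bigl(\tfrac{\gamma\sigma^2 d}{2-\gamma R^2} + 2\|\theta_0-\theta^*\|^2\Bigr) \;\le\; 6\gamma(2-\gamma R^2)\,c,
\end{equation*}
where the last step uses $\gamma(2-\gamma R^2)c = \gamma\sigma^2 d/(2-\gamma R^2) + \|\theta_0-\theta^*\|^2$. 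Combining the two estimates,
\begin{equation*}
E(\|\theta_{m:t}-\theta_t\|^2) \;\le\; 6\gamma(2-\gamma R^2)\,c\,(1-\xi)^t.
\end{equation*}
The conclusion follows because the quadratic $\gamma \mapsto \gamma(2-\gamma R^2)$ is maximized on $(0,2/R^2)$ at $\gamma=1/R^2$ with value $1/R^2$, so $\gamma(2-\gamma R^2) \le 1/R^2$.

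The only delicate point I foresee is the inductive contraction $E(M_t^T M_t) \preccurlyeq (1-\xi)^t I$: one must carefully peel off the outermost (or innermost) factor $P_0$ and invoke independence before averaging, since the matrices $M_t$ themselves are not independent across levels. Everything else assembles standard ingredients already available from Lemma~\ref{le:E(P0^2)}.
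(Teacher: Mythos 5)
Your proposal is correct and is essentially the paper's argument in a lightly repackaged form: the paper runs the induction directly on the scalar $E(\|\theta_{m:t}-\theta_t\|^2)$ via $\theta_{m:t+1}-\theta_{t+1}=P_t(\theta_{m:t}-\theta_t)$ and the same contraction $E(P_0^2)\preccurlyeq(1-\xi)I$, while you factor through $M_t$ and prove $E(M_t^TM_t)\preccurlyeq(1-\xi)^tI$ first, but the key ingredients (the bound $\xi I\preccurlyeq A$, the three-term decomposition of $\theta_{-m}-\theta_0$ with \eqref{eq:upperBoundNormYk} and \eqref{eq:diffXkVk}, and the inequality $\gamma(2-\gamma R^2)\le 1/R^2$) are identical. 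The only slip is the factorization order: with $M_{t+1}=P_t\cdots P_0$ the oldest factor is peeled on the right, $M_{t+1}=\widetilde M_tP_0$ rather than $P_0\widetilde M_t$, though your displayed inequality chain is the one corresponding to the correct order and goes through unchanged.
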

\begin{proof}
 We show \eqref{eq:distanceFinitei} by induction on \(t\). Since\begin{equation*}
\theta_{-m}-\theta_{0}=(\theta_{-m}-\beta_{-m})+(\beta_{-m}-\theta^{*})+(\theta^{*}-\theta_{0}),
\end{equation*}and \(||U+V+W||^{2}\le3(||U||^2+||V||^2+||W||^{2})\) for  \(U,V,W\in\mathbb{R}^{d}\),  it follows from \eqref{eq:upperBoundNormYk} and \eqref{eq:diffXkVk} that\begin{equation*}
E(||\theta_{-m}-\theta_{0}||^2)\le3\left(  \frac{\gamma\sigma^{2}d}{2 -\gamma R^{2}}+2||\theta_{0}-\theta^{*}||^{2}\right).
\end{equation*}Using the inequality \(\gamma R^{2} (2 -\gamma R^{2})\le1\), it follows after some calculations that \(E(||\theta_{-m}-\theta_{0}||^{2})\le {6c}/{R^{2}}\). Thus \eqref{eq:distanceFinitei} holds for \(t=0\).  Assume now that \eqref{eq:distanceFinitei} holds for \(t\). Combining the equation \(
 \theta_{m:t+1}=P_{t}\theta_{m:t} +\gamma y_{t}{x_{t}}
\) with \eqref{eq:deftheta}  implies that \(
\theta_{m:t+1}-\theta_{t+1}=P_{t}(\theta_{m:t}-\theta_{t}) 
\). Because \({P_{t}}\) is independent of \(\theta_{m:t}-\theta_{t}\), we conclude that\begin{equation*}E(||\theta_{m:t+1}-\theta_{t+1}||^{2})=E((\theta_{m:t}-\theta_{t})^{T}E({P_{t}}^{2})(\theta_{m:t}-\theta_{t})).
\end{equation*}
As \(\xi I\preccurlyeq  A\), it follows from~\eqref{eq:EP0^2} that \(E({P_{0}}^{2})\preccurlyeq(1-\xi)I
\).
As \({P_{0}}^{2}\) is positive semi-definite, this implies that    \(\xi\leq1\) and\begin{displaymath}
E(||\theta_{m:t+1}-\theta_{t+1}||^{2})\leq(1-\xi)E(||\theta_{m:t}-\theta_{t}||^{2}).
\end{displaymath}
Hence \eqref{eq:distanceFinitei} holds for \(t+1\).
\end{proof}

We now prove Lemma~\ref{le:upperBoundDiffYExp}. By \eqref{eq:HR2I}, we have  $||v||^2_{H}\le R^{2}||v||^{2}$ for \(v\in \mathbb{R}^{d}\).  By Lemma \ref{le:distanceFinitei}, it follows that \(E(||\theta_{m:t}-\theta_{t}||_{H}^{2})\le  6c(1- \xi)^{t}\) for \(m\le0\le t\). On the other hand, for \(0\le t\le j\), using     \eqref{eq:SimIndTheta} shows that  \((\theta_{-j:0} , \theta_{-t:0})\sim(\theta_{t-j:t} , \theta_{t})\). This implies that  \(E(||\theta_{-j:0} - \theta_{-t:0}||_{H}^{2})\le  6c(1- \xi)^{t}\). For \(k\geq2\) and \(l\geq0\), as \(\tau(k,l)\le\tau(k,l+1)\),  it follows that \begin{displaymath}
E(||f_{k,l+1} - f_{k,l}||_{H}^{2}|\tau(k,l),\tau(k,l+1))\le  6c(1- \xi)^{\tau(k,l)}.
\end{displaymath}Taking expectations and observing that \(\tau(k,l)\ge s(k)2^{l}\)  and  \(1-\xi\leq e^{-\xi}\) concludes the proof.  
\section{Proof of Lemma~\ref{le:ZkProperties}}
Fix \(k\geq2\). For \(l\geq0\), set \(Y_{l}=f_{k,l+1}-f_{k,0}\), with \(Y_{-1}=0\).  It follows from the discussion surrounding \eqref{eq:fklSimilarity} that \(Y_{l}\) is square-integrable for \(l\geq0\) and that \(E(Y_{l})\) converges to  \(\theta^{*}-E(\bar\theta_k)\) as \(l\) goes to infinity.  Moreover, we have    \(Y_{l}-Y_{l-1}=f_{k,l+1}-f_{k,l}\) for \(l\geq0\) and \(Z_{k}=(Y_{N}-Y_{N-1})/p_{N}\). We show below  that \((Y_{l},l\geq0)\) satisfies the conditions of Corollary \ref{cor:Glynn}.
\begin{lemma}\label{le:l0ToInfinity}Let \(l_{0}\) be a non-negative integer such that \begin{equation}\label{eq:k2^l0}
\frac{4\delta+8}{\alpha\xi}\ln\left(\frac{4\delta+8}{\alpha\xi}\right)\le k2^{l_{0}}.
\end{equation}Then\begin{equation}\label{eq:suml0Infty}
\sum^{\infty}_{l=l_{0}}\frac{E(||Y_{l}-Y_{l-1}||_{H}^{2})}{p_{l}}\leq \frac{24c}{k^{\delta+2}}\exp(-\alpha\xi   k/2).
\end{equation}
 \end{lemma}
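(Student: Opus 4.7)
The plan is to combine the exponential decay estimate from Lemma~\ref{le:upperBoundDiffYExp} with the probability lower bound from \eqref{eq:plbounds}, then control the resulting doubly-exponentially-decaying tail series using hypothesis~\eqref{eq:k2^l0}. Since $Y_l-Y_{l-1}=f_{k,l+1}-f_{k,l}$ and $s(k)=\alpha k$, Lemma~\ref{le:upperBoundDiffYExp} gives $E(\|Y_l-Y_{l-1}\|_H^2)\le 6c\exp(-\alpha\xi k 2^l)$, while \eqref{eq:plbounds} gives $p_l^{-1}\le 2^{l+1}(l+1)^{\delta}$. Substituting, the claim reduces to showing
\[
\sum_{l=l_0}^\infty 2^l(l+1)^\delta \exp(-\alpha\xi k 2^l) \le 2 k^{-(\delta+2)}\exp(-\alpha\xi k/2).
\]

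I would first observe that this series is essentially geometric. Setting $a_l:=2^l(l+1)^\delta\exp(-\alpha\xi k 2^l)$, the ratio $a_{l+1}/a_l=2((l+2)/(l+1))^\delta\exp(-\alpha\xi k 2^l)$ is at most $1/2$ for every $l\ge l_0$, because \eqref{eq:k2^l0} forces $\alpha\xi k 2^l\ge(4\delta+8)\ln((4\delta+8)/(\alpha\xi))$, which comfortably exceeds $(\delta+2)\ln 2$ given $\alpha\le 1/2$ and $\xi\le 1$ (so the argument of the outer log is at least $16$). Consequently the tail sum is bounded by $2a_{l_0}=2\cdot 2^{l_0}(l_0+1)^\delta\exp(-\alpha\xi k 2^{l_0})$, reducing the task to the single-term inequality
\[
2^{l_0}(l_0+1)^\delta\exp(-\alpha\xi k 2^{l_0}) \le k^{-(\delta+2)}\exp(-\alpha\xi k/2).
\]

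Verifying this last inequality is the main obstacle. Taking logarithms, it becomes $\alpha\xi k(2^{l_0}-\tfrac{1}{2})\ge l_0\ln 2+\delta\ln(l_0+1)+(\delta+2)\ln k$. Using $2^{l_0}-1/2\ge 2^{l_0-1}$, the left side is at least $\tfrac{1}{2}(4\delta+8)\ln C=(2\delta+4)\ln C$ where $C:=(4\delta+8)/(\alpha\xi)$, by \eqref{eq:k2^l0}. I would then invoke \eqref{eq:k2^l0} a second time in the form $k2^{l_0}\ge C\ln C$ to bound $\ln k+l_0\ln 2\le \ln(k2^{l_0})\le \ln C+\ln\ln C$, so that $(\delta+2)\ln k+\delta\ln(l_0+1)$ fits inside the $(2\delta+4)\ln C$ available on the left. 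The specific constant $4\delta+8$ in \eqref{eq:k2^l0} appears calibrated precisely so that this accounting closes; carrying out the bookkeeping cleanly, without hidden slack, is where the technical care is required.
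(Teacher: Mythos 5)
Your reduction of the claim to the series $\sum_{l\ge l_0}2^l(l+1)^\delta\exp(-\alpha\xi k2^l)\le 2k^{-(\delta+2)}\exp(-\alpha\xi k/2)$, the ratio-test argument showing $a_{l+1}/a_l\le 1/2$ for $l\ge l_0$, and the resulting bound $\sum_{l\ge l_0}a_l\le 2a_{l_0}$ are all correct; this ``tail is at most twice its first term'' route is a mild variant of the paper, which instead absorbs $k^{\delta+2}2^{l(\delta+2)}\le\exp(\alpha\xi k2^l/2)$ into each summand separately and then uses $\sum_l 2^{-l}\le2$. The genuine gap is in your verification of the remaining single-term inequality. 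You invoke \eqref{eq:k2^l0} ``in the form $k2^{l_0}\ge C\ln C$'' to conclude $\ln k+l_0\ln 2\le\ln C+\ln\ln C$; but $k2^{l_0}\ge C\ln C$ gives $\ln(k2^{l_0})\ge\ln C+\ln\ln C$, the reverse of what you wrote. The hypothesis is a lower bound on $k2^{l_0}$ and can never yield an upper bound, since $k$ and $l_0$ may be arbitrarily large. Relatedly, replacing the left-hand side $\alpha\xi k(2^{l_0}-\tfrac{1}{2})$ by the fixed constant $(2\delta+4)\ln C$ discards all growth in $k2^{l_0}$, while the right-hand side $l_0\ln 2+\delta\ln(l_0+1)+(\delta+2)\ln k$ is unbounded in $k$ and $l_0$; so this accounting cannot close however the constant $4\delta+8$ is calibrated.

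The step is repairable, but only by retaining the dependence on $u:=k2^{l_0}$. Rewrite the target as $u(l_0+1)^\delta k^{\delta+1}\le e^{\alpha\xi(u-k/2)}$; since $l_0+1\le 2^{l_0}=u/k$ and $k\le u$, the left side is at most $u^{\delta+2}$, and since $u\ge k$ the exponent satisfies $u-k/2\ge u/2$. It therefore suffices to show $u^{\delta+2}\le e^{\alpha\xi u/2}$ for $u\ge C\ln C$, equivalently $u/\ln u\ge C/2$, which follows from the monotonicity of $u/\ln u$ on $[e,\infty)$ together with $\ln(C\ln C)\le 2\ln C$. This is precisely the key inequality in the paper's proof, so the correct completion of your argument leads back to the paper's central computation rather than to the bookkeeping you sketched.
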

 \begin{proof}
 Set \(\phi=(4\delta+8)/(\alpha\xi)\). Thus the left-hand side of \eqref{eq:k2^l0} is equal to \(\phi\ln(\phi)\). As  \(\alpha\le1/2\), and \(\delta>1\), and  \(\xi\leq1\) by Lemma~\ref{le:upperBoundDiffYExp}, we have \(\phi\ge 24\). Because \(u/\ln(u)\) is an increasing function of \(u\) on \([e,\infty)\), for \(u\geq \phi\ln(\phi)\), we have\begin{eqnarray*}
\frac{u}{\ln(u)}&\ge&\frac{\phi\ln(\phi)}{\ln(\phi\ln(\phi))}\\
&\ge&\frac{\phi}{2},
\end{eqnarray*}where the second equation follows from the inequality \(\ln(\phi)\le \phi\). After some calculations, it follows that \(u^{\delta+2}\leq e^{\alpha\xi  u/2}\) for \(u\geq \phi\ln(\phi)\). On the other hand, the condition \eqref {eq:k2^l0} shows that   \( k2^{l} \geq \phi\ln(\phi)\) for \(l\geq l_{0}\). Applying the preceding inequality with  \(u= k2^{l}\) shows that \begin{equation}\label{eq:kdeltap2}
k^{\delta+2}2^{l(\delta+2)}\leq \exp(\alpha \xi k2^{l}/2).
\end{equation} 
 \(\)Moreover, as \(l+1\leq2^{l}\), \eqref{eq:plbounds} implies that \begin{equation*}
\frac{1}{p_{l}}\leq2^{l(\delta+1)+1}.
\end{equation*}Using Lemma \ref{le:upperBoundDiffYExp},   it follows that 
\begin{eqnarray*}
\sum^{\infty}_{l=l_{0}}\frac{E(||Y_{l}-Y_{l-1}||_{H}^{2})}{p_{l}}
&\leq&12c\sum^{\infty}_{l=l_{0}}2^{l(\delta+1)}\exp(-\alpha\xi  k2^{l})\\
&\leq&\frac{12c}{k^{\delta+2}}\sum^{\infty}_{l=l_{0}}2^{-l}\exp(-\alpha\xi  k2^{l}/2)\\
&\leq&\frac{24c}{k^{\delta+2}}\exp(-\alpha\xi   k/2),\end{eqnarray*}where the second equation follows from \eqref{eq:kdeltap2}, and the  last one from the equality \(\sum^{\infty}_{l=0}2^{-l}=2\). \end{proof} 
We now prove Lemma~\ref{le:ZkProperties}.  
We first show \eqref{eq:ZNSecondMomentGeneralCase}. Define \(\phi\) as in the proof of Lemma~\ref{le:l0ToInfinity}, and let \(l_{0}\) be the smallest integer such that \(\phi^{2}\leq2^{l_{0}}\). As \(\phi\geq 24\), we have \(l_{0}>0\).   By Lemma~\ref{le:upperBoundDiffY} and  \eqref{eq:plbounds}, \begin{eqnarray*}\sum^{l_{0}-1}_{l=0}\frac{E(||Y_{l}-Y_{l-1}||_{H}^{2})}{p_{l}}
 &\leq&\frac{8c}{\alpha   k}+\frac{12c}{k}\sum^{l_{0}-1}_{l=1}(l+1)^{\delta}\\&\le&\frac{8c}{\alpha   k}+\frac{12c}{k(\delta+1)}(l_{0}+1)^{\delta+1},
\end{eqnarray*}
 where the second equation follows from the inequality \((\delta+1)l^{\delta}\leq(l+1)^{\delta+1}-l^{\delta}\)  for \(l\geq1\). Because \(l_{0}\) satisfies \eqref{eq:k2^l0}, Lemma~\ref{le:l0ToInfinity} shows that \eqref{eq:suml0Infty} is valid for \(l_{0}\). It follows that
\begin{eqnarray*}\sum^{\infty}_{l=0}\frac{E(||Y_{l}-Y_{l-1}||_{H}^{2})}{p_{l}}
 &\le&\frac{8c}{\alpha   k}+\frac{12c}{k(\delta+1)}(l_{0}+1)^{\delta+1}+\frac{24c}{k^{\delta+2}} \\&\le&\frac{11c}{\alpha   k}+\frac{6c}{k}(l_{0}+1)^{\delta+1},
\end{eqnarray*}
where the second equation follows from the inequalities \(\delta>1\), \(k\geq2\) and \(\alpha\leq1/2\).  On the other hand, \(2^{l_{0}-1}<\phi^{2}\) by the definition of \(l_{0}\). Hence \(l_{0}-1\le2\ln(\phi)/\ln(2)\). As \(2/\ln(2)\leq3\), it follows that \(l_{0}\le3\ln(\phi)+1\). Hence \begin{eqnarray*}
\sum^{\infty}_{l=0}\frac{E(||Y_{l}-Y_{l-1}||_{H}^{2})}{p_{l}}&\leq&\frac{11c}{\alpha   k}+\frac{6c}{k}(3\ln(\phi)+2)^{\delta+1}\\
  &\leq&\frac{11c}{\alpha   k}+\frac{6c}{k}(4\ln(\phi))^{\delta+1},
\end{eqnarray*}where the second equation follows from the inequalities \(2\leq\ln(24)\) and \(24\leq \phi\). Thus  the conditions of Corollary \ref{cor:Glynn} are satisfied for \((Y_{l},l\geq0)\). This implies that 
 \(Z_{k}\) is square-integrable and that \eqref{eq:ZNSecondMomentGeneralCase} holds. Furthermore, \(E(Z_{k})=  \theta^{*}-E(\bar\theta_k) \) since \(E(Y_{l})\rightarrow  \theta^{*}-E(\bar\theta_k) \) as \(l\) goes to infinity.

Assume now that \(k\geq \phi\ln(\phi)\). Then  \(l_{0}=0\) satisfies \eqref{eq:k2^l0}, and Lemma~\ref{le:l0ToInfinity} implies that \begin{equation*}
\sum^{\infty}_{l=0}\frac{E(||Y_{l}-Y_{l-1}||_{H}^{2})}{p_{l}}\leq \frac{24c}{k^{\delta+2}}\exp(-\alpha\xi   k/2).
\end{equation*}
Using Corollary~\ref{cor:Glynn}  implies \eqref{eq:kZkexponentialDec}.
\section{Proof of Theorem~\ref{th:BoundhHatfk}}
We first prove the upper bound on \(T_{k}\).
Given \(l\geq0\), it follows from \eqref{eq:recXim} and \eqref{eq:defFkl} that  \(f_{k,l+1}-f_{k,l}\) can be sampled by simulating \((x_{t},y_{t})\), \(-\tau(k,l+1)\le t\le -1\).   This takes \(\tau(k,l+1)\) time. As \(\tau(k,l)\le k2^{l+1}\) for \(l\geq 0\), given \(N\), the time to simulate \(Z_{k}\) is at most \(4k2^{N}\). Hence, for \(k\ge2\), we have\begin{eqnarray*}
T_{k}&\le&4k\sum^{\infty}_{l=0}2^{l}p_{l}\\
&\le&4k\sum^{\infty}_{l=0}\frac{1}{(l+1)^{\delta}}\\&\le&\frac{4\delta k}{\delta-1},
\end{eqnarray*}
where the second equation follows from \eqref{eq:plbounds} and the last one from the inequality \((l+1)^{-\delta}\leq\int^{l+1}_{l}u^{-\delta}\,du\) for  \(l\geq1\).
As the time to simulate \(\bar\theta_{k}\) is at most \(k\), the bound on \(\hat T_{k}\)  follows immediately from the definition of \(\hat f_{k}\).

Since \(Q\) and \(Z'_{k}\) are independent, we have \(E(QZ'_{k})=qE(Z'_{k})\). Using Lemma~\ref{le:ZkProperties},   
this implies that  \(E(\hat f_{k})=\theta^{*}\).  On the other hand, for \(k\ge2\), in view of the independence of \(\bar\theta_{k}\) and of \(QZ'_{k}\), we have \begin{displaymath}
\var_H (\hat f_{k})=\var_H (\bar\theta_{k})+\var_H (q^{-1}QZ'_{k}).
\end{displaymath}  By \eqref{eq:UpperBoundVarHBarTheta} and the definition of \(c\), we have \begin{eqnarray*}
\var_H (\bar\theta_{k})&\le&\frac{16c}{k-s(k)}\\&\le&\frac{16c}{\alpha k},
\end{eqnarray*}where the second equation follows from the inequality \(\alpha \leq 1/2\). Moreover,\begin{eqnarray*}\var_H (q^{-1}QZ'_{k})&\le&E(||q^{-1}QZ'_{k}||_{H}^{2})\\
&=&q^{-1}E(||Z_{k}||^{2}_{H})\\
&\le&\frac{1}{qk}\left(\frac{11c}{\alpha}+6c\left(4\ln\left(\frac{4\delta+8}{\alpha\xi}\right)\right)^{\delta+1}\right),
\end{eqnarray*}where the last equation follows from  \eqref{eq:ZNSecondMomentGeneralCase}.  Consequently, \begin{displaymath}
\var_H (\hat f_{k})\le\frac{1}{qk}\left(\frac{27c}{\alpha}+6c\left(4\ln\left(\frac{4\delta+8}{\alpha\xi}\right)\right)^{\delta+1}\right).
\end{displaymath}
Using \eqref{eq:BiasVarianceOptGap} and the equality \(E(\hat f_{k})=\theta^{*}\)  implies  \eqref{eq:optGapRMLMC}.
\section{Proof of Proposition \ref{pr:efficientRunningTime}}
We show by induction on \(t\) that, for \(m\leq t\leq0\), we have\begin{equation}\label{eq:recCalc}
\eta_{t}={\frac{1}{\min (m',t)+1-m}}\sum^{\min (m',t)}_{h=m}\theta_{h:t}.
\end{equation}
When \(t=m\), the two sides of \eqref{eq:recCalc}  are equal to \(\theta_{0}\), thus  \eqref{eq:recCalc} holds. Assume now that  \eqref{eq:recCalc} holds for \(t\), with  \(m\leq t\leq-1\). We shows that  \eqref{eq:recCalc} holds for \(t+1\) by  distinguishing the cases where  \(m\leq t\leq m'-1\) and where  \(m'\le t\le-1\). 

Suppose first that \(m\leq t\leq m'-1\). By \eqref{eq:etaDef}, \begin{equation}\label{eq:etaIplus1}
\eta_{t+1}=\frac{1}{t+2-m}((t+1-m)(P_{t}\eta _{t}+\gamma y_{t}{x_{t}})+\theta_{0}),
\end{equation}where \(P_{t}\) is defined via \eqref{eq:PkDef}. On the other hand, the induction hypothesis shows that \begin{eqnarray*}P_{t}\eta _{t}+\gamma y_{t}{x_{t}}&=&\frac{1}{t+1-m}P_{t}\left(\sum^{t}_{h=m}\theta_{h:t}\right)+\gamma y_{t}{x_{t}}
\\&=&\frac{1}{t+1-m}\sum^{t}_{h=m}(P_{t}\theta_{h:t}+\gamma y_{t}{x_{t}})\\
&=&\frac{1}{t+1-m}\sum^{t}_{h=m}\theta_{h:t+1},\end{eqnarray*} 
where the last equation follows from \eqref{eq:recXim}. Together with \eqref{eq:etaIplus1}, it follows that
 \begin{eqnarray*}\eta_{t+1}&=&\frac{1}{t+2-m}\left(\left(\sum^{t}_{h=m}\theta_{h:t+1}\right)+\theta_{0}\right)\\&=&\frac{1}{t+2-m}\sum^{t+1}_{h=m}\theta_{h:t+1}.\end{eqnarray*}
 Thus,  \eqref{eq:recCalc} holds \(t+1\). 

Assume now that \(m'\le t\le-1\). Then,   \begin{eqnarray*}\eta_{t+1}&=&P_{t}\eta _{t}+\gamma y_{t}{x_{t}}\\&=&\frac{1}{m'+1-m}P_{t}\left(\sum^{m'}_{h=m}\theta_{h:t}\right)+\gamma y_{t}{x_{t}}
\\&=&\frac{1}{m'+1-m}\sum^{m'}_{h=m}(P_{t}\theta_{h:t}+\gamma y_{t}{x_{t}})\\
&=&\frac{1}{m'+1-m}\sum^{m'}_{h=m}\theta_{h:t+1},\end{eqnarray*}
where the first equation follows from \eqref{eq:etaDef} and the second one from the induction hypothesis.
   Thus,  \eqref{eq:recCalc} holds for \(t+1\) in this case as well. Applying   \eqref{eq:recCalc} with \(t=0\) completes the proof.
\end{document}